\newcommand*{\CVPR}{}
\newcommand*{\CAMREADY}{}
\newtheorem{lemma}{Lemma}
\newtheorem{claim}{Claim}
\def\be{\begin{equation}}
\def\ee{\end{equation}}
\def\beas{\begin{eqnarray*}}
\def\eeas{\end{eqnarray*}}
\def\bea{\begin{eqnarray}}
\def\eea{\end{eqnarray}}
\newcommand{\x}{{\mathbf x}}
\newcommand{\uu}{{\mathbf u}}
\newcommand{\vv}{{\mathbf v}}
\newcommand{\w}{{\mathbf w}}
\newcommand{\e}{{\mathbf e}}
\newcommand{\aaa}{{\mathbf a}}
\newcommand{\bb}{{\mathbf b}}
\newcommand{\oo}{{\mathbf o}}
\newcommand{\1}{{\mathbf 1}}
\newcommand{\0}{{\mathbf 0}}
\newcommand{\A}{{\mathcal A}}
\newcommand{\B}{{\mathcal B}}
\newcommand{\E}{{\mathcal E}}
\newcommand{\F}{{\mathcal F}}
\newcommand{\OO}{{\mathcal O}}
\newcommand{\R}{{\mathbb R}}
\newcommand{\N}{{\mathbb N}}
\newcommand{\alphabf}{{\boldsymbol{\alpha}}}
\newcommand{\epsbf}{{\boldsymbol{\epsilon}}}
\newcommand{\indc}[1]{\mathbbm{1}\left[#1\right]}
\newcommand{\inprod}[2]  {\left\langle{#1},{#2}\right\rangle}
\DeclareMathOperator*{\argmax}{argmax} 
\DeclareMathOperator*{\mean}{mean}
	\newcommand{\simnets}{cohen2014simnets}
	\newcommand{\deepsimnets}{cohen2015deep}
	\newcommand{\expresstensors}{cohen2015expressive}
	\newcommand{\simnets}{anonymous}
	\newcommand{\deepsimnets}{anonymous}
	\newcommand{\expresstensors}{anonymous}
\newcommand{\otimesg}{\otimes_g}
\newcommand{\odotg}{\odot_g}
	\newcommand{\eg}{\emph{e.g.}}
	\newcommand{\ie}{\emph{i.e.}}
	\newcommand{\etc}{\emph{etc.}}
	\newcommand{\vs}{\emph{vs.}}
	\newcommand{\wrt}{w.r.t.}
\begin{document}

\ifdefined\NIPS
	\title{Convolutional Rectifier Networks as Generalized Tensor Decompositions}
	\author{
	Nadav Cohen \\
	The Hebrew University of Jerusalem \\
	\texttt{cohennadav@cs.huji.ac.il} \\
	\And 
	Amnon Shashua \\
	The Hebrew University of Jerusalem \\
	\texttt{shashua@cs.huji.ac.il} \\
	}
	\maketitle
\fi
\ifdefined\CVPR
	\title{Convolutional Rectifier Networks as Generalized Tensor Decompositions}
	\author{
	Nadav Cohen \\
	The Hebrew University of Jerusalem \\	
	\texttt{cohennadav@cs.huji.ac.il} \\
	\and
	Amnon Shashua \\
	The Hebrew University of Jerusalem \\
	\texttt{shashua@cs.huji.ac.il} \\
	}
	\maketitle
\fi
\ifdefined\AISTATS
	\twocolumn[
	\aistatstitle{Convolutional Rectifier Networks as Generalized Tensor Decompositions}
	\ifdefined\CAMREADY
		\aistatsauthor{Nadav Cohen \And Amnon Shashua}
		\aistatsaddress{The Hebrew University of Jerusalem \And The Hebrew University of Jerusalem}
	\else
		\aistatsauthor{Anonymous Author 1 \And Anonymous Author 2}
		\aistatsaddress{Unknown Institution 1 \And Unknown Institution 2}
	\fi
	]	
\fi
\ifdefined\ICML
	\icmltitlerunning{Convolutional Rectifier Networks as Generalized Tensor Decompositions}
	\twocolumn[
	\icmltitle{Convolutional Rectifier Networks as Generalized Tensor Decompositions}
	\icmlauthor{Nadav Cohen}{cohennadav@cs.huji.ac.il}
	\icmladdress{The Hebrew University of Jerusalem}
	\icmlauthor{Amnon Shashua}{shashua@cs.huji.ac.il}
	\icmladdress{The Hebrew University of Jerusalem}
	\icmlkeywords{deep learning, convolutional neural network, tensor decomposition, expressive power}
	\vskip 0.3in
	]
\fi

\begin{abstract}

Convolutional rectifier networks, i.e. convolutional neural networks with rectified linear activation and max or average pooling, are the cornerstone of modern deep learning. 
However, despite their wide use and success, our theoretical understanding of the expressive properties that drive these networks is partial at best.
On the other hand, we have a much firmer grasp of these issues in the world of arithmetic circuits.
Specifically, it is known that convolutional arithmetic circuits possess the property of "complete depth efficiency", meaning that besides a negligible set, all functions that can be implemented by a deep network of polynomial size, require exponential size in order to be implemented (or even approximated) by a shallow network.

In this paper we describe a construction based on generalized tensor decompositions, that transforms convolutional arithmetic circuits into convolutional rectifier networks.
We then use mathematical tools available from the world of arithmetic circuits to prove new results. 
First, we show that convolutional rectifier networks are universal with max pooling but not with average pooling.
Second, and more importantly, we show that depth efficiency is weaker with convolutional rectifier networks than it is with convolutional arithmetic circuits.
This leads us to believe that developing effective methods for training convolutional arithmetic circuits, thereby fulfilling their expressive potential, may give rise to a deep learning architecture that is provably superior to convolutional rectifier networks but has so far been overlooked by practitioners.

\end{abstract}

\section{Introduction} \label{sec:intro}

Deep neural networks are repeatedly proving themselves to be extremely effective machine learning models, providing state of the art accuracies on a wide range of tasks (see~\cite{LeCun:2015dt,Goodfellow-et-al-2016-Book}).
Arguably, the most successful deep learning architecture to date is that of convolutional neural networks (\emph{ConvNets},~\cite{lecun1995convolutional}), which prevails in the field of computer vision, and is recently being harnessed for many other application domains as well (\eg~\cite{shen2014learning,wallach2015atomnet,clark2014teaching}).
Modern ConvNets are formed by stacking layers one after the other, where each layer consists of a linear convolutional operator followed by Rectified Linear Unit (\emph{ReLU}~\cite{nair2010rectified}) activation ($\sigma(z)=\max\{0,z\}$), which in turn is followed by max or average pooling ($P\{c_j\}=\max\{c_j\}$ or $P\{c_j\}=\mean\{c_j\}$ respectively).
Such models, which we refer to as \emph{convolutional rectifier networks}, have driven the resurgence of deep learning~(\cite{Krizhevsky:2012wl}), and represent the cutting edge of the ConvNet architecture~(\cite{Szegedy:2014tb,simonyan2014very}).

Despite their empirical success, and the vast attention they are receiving, our theoretical understanding of convolutional rectifier networks is partial at best.
It is believed that they enjoy \emph{depth efficiency}, \ie that when allowed to go deep, such networks can implement with polynomial size computations that would require super-polynomial size if the networks were shallow.
However, formal arguments that support this are scarce.
It is unclear to what extent convolutional rectifier networks leverage depth efficiency, or more formally, what is the proportion of weight settings that would lead a deep network to implement a computation that cannot be efficiently realized by a shallow network.
We refer to the most optimistic situation, where this takes place for all weight settings but a negligible (zero measure) set, as \emph{complete depth efficiency}.

Compared to convolutional rectifier networks, our theoretical understanding of depth efficiency for arithmetic circuits, and in particular for convolutional arithmetic circuits, is much more developed.
\emph{Arithmetic circuits} (also known as Sum-Product Networks,~\cite{Poon-Domingos2011}) are networks with two types of nodes: sum nodes, which compute a weighted sum of their inputs, and product nodes, computing the product of their inputs.
The depth efficiency of arithmetic circuits has been studied by the theoretical computer science community for the last five decades, long before the resurgence of deep learning.
Although many problems in the area remain open, significant progress has been made over the years, making use of various mathematical tools.
\emph{Convolutional arithmetic circuits} form a specific sub-class of arithmetic circuits.
Namely, these are ConvNets with linear activation ($\sigma(z)=z$) and product pooling ($P\{c_j\}=\prod{c_j}$).
Recently, \cite{\expresstensors}~analyzed convolutional arithmetic circuits through tensor decompositions, essentially proving, for the type of networks considered, that \emph{depth efficiency holds completely}.
Although convolutional arithmetic circuits are known to be equivalent to SimNets~(\cite{\simnets}), a new deep learning architecture that has recently demonstrated promising empirical performance~(\cite{\deepsimnets}), they are fundamentally different from convolutional rectifier networks.
Accordingly, the result established in~\cite{\expresstensors} does not apply to the models most commonly used in practice.

In this paper we present a construction, based on the notion of \emph{generalized tensor decompositions}, that transforms convolutional arithmetic circuits of the type described in~\cite{\expresstensors} into convolutional rectifier networks. 
We then use the available mathematical tools from the world of arithmetic circuits to prove new results concerning the expressive power and depth efficiency of convolutional rectifier networks.
Namely, we show that with ReLU activation, average pooling leads to loss of universality, whereas max pooling is universal but enjoys depth efficiency to a lesser extent than product pooling with linear activation (convolutional arithmetic circuits).
These results indicate that from the point of view of expressive power and depth efficiency, convolutional arithmetic circuits (SimNets) have an advantage over the prevalent convolutional rectifier networks (ConvNets with ReLU activation and max or average pooling).
This leads us to believe that developing effective methods for training convolutional arithmetic circuits, thereby fulfilling their expressive potential, may give rise to a deep learning architecture that is provably superior to convolutional rectifier networks but has so far been overlooked by practitioners.

The remainder of the paper is organized as follows.  
In sec.~\ref{sec:related_work} we review existing works relating to depth efficiency of arithmetic circuits and networks with ReLU activation.
Sec.~\ref{sec:gen_decomp} presents our definition of generalized tensor decompositions, followed by sec.~\ref{sec:nets2tens} which employs this concept to frame convolutional rectifier networks.
In sec.~\ref{sec:analysis} we make use of this framework for an analysis of the expressive power and depth efficiency of such networks.
Finally, sec.~\ref{sec:discussion} concludes.

\section{Related Work} \label{sec:related_work}

The literature on the computational complexity of arithmetic circuits is far too wide to cover here, dating back over five decades.
Although many of the fundamental questions in the field remain open, significant progress has been made over the years, developing and employing a vast share of mathematical tools from branches of geometry, algebra, analysis, combinatorics, and more.
We refer the interested reader to~\cite{shpilka2010arithmetic} for a survey written in 2010, and mention here the more recent works~\cite{bengio2011shallow} and~\cite{martens2014expressive} studying depth efficiency of arithmetic circuits in the context of deep learning (Sum-Product Networks). 
Compared to arithmetic circuits, the literature on depth efficiency of neural networks with ReLU activation is far less developed, primarily since these models were only introduced several years ago~(\cite{nair2010rectified}).
There have been some notable works on this line, but these employ dedicated mathematical machinery, not making use of the plurality of available tools from the world of arithmetic circuits.
\cite{pascanu2013number}~and~\cite{montufar2014number} use combinatorial arguments to characterize the maximal number of linear regions in functions generated by ReLU networks, thereby establishing existence of depth efficiency.
\cite{telgarsky2016benefits}~uses semi-algebraic geometry to analyze the number of oscillations in functions realized by neural networks with semi-algebraic activations, ReLU in particular.
The fundamental result proven in~\cite{telgarsky2016benefits} is the existence, for every $k\in\N$, of functions realizable by networks with $\Theta(k^3)$ layers and $\Theta(1)$ nodes per layer, which cannot be approximated by networks with $\OO(k)$ layers unless these are exponentially large (have $\Omega(2^k)$ nodes).
The work in~\cite{eldan2015power} makes use of Fourier analysis to show existence of functions that are efficiently computable by depth-3 networks, yet require exponential size in order to be approximated by depth-2 networks.
The result applies to various activations, including ReLU.
\cite{poggio2015theory}~also compares the computational abilities of deep \vs shallow networks under different activations that include ReLU.
However, the complexity measure considered in~\cite{poggio2015theory} is the VC dimension, whereas our interest lies in network size.

None of the analyses above account for convolutional networks
\footnote{
By this we mean that in all analyses, the deep networks shown to benefit from depth (\ie to realize functions that require super-polynomial size from shallow networks) are not ConvNets.
},
thus they do not apply to the deep learning architecture most commonly used in practice.
Recently, \cite{\expresstensors}~introduced convolutional arithmetic circuits, which may be viewed as ConvNets with linear activation and product pooling.
These networks were shown to correspond to hierarchical tensor decompositions (see~\cite{Hackbusch-book}).
Tools from linear algebra, functional analysis and measure theory were then employed to prove that the networks are universal, and exhibit \emph{complete depth efficiency}.
Although similar in structure, convolutional arithmetic circuits are inherently different from convolutional rectifier networks (ConvNets with ReLU activation and max or average pooling).
Accordingly, the analysis carried out in~\cite{\expresstensors} does not apply to the networks at the forefront of deep learning.

Closing the gap between the networks analyzed in~\cite{\expresstensors} and convolutional rectifier networks is the topic of this paper.
We achieve this by generalizing tensor decompositions, thereby opening the door to mathematical machinery as used in~\cite{\expresstensors}, harnessing it to analyze, for the first time, the depth efficiency of convolutional rectifier networks.

\section{Generalized Tensor Decompositions} \label{sec:gen_decomp}

We begin by establishing basic tensor-related terminology and notations.
\footnote{
The definitions we give are actually concrete special cases of more abstract algebraic definitions as given in~\cite{Hackbusch-book}.
We limit the discussion to these special cases since they suffice for our needs and are easier to grasp.
}
For our purposes, a \emph{tensor} is simply a multi-dimensional array:
$$\A_{d_1,\ldots,d_N}\in\R\quad ,d_i \in [M_i]$$
The \emph{order} of a tensor is defined to be the number of indexing entries in the array, which are referred to as \emph{modes}.  
The term \emph{dimension} stands for the number of values an index can take in a particular mode.  
For example, the tensor $\A$ above has order $N$ and dimension $M_i$ in mode $i$, $i\in[N]$.  
The space of all possible configurations $\A$ can take is called a \emph{tensor space} and is denoted, quite naturally, by $\R^{M_1{\times\cdots\times}M_N}$.

The fundamental operator in tensor analysis is the \emph{tensor product}, denoted by $\otimes$.
It is an operator that intakes two tensors $\A\in\R^{M_1{\times\cdots\times}M_P}$ and $\B\in\R^{M_{P+1}{\times\cdots\times}M_{P+Q}}$ (orders $P$ and $Q$ respectively), and returns a tensor $\A\otimes\B\in\R^{M_1{\times\cdots\times}M_{P+Q}}$ (order $P+Q$) defined by:
\be
\left(\A\otimes\B\right)_{d_1,\ldots,d_{P+Q}}=\A_{d_1,\ldots,d_P}\cdot\B_{d_{P+1},\ldots,d_{P+Q}}
\label{eq:tensor_prod}
\ee
Notice that in the case $P=Q=1$, the tensor product reduces to the standard outer product between vectors, \ie if $\uu\in\R^{M_1}$ and $\vv\in\R^{M_2}$, then $\uu\otimes\vv$ is no other than the rank-1 matrix $\uu\vv^\top\in\R^{M_1{\times}M_2}$.

\emph{Tensor decompositions} (see~\cite{Kolda-Bader2009} for a survey) may be viewed as schemes for expressing tensors using tensor products and weighted sums.
For example, suppose we have a tensor $\A\in\R^{M_1{\times\cdots\times}M_N}$ given by:
$$\A=\sum_{j_1{\ldots}j_N=1}^{J}c_{j_1{\ldots}j_N}\cdot\aaa^{j_1,1}\otimes\cdots\otimes\aaa^{j_N,N}$$
This expression is known as a Tucker decomposition, parameterized by the coefficients $\{c_{j_1{\ldots}j_N}\in\R\}_{j_1{\ldots}j_N\in[J]}$ and vectors $\{\aaa^{j,i}\in\R^{M_i}\}_{i\in[N],j\in[J]}$. 
It is different from the \emph{CP} (rank-$1$) and \emph{Hierarchical Tucker} decompositions our analysis will rely upon (see sec.~\ref{sec:nets2tens}).
All decompositions however are closely related, specifically in the fact that they are based on iterating between tensor products and weighted sums.

Our construction and analysis are facilitated by generalizing the tensor product, which in turn generalizes tensor decompositions. 
For an associative and commutative binary operator $g$, \ie a function $g:\R\times\R\to\R$ such that $\forall{a,b,c\in\R}:g(g(a,b),c)=g(a,g(b,c))$ and $\forall{a,b\in\R}:g(a,b)=g(b,a)$, the \emph{generalized tensor product} $\otimesg$, an operator intaking tensors $\A\in\R^{M_1{\times\cdots\times}M_P},\B\in\R^{M_{P+1}{\times\cdots\times}M_{P+Q}}$ and returning tensor $\A\otimesg\B\in\R^{M_1{\times\cdots\times}M_{P+Q}}$, is defined as follows:
\be
\left(\A\otimesg\B\right)_{d_1,\ldots,d_{P+Q}}=g(\A_{d_1,\ldots,d_P},\B_{d_{P+1},\ldots,d_{P+Q}})
\label{eq:gen_tensor_prod}
\ee
\emph{Generalized tensor decompositions} are simply obtained by plugging in the generalized tensor product $\otimesg$ in place of the standard tensor product $\otimes$.

\section{From Networks to Tensors} \label{sec:nets2tens}

\begin{figure*}
\includegraphics[width=\textwidth]{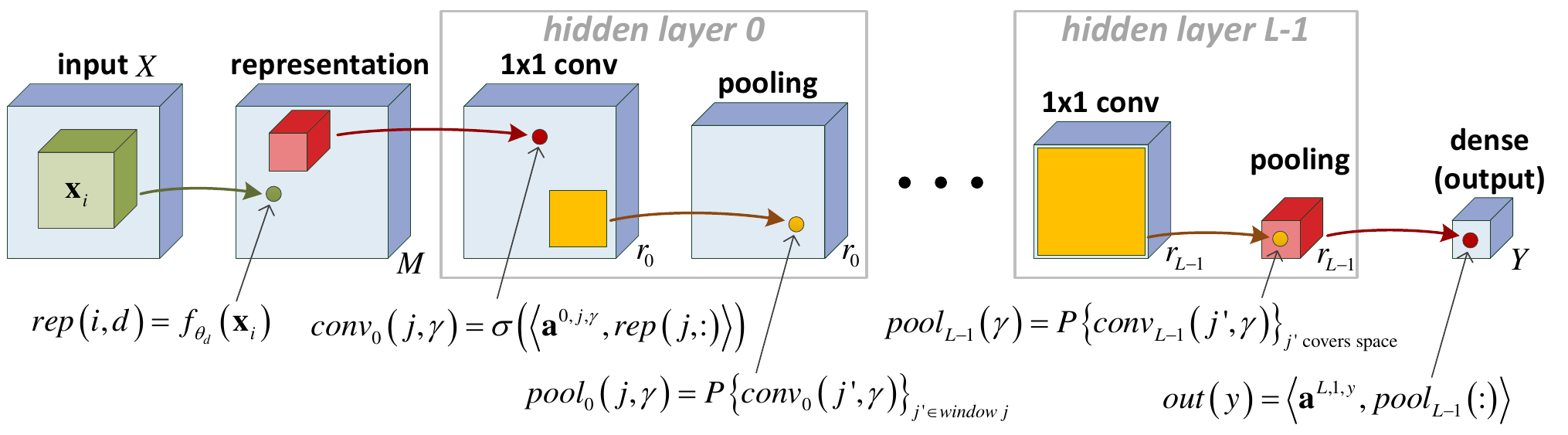}
\caption{
ConvNet architecture analyzed in this paper.  
The representation convolves functions $f_{\theta_d}(\cdot)$ across input patches (a standard convolutional layer is obtained by setting $f_{\theta_d}(\x)=\sigma(\w_d^\top\x+b_d)$).
$L$ hidden layers follow, each comprising $1\times1$ convolution (optionally without spatial weight sharing) followed by activation $\sigma(\cdot)$ and pooling $P(\cdot)$.
The last hidden layer reduces feature maps to singletons, and these are mapped to network outputs through a dense linear layer.
Convolutional arithmetic circuits as analyzed in~\cite{\expresstensors} correspond to linear activation ($\sigma(z)=z$) and product pooling ($P\{c_j\}=\prod{c_j}$).
Convolutional rectifier networks are obtained by setting ReLU activation ($\sigma(z)=\max\{0,z\}$) and max or average pooling ($P\{c_j\}=\max\{c_j\}$ or $P\{c_j\}=\mean\{c_j\}$ respectively).
Best viewed in color.
}
\label{fig:convnet}
\end{figure*}

The ConvNet architecture analyzed in this paper is presented in fig.~\ref{fig:convnet}.
The input to a network, denoted $X$, is composed of $N$ \emph{patches} $\x_1\ldots\x_N\in\R^s$.
For example, $X$~could represent a $32$-by-$32$ RGB image through $5\times5$ regions crossing the three color bands, in which case, assuming a patch is taken for every pixel (boundaries padded), we have $N=1024$ and $s=75$.
The first layer in a network is referred to as \emph{representation}, and may be thought of as a generalized convolution.
Namely, it consists of applying $M$ \emph{representation functions} $f_{\theta_1}{\ldots}f_{\theta_M}:\R^s\to\R$ to all patches of the input, thereby creating $M$ feature maps.
In the case where the representation functions are standard neurons, \ie $f_{\theta_d}(\x)=\sigma(\w_d^\top\x+b_d)$ for parameters $\theta_d=(\w_d,b_d)\in\R^s\times\R$ and some chosen activation $\sigma(\cdot)$, we obtain a conventional convolutional layer.
More elaborate settings are also possible, for example modeling the representation as a cascade of convolutional layers with pooling in-between.

Following the representation, a network includes $L$ hidden layers indexed by $l=0{\ldots}L-1$.
Each hidden layer $l$ begins with a \emph{$1\times1$ conv} operator, which is simply a 3D convolution with $r_l$ channels and receptive field $1\times1$ followed by point-wise activation $\sigma(\cdot)$.
We allow the convolution to operate without weight sharing, in which case the filters that generate feature maps by sliding across the previous layer may have different coefficients at different spatial locations.
This is often referred to in the deep learning community as a locally-connected layer (see~\cite{Taigman:2014vs}).
We refer to it as the \emph{unshared} case, in contrast to the \emph{shared} case that gives rise to a standard $1\times1$ convolution.
The second (last) operator in a hidden layer is spatial pooling.
Feature maps generated by $1\times1$ conv are decimated, by applying the pooling operator $P(\cdot)$ (\eg max or average) to non-overlapping 2D windows that cover the spatial extent.
The last of the $L$ hidden layers ($l=L-1$) reduces feature maps to singletons (its pooling operator is global), creating a vector of dimension $r_{L-1}$.
This vector is mapped into $Y$ network outputs through a final dense linear layer.

Altogether, the architectural parameters of a ConvNet are the type of representation functions ($f_{\theta_d}$), the pooling window sizes (which in turn determine the number of hidden layers $L$), the setting of conv weights as shared or unshared, the number of channels in each layer ($M$ for representation, $r_0{\ldots}r_{L-1}$ for hidden layers, $Y$ for output), and the choice of activation and pooling operators ($\sigma(\cdot)$ and $P(\cdot)$ respectively).
Given these architectural parameters, the learnable parameters of a network are the representation weights ($\theta_d$), the conv weights ($\aaa^{l,j,\gamma}$ for hidden layer $l$, location $j$ and channel $\gamma$ in the unshared case; $\aaa^{l,\gamma}$ for hidden layer $l$ and channel $\gamma$ in the shared case), and the output weights ($\aaa^{L,1,y}$).

The choice of activation and pooling operators determines the type of network we arrive at.
For linear activation ($\sigma(z)=z$) and product pooling ($P\{c_j\}=\prod{c_j}$) we get a convolutional arithmetic circuit as analyzed in~\cite{\expresstensors}.
For ReLU activation ($\sigma(z)=\max\{0,z\}$) and max or average pooling ($P\{c_j\}=\max\{c_j\}$ or $P\{c_j\}=\mean\{c_j\}$ respectively) we get the commonly used convolutional rectifier networks, on which we focus in this paper.

In terms of pooling window sizes and network depth, we direct our attention to two special cases representing the extremes.
The first is a shallow network that includes global pooling in its single hidden layer~--~see illustration in fig.~\ref{fig:shallow_convnet}.
The second is the deepest possible network, in which all pooling windows cover only two entries, resulting in $L=\log_{2}N$ hidden layers.
These ConvNets, which we refer to as \emph{shallow} and \emph{deep} respectively, will be shown to correspond to canonical tensor decompositions.
It is for this reason, and for simplicity of presentation, that we focus on these special cases.
One may just as well consider networks of intermediate depths with different pooling window sizes, and that would correspond to other, non-standard, tensor decompositions.
The analysis carried out in sec.~\ref{sec:analysis} can easily be adapted to such networks.

In a classification setting, the $Y$ outputs of a network correspond to different categories, and prediction follows the output with highest activation.
Specifically, if we denote by $h_y(\cdot)$ the mapping from network input to output $y$, the predicted label for the instance $X=(\x_1,\ldots,\x_N)\in(\R^s)^N$ is determined by the following classification rule:
$$\hat{y}=\argmax_{y\in[Y]}h_y(X)$$
We refer to $h_y$ as the \emph{score function} of category $y$.
Score functions are studied in this paper through the notion of \emph{grid tensors}.
Given fixed vectors $\x^{(1)}\ldots\x^{(M)}\in\R^s$, referred to as \emph{templates}, the grid tensor of $h_y$, denoted $\A(h_y)$, is defined to be the tensor of order~$N$ and dimension~$M$ in each mode whose entries are given by:
\be
\A(h_y)_{d_1{\ldots}d_N}=h_y(\x^{(d_1)},\ldots,\x^{(d_N)})
\label{eq:grid_tensor}
\ee
That is to say, the grid tensor of a score function under~$M$ templates $\x^{(1)}\ldots\x^{(M)}$, is a tensor of order~$N$ and dimension~$M$ in each mode, holding score values on the exponentially large grid of instances $\{X_{d_1{\ldots}d_N}:=(\x^{(d_1)},\ldots,\x^{(d_N)}):d_1{\ldots}d_N\in[M]\}$.
Before heading on to our analysis of grid tensors generated by ConvNets, to simplify notation, we define $F\in\R^{M{\times}M}$ to be the matrix holding the values taken by the representation functions $f_{\theta_1}{\ldots}f_{\theta_M}:\R^s\to\R$ on the selected templates $\x^{(1)}\ldots\x^{(M)}\in\R^s$:
\bea
F:=
\begin{bmatrix}
f_{\theta_1}(\x^{(1)}) & \cdots & f_{\theta_M}(\x^{(1)}) \\
\vdots & \ddots & \vdots \\
f_{\theta_1}(\x^{(M)}) & \cdots & f_{\theta_M}(\x^{(M)})
\end{bmatrix}
\label{eq:F}
\eea

To express the grid tensor of a ConvNet's score function using generalized tensor decompositions (see sec.~\ref{sec:gen_decomp}), we set the underlying function $g:\R\times\R\to\R$ to be the \emph{activation-pooling} operator defined by:
\be
g(a,b)=P(\sigma(a),\sigma(b))
\label{eq:act_pool_op}
\ee
where $\sigma(\cdot)$ and $P(\cdot)$ are the network's activation and pooling functions, respectively.
Notice that the activation-pooling operator meets the associativity and commutativity requirements under product pooling with linear activation ($g(a,b)=a{\cdot}b$), and under max pooling with ReLU activation ($g(a,b)=\max\{a,b,0\}$).
To account for the case of average pooling with ReLU activation, which a-priori leads to a non-associative activation-pooling operator, we simply replace average by sum, \ie we analyze sum pooling with ReLU activation ($g(a,b)=\max\{a,0\}+\max\{b,0\}$), which from the point of view of expressiveness is completely equivalent to average pooling with ReLU activation (scaling factors can always blend in to linear weights that follow pooling).

With the activation-pooling operator $g$ in place, it is straightforward to see that the grid tensor of $h_y^S$~--~a score function generated by the shallow ConvNet (fig.~\ref{fig:shallow_convnet}), is given by the following generalized tensor decomposition:
\be
\A\left(h_y^S\right) = \sum_{z=1}^Z a_z^y \cdot (F\aaa^{z,1}) \otimesg \cdots \otimesg (F\aaa^{z,N})
\label{eq:gen_cp_decomp}
\ee
$Z$ here is the number of channels in the network's single hidden layer, $\{\aaa^{z,i}\in\R^M\}_{z\in[Z],i\in[N]}$ are the weights in the hidden conv, and $\aaa^y\in\R^Z$ are the weights of output $y$.
The factorization in eq.~\ref{eq:gen_cp_decomp} generalizes the classic CP (CANDECOMP/PARAFAC) decomposition (see~\cite{Kolda-Bader2009} for a historic survey), and we accordingly refer to it as the \emph{generalized CP decomposition}.

Turning to the deep ConvNet (fig.~\ref{fig:convnet} with size-$2$ pooling windows and $L=\log_{2}N$ hidden layers), the grid tensor of its score function $h_y^D$ is given by the hierarchical generalized tensor decomposition below:
\bea
\phi^{1,j,\gamma} &=& \sum_{\alpha=1}^{r_0} a_\alpha^{1,j,\gamma} 
(F\aaa^{0,2j-1,\alpha}) \otimesg  (F\aaa^{0,2j,\alpha})
\nonumber \\
&\cdots& 
\nonumber\\
\phi^{l,j,\gamma} &=& \sum_{\alpha=1}^{r_{l-1}} a_\alpha^{l,j,\gamma} 
\underbrace{\phi^{l-1,2j-1,\alpha}}_{\text{order $2^{l-1}$}} \otimesg  
\underbrace{\phi^{l-1,2j,\alpha}}_{\text{order $2^{l-1}$}} 
\nonumber\\
&\cdots& 
\nonumber\\
\phi^{L-1,j,\gamma} &=& \sum_{\alpha=1}^{r_{L-2}} a_\alpha^{L-1,j,\gamma} 
\underbrace{\phi^{L-2,2j-1,\alpha}}_{\text{order $\frac{N}{4}$}} \otimesg  
\underbrace{\phi^{L-2,2j,\alpha}}_{\text{order $\frac{N}{4}$}}  
\nonumber\\ 
\A\left(h_y^D\right) &=& \sum_{\alpha=1}^{r_{L-1}} a_\alpha^{L,1,y} 
\underbrace{\phi^{L-1,1,\alpha}}_{\text{order $\frac{N}{2}$}} \otimesg  
\underbrace{\phi^{L-1,2,\alpha}}_{\text{order $\frac{N}{2}$}}  
\label{eq:gen_ht_decomp} 
\eea
$r_0{\ldots}r_{L-1}\in\N$ here are the number of channels in the network's hidden layers, $\{\aaa^{0,j,\gamma}\in\R^M\}_{j\in[N],\gamma\in[r_0]}$ are the weights in the first hidden conv, $\{\aaa^{l,j,\gamma} \in \R^{r_{l-1}}\}_{l\in[L-1],j\in[N/2^l],\gamma\in[r_l]}$ are the weights in the following hidden convs, and $\aaa^{L,1,y}\in\R^{r_{L-1}}$ are the weights of output $y$.
The factorization in eq.~\ref{eq:gen_ht_decomp} generalizes the Hierarchical Tucker decomposition introduced in~\cite{Hackbusch:2009jj}, and is accordingly referred to as the \emph{generalized HT decomposition}.

\begin{figure}
\includegraphics[width=\columnwidth]{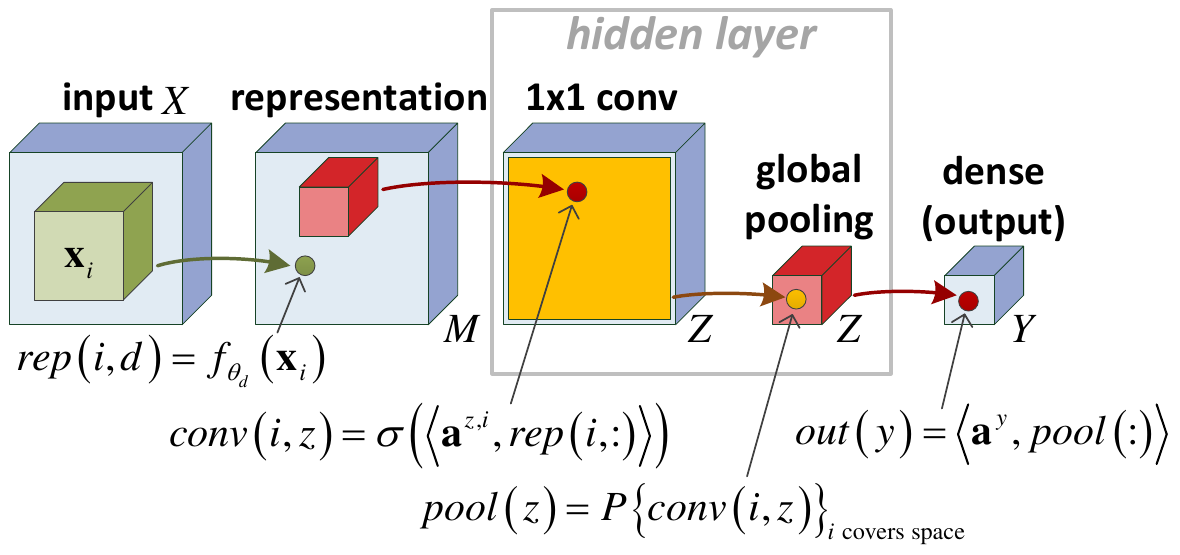}
\caption{
Shallow ConvNet with global pooling in its single hidden layer.
Best viewed in color.
}
\label{fig:shallow_convnet}
\end{figure}

To conclude this section, we presented a ConvNet architecture (fig.~\ref{fig:convnet}) whose activation and pooling operators may be chosen to realize convolutional arithmetic circuits (linear activation, product pooling) or convolutional rectifier networks (ReLU activation, max/average pooling).
We then defined the grid tensor of a network's score function as a tensor holding function values on an exponentially large grid whose points are sequences with elements chosen from a finite set of templates.
Then, we saw that the grid tensor of a shallow ConvNet (fig.~\ref{fig:shallow_convnet}) is given by the generalized CP decomposition (eq.~\ref{eq:gen_cp_decomp}), and a grid tensor of a deep ConvNet (fig.~\ref{fig:convnet} with $L=\log_{2}N$) is given by the generalized HT decomposition (eq.~\ref{eq:gen_ht_decomp}).
In the next section we utilize the connection between ConvNets and generalized tensor decompositions for an analysis of the expressive power and depth efficiency of convolutional rectifier networks.

\section{Capacity Analysis} \label{sec:analysis}

In this section we analyze score functions expressible by the shallow and deep ConvNets (fig.~\ref{fig:shallow_convnet}, and fig.~\ref{fig:convnet} with $L=\log_{2}N$, respectively) under ReLU activation with max or average pooling (convolutional rectifier networks), comparing these settings against linear activation with product pooling (convolutional arithmetic circuits).
Score functions are analyzed through grid tensors (eq.~\ref{eq:grid_tensor}), represented by the generalized tensor decompositions established in the previous section: the generalized CP decomposition (eq.~\ref{eq:gen_cp_decomp}) corresponding to the shallow network, and the generalized HT decomposition (eq.~\ref{eq:gen_ht_decomp}) corresponding to the deep network.
The analysis is organized as follows.
In sec.~\ref{sec:analysis:prelim} we present preliminary material required in order to follow our proofs.
Sec.~\ref{sec:analysis:temp_rep_funcs} discusses templates and representation functions, which form the bridge between score functions and generalized tensor decompositions.
Sec.~\ref{sec:analysis:matricization} presents matricization~--~a technical tool that facilitates the use of matrix theory for analyzing generalized tensor decompositions.
The actual analysis begins in sec.~\ref{sec:analysis:universality}, where we address the question of universality, \ie of the ability of networks to realize any score function when their size is unlimited.
This is followed by sec.~\ref{sec:analysis:depth_eff} which studies depth efficiency, namely, situations where functions efficiently computable by deep networks require shallow networks to have super-polynomial size.
Finally, sec.~\ref{sec:analysis:shared_coeff} analyzes the case of coefficient sharing, in which the conv operators of our networks are standard convolutions (as opposed to the more general locally-connected layers).

\subsection{Preliminaries} \label{sec:analysis:prelim}

For evaluating the completeness of depth efficiency, and for other purposes as well, we are often interested in the ``volume'' of sets in a Euclidean space, or more formally, in their Lebesgue measure.
While an introduction to Lebesgue measure theory is beyond the scope of this paper (the interested reader is referred to~\cite{jones2001lebesgue}), we restate here several concepts and results our proofs will rely upon.
A zero measure set can intuitively be thought of as having zero volume.
A union of countably many zero measure sets is itself a zero measure set.
If we randomize a point in space by some continuous distribution, the probability of hitting a zero measure set is always zero.
A useful fact (proven in~\cite{caron2005zero} for example) is that the zero set of a polynomial, \ie the set of points on which a polynomial vanishes, is either the entire space (when the polynomial in question is the zero polynomial), or it must have measure zero.
An open set always has positive measure, and when a point in space is drawn by a continuous distribution with non-vanishing continuous probability density function, the probability of hitting such a set is positive.

Apart from measure theory, we will also be using tools from the field of tensor analysis.
Here too, a full introduction to the topic is beyond our scope (we refer the interested reader to~\cite{Hackbusch-book}), and we only list some concepts and results that will be used.
First, a fact that relates to abstract tensor products over function spaces is the following.
If $f_{\theta_1}{\ldots}f_{\theta_M}:\R^s\to\R$ are linearly independent functions, then the product functions $\{(\x^{(1)},\ldots,\x^{(M)})\mapsto\prod_{i=1}^{M}f_{\theta_{d_i}}(\x^{(i)})\}_{d_1{\ldots}d_M\in[M]}$ from $(\R^s)^M$ to $\R$ are linearly independent as well.
Back to tensors as we have defined them (multi-dimensional arrays), a very important concept is that of \emph{rank}, which for order-$2$ tensors reduces to the standard notion of matrix rank.
A tensor is said to have rank $1$ if it may be written as a tensor product between non-zero vectors ($\A=\vv^1\otimes\cdots\otimes\vv^N$).
The rank of a general tensor is defined to be the minimal number of rank-$1$ tensors that may be summed up to produce it.
A useful fact is that the rank of an order-$N$ tensor with dimension $M_i$ in each mode $i\in[N]$, is no greater than $\prod_{i}M_i/\max_{i}M_i$.
On the other hand, all such tensors, besides a zero measure set, have rank equal to at least $\min\{\prod_{i~even}M_i,\prod_{i~odd}M_i\}$.
As in the special case of matrices, the rank is sub-additive, \ie $rank(\A+\B){\leq}rank(\A)+rank(\B)$ for any tensors $\A,\B$ of matching dimensions.
The rank is sub-multiplicative \wrt the tensor product, \ie $rank(\A\otimes\B){\leq}rank(\A){\cdot}rank(\B)$ for any tensors $\A,\B$.
Finally, we use the fact that permuting the modes of a tensor does not alter its rank.

\subsection{Templates and Representation Functions} \label{sec:analysis:temp_rep_funcs}

The expressiveness of our ConvNets obviously depends on the possible forms that may be taken by the representation functions $f_{\theta_1}{\ldots}f_{\theta_M}:\R^s\to\R$.
For example, if representation functions are limited to be constant, the ConvNets can only realize constant score functions.
We denote by $\F:=\{f_\theta:\R^s\to\R:\theta\in\Theta\}$ the parametric family from which representation functions are chosen, and make two mild assumptions on this family:
\begin{itemize}
\item \textbf{Continuity}: 
$f_\theta(\x)$ is continuous \wrt both $\theta$ and $\x$.
\item \textbf{Non-degeneracy}:
For any $\x^{(1)}\ldots\x^{(M)}\in\R^s$ such that $\x_i\neq\x_j~\forall{i{\neq}j}$, there exist $f_{\theta_1}{\ldots}f_{\theta_M}\in\F$ for which the matrix $F$ defined in eq.~\ref{eq:F} is non-singular.
\end{itemize}
Both of the assumptions above are met for most reasonable choices of $\F$.
In particular, non-degeneracy holds when representation functions are standard neurons:

\begin{claim} \label{claim:F_inv_f4x}
The parametric family:
\be
\F=\left\{f_\theta(\x)=\sigma(\w^\top\x+b):\theta=(\w,b)\in\R^s\times\R\right\}
\label{eq:rep_neurons}
\ee
where $\sigma(\cdot)$ is any sigmoidal activation
\footnote{
$\sigma(\cdot)$ is sigmoidal if it is monotonic with $\lim_{z\to-\infty}\sigma(z)=c$ and $\lim_{z\to+\infty}\sigma(z)=C$ for some $c{\neq}C$ in $\R$.
}
or the ReLU activation, meets the non-degeneracy condition (\ie for any distinct $\x^{(1)}\ldots\x^{(M)}\in\R^s$ there exist $f_{\theta_1}{\ldots}f_{\theta_M}\in\F$ such that the matrix $F$ defined in eq.~\ref{eq:F} is non-singular).
\end{claim}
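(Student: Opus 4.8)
The plan is to first collapse the $s$-dimensional problem to a one-dimensional one by projecting onto a single generic direction, and then to treat the two activation types separately: for ReLU I expect an exact triangularity argument, and for sigmoidal activations a limiting argument. For the reduction, I would pick $\w_0\in\R^s$ along which the templates have pairwise distinct projections. Since the templates are distinct, for each pair $i\neq j$ the set $\{\w:\w^\top(\x^{(i)}-\x^{(j)})=0\}$ is a proper linear subspace of $\R^s$, and a finite union of proper subspaces cannot cover $\R^s$; hence some $\w_0$ makes the scalars $t_i:=\w_0^\top\x^{(i)}$, $i\in[M]$, pairwise distinct. Reorder the templates so that $t_1<\cdots<t_M$. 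It then suffices to exhibit a scale $\lambda\in\R$ and biases $b_1,\ldots,b_M\in\R$ for which the matrix with entries $\sigma(\lambda t_i+b_j)$ is nonsingular, since taking $\theta_j=(\lambda\w_0,b_j)$ makes $F$ of eq.~\ref{eq:F} equal to that matrix.

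For the ReLU case, I expect $\lambda=1$ to work directly. Choose thresholds $\tau_1<t_1$ and $\tau_j\in(t_{j-1},t_j)$ for $j=2,\ldots,M$, and set $b_j=-\tau_j$. For $i\ge j$ we have $t_i\ge t_j>\tau_j$, so the $(i,j)$ entry is $\sigma(t_i-\tau_j)=t_i-\tau_j>0$; for $i<j$ we have $t_i\le t_{j-1}<\tau_j$, so the entry is $0$. Thus $F$ is lower triangular with strictly positive diagonal, whence $\det F=\prod_i(t_i-\tau_i)\neq0$.

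For the sigmoidal case, let $c\neq C$ be the limits of $\sigma$ at $-\infty$ and $+\infty$. I would keep $b_j=-\lambda\tau_j$ and let $\lambda\to+\infty$: since every $t_i-\tau_j$ is nonzero, $\sigma(\lambda(t_i-\tau_j))$ tends to $C$ when $t_i>\tau_j$ and to $c$ when $t_i<\tau_j$, so $F$ converges entrywise to a $\{c,C\}$-valued matrix $F_\infty$ determined by the sign pattern. With the interlacing thresholds above, $F_\infty$ equals $C$ on and below the diagonal and $c$ strictly above; a short row reduction gives $\det F_\infty=C(C-c)^{M-1}$. If instead one takes $\tau_j\in(t_j,t_{j+1})$ for $j<M$ and $\tau_M>t_M$, then $F_\infty$ equals $c$ on and above the diagonal and $C$ strictly below, with $\det F_\infty=\pm c(C-c)^{M-1}$. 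Since $c\neq C$, at least one of $c,C$ is nonzero, so one of the two placements gives $\det F_\infty\neq0$; by continuity of the determinant, $\det F\neq0$ for all sufficiently large $\lambda$, and any such $\lambda$ finishes the construction.

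The ReLU case is essentially immediate once the projection step is done; the delicate point is the sigmoidal case, where $F_\infty$ can be singular precisely when one of the asymptotic values $c,C$ vanishes (e.g. $\sigma$ ranging over $[0,1]$) — which is why the threshold pattern must be chosen according to which of $c,C$ is nonzero. A cleaner, case-free alternative is the contrapositive: if $F$ were singular for every admissible choice of representation functions, there would be a nonzero $(c_1,\ldots,c_M)$ with $\sum_i c_i\sigma(\w^\top\x^{(i)}+b)=0$ for all $(\w,b)$; specializing to $\w=\lambda\w_0$, $b=-\lambda\tau$ and letting $\lambda\to\infty$ (for sigmoids), or reading off the slope jumps of the piecewise-linear map $b\mapsto\sum_i c_i\sigma(t_i+b)$ at the distinct breakpoints $-t_i$ (for ReLU), forces every $c_i=0$, a contradiction. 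This proves both cases uniformly, at the cost of being slightly less explicit.
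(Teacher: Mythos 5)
Your proof follows essentially the same route as the paper's: pick a direction $\w_0$ outside the finite union of hyperplanes $\{\w:\w^\top(\x^{(i)}-\x^{(j)})=0\}$, order the projections, interlace the biases to obtain a lower-triangular $F$ with positive diagonal for ReLU, and pass to the limit $\lambda\to+\infty$ for sigmoids. The one substantive difference is to your credit: the paper simply asserts that the limiting matrix ($C$ on and below the diagonal, $c$ above, determinant $C(C-c)^{M-1}$) is non-singular, which fails when $C=0$ (e.g.\ a decreasing sigmoid with $\lim_{z\to+\infty}\sigma(z)=0$); your observation that one must switch to the other threshold placement, whose limit has determinant $\pm c(C-c)^{M-1}$, and that $c\neq C$ guarantees at least one of the two works, closes this small gap. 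Your alternative contrapositive argument via linear independence of the evaluation vectors is also valid and handles both activations uniformly, at the cost of being less constructive.
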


\begin{proof}
We first show that given distinct $\x^{(1)}\ldots\x^{(M)}\in\R^s$, there exists a vector $\w\in\R^s$ such that $\w^\top\x^{(i)}\neq\w^\top\x^{(j)}$ for all $1{\leq}i<j{\leq}M$.
$\w$ satisfies this condition if it is not perpendicular to any of the finitely many non-zero vectors $\{\x^{(i)}-\x^{(j)}:1{\leq}i<j{\leq}M\}$.
If for every $1{\leq}i<j{\leq}M$ we denote by $P^{(i,j)}\subset\R^s$ the set of points perpendicular to $\x^{(i)}-\x^{(j)}$, we obtain that $\w$ satisfies the desired condition if it does not lie in the union $\bigcup_{1{\leq}i<j{\leq}M}P^{(i,j)}$.
Each $P^{(i,j)}$ is the zero set of a non-zero polynomial, and in particular has measure zero.
The finite union $\bigcup_{1{\leq}i<j{\leq}M}P^{(i,j)}$ thus has measure zero as well, and accordingly cannot cover the entire space.
This implies that $\w\in\R^s\setminus\bigcup_{1{\leq}i<j{\leq}M}P^{(i,j)}$ indeed exists.

Assume without loss of generality $\w^\top\x^{(1)}<\ldots<\w^\top\x^{(M)}$.
We may then choose $b_1{\ldots}b_M\in\R$ such that $-\w^\top\x^{(M)}<b_M<\ldots<-\w^\top\x^{(1)}<b_1$.
For $i,j\in[M]$, $\w^\top\x^{(i)}+b_j$ is positive when $j{\leq}i$ and negative when $j>i$.
Therefore, if $\sigma(\cdot)$ is chosen as the ReLU activation, defining $f_{\theta_j}(\x)=\sigma(\w^\top\x+b_j)$ for every $j\in[M]$ gives rise to a matrix $F$ (eq.~\ref{eq:F}) that is lower triangular with non-zero values on its diagonal.
This proves the desired result for the case of ReLU activation.

Consider now the case of sigmoidal activation, where $\sigma(\cdot)$ is monotonic with $\lim_{z\to-\infty}\sigma(z)=c$ and $\lim_{z\to+\infty}\sigma(z)=C$ for some $c{\neq}C$ in $\R$.
Letting $\w\in\R^s$ and $b_1{\ldots}b_M\in\R$ be as above, we introduce a scaling factor $\alpha>0$, and define $f_{\theta_j}(\x)=\sigma(\alpha\w^\top\x+\alpha b_j)$ for every $j\in[M]$.
It is not difficult to see that as $\alpha\to+\infty$, the matrix $F$ tends closer and closer to a matrix holding $C$ on and below its diagonal, and $c$ elsewhere.
The latter matrix is non-singular, and in particular has non-zero determinant $d\neq0$.
The determinant of $F$ converges to $d$ as $\alpha\to+\infty$, so for large enough $\alpha$, $F$ is non-singular.
\end{proof}

Non-degeneracy means that given distinct templates, one may choose representation functions for which $F$ is non-singular.
We may as well consider the opposite situation, where we are given representation functions, and would like to choose templates leading to non-singular $F$.  Apparently, so long as the representation functions are linearly independent, this is always possible:

\begin{claim} \label{claim:F_inv_x4f}
Let $f_{\theta_1}{\ldots}f_{\theta_M}:\R^s\to\R$ be any linearly independent continuous functions.
Then, there exist $\x^{(1)}\ldots\x^{(M)}\in\R^s$ such that $F$ (eq.~\ref{eq:F}) is non-singular.
\end{claim}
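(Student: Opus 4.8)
The plan is to argue by induction on $M$, using cofactor expansion of a determinant along a single row. The guiding observation is that non-singularity of $F$ for \emph{some} choice of templates is equivalent to the non-vanishing (as a function on $(\R^s)^M$) of the map $(\x^{(1)},\ldots,\x^{(M)})\mapsto\det F$, and that fixing the first $M-1$ templates suitably turns this into a nontrivial linear combination of $f_{\theta_1}{\ldots}f_{\theta_M}$ in the last template — which therefore cannot be identically zero by linear independence. It is worth noting that continuity of the $f_{\theta_j}$ plays no role here; linear independence alone suffices (continuity is recorded in the claim only because it is assumed of the representation family elsewhere).

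For the base case $M=1$, linear independence of $\{f_{\theta_1}\}$ merely says $f_{\theta_1}\not\equiv0$, so there is $\x^{(1)}\in\R^s$ with $f_{\theta_1}(\x^{(1)})\neq0$, and the $1\times1$ matrix $F$ is non-singular. For the inductive step, assume the statement holds for any $M-1$ linearly independent functions. Given linearly independent $f_{\theta_1}{\ldots}f_{\theta_M}$, the sub-collection $f_{\theta_1}{\ldots}f_{\theta_{M-1}}$ is linearly independent as well, so by the inductive hypothesis there are templates $\x^{(1)}{\ldots}\x^{(M-1)}\in\R^s$ for which the $(M-1)\times(M-1)$ matrix $F'$ with entries $F'_{ij}=f_{\theta_j}(\x^{(i)})$, $i,j\in[M-1]$, satisfies $\det F'\neq0$. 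Now define, for $\x\in\R^s$, the quantity $g(\x):=\det\widetilde{F}(\x)$, where $\widetilde{F}(\x)\in\R^{M\times M}$ is the matrix whose $i$-th row holds $f_{\theta_1}(\x^{(i)}){\ldots}f_{\theta_M}(\x^{(i)})$ for $i\in[M-1]$, and whose $M$-th row holds $f_{\theta_1}(\x){\ldots}f_{\theta_M}(\x)$. Expanding $\det\widetilde{F}(\x)$ along its last row yields
\[
g(\x)=\sum_{j=1}^{M}(-1)^{M+j}\,f_{\theta_j}(\x)\cdot\det\widehat{F}_j,
\]
where $\widehat{F}_j$ is the minor obtained by deleting the last row and the $j$-th column of $\widetilde{F}(\x)$. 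The minors $\widehat{F}_j$ do not depend on $\x$, and in particular $\widehat{F}_M=F'$, so the coefficient of $f_{\theta_M}(\x)$ in the expression above is $\det F'\neq0$. Hence $g$ is a nontrivial linear combination of $f_{\theta_1}{\ldots}f_{\theta_M}$, and by their linear independence $g\not\equiv0$; thus there exists $\x^{(M)}\in\R^s$ with $g(\x^{(M)})\neq0$. Setting $F:=\widetilde{F}(\x^{(M)})$, built from templates $\x^{(1)}{\ldots}\x^{(M)}$, gives $\det F\neq0$, completing the induction.

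The argument is essentially mechanical, so I do not expect a genuine obstacle. The one point needing care is the bookkeeping in the cofactor expansion — specifically verifying that the coefficient multiplying $f_{\theta_M}$ is precisely $\det F'$ (up to sign) and hence nonzero — since this is exactly what makes the resulting linear combination nontrivial irrespective of the values of the other minors.
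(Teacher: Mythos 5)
Your proof is correct, but it takes a different route from the paper's. The paper writes $\det F$ as a function on $(\R^s)^M$ via the Leibniz (permutation) expansion, observes that it is a non-zero linear combination of the product functions $(\x^{(1)},\ldots,\x^{(M)})\mapsto\prod_{i}f_{\theta_{d_i}}(\x^{(i)})$, and then invokes the fact (recorded in its preliminaries as a property of tensor products of function spaces) that products of linearly independent functions are themselves linearly independent on the product domain; hence $\det F$ is not identically zero. Your induction with cofactor expansion along the last row reaches the same conclusion while using only the linear independence of the original $M$ functions --- the nontriviality of the combination $g=\sum_j(-1)^{M+j}\det\widehat{F}_j\cdot f_{\theta_j}$ is guaranteed by the single coefficient $\det F'\neq0$, so you never need the product-function lemma. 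This makes your argument more self-contained and elementary (it is the classical proof that linearly independent functions admit a nonsingular sample matrix), at the cost of being slightly longer than the paper's one-step reduction to the quoted lemma. Your side remark that continuity is not needed is also accurate; neither proof uses it.
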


\begin{proof}
We may view the determinant of $F$ (eq.~\ref{eq:F}) as a function of $(\x^{(1)},\ldots,\x^{(M)})$:
$$\det{F}(\x^{(1)},\ldots,\x^{(M)})=\sum_{\delta{\in}S_M}sign(\delta)\prod_{i=1}^{M}f_{\theta_{\delta(i)}}(\x^{(i)})$$
where $S_M$ stands for the permutation group on $[M]$, and $sign(\delta)\in\{\pm1\}$ is the sign of the permutation~$\delta$.
This in particular shows that $\det{F}(\x^{(1)},\ldots,\x^{(M)})$ is a non-zero linear combination of the product functions $\{(\x^{(1)},\ldots,\x^{(M)})\mapsto\prod_{i=1}^{M}f_{\theta_{d_i}}(\x^{(i)})\}_{d_1{\ldots}d_M\in[M]}$.
Since these product functions are linearly independent (see sec.~\ref{sec:analysis:prelim}), $\det{F}(\x^{(1)},\ldots,\x^{(M)})$ cannot be the zero function.
That is to say, there exist $\x^{(1)}\ldots\x^{(M)}\in\R^s$ such that $\det{F}(\x^{(1)},\ldots,\x^{(M)})\neq0$.
\end{proof}

As stated previously, the analysis carried out in this paper studies score functions expressible by ConvNets through the notion of grid tensors.
The translation of score functions into grid tensors is facilitated by the choice of templates $\x^{(1)}\ldots\x^{(M)}\in\R^s$ (eq.~\ref{eq:grid_tensor}).
For general templates, the correspondence between score functions and grid tensors is not injective~--~a score function corresponds to a single grid tensor, but a grid tensor may correspond to more than one score function.
We use the term \emph{covering} to refer to templates leading to an injective correspondence, \ie~to a situation where two score functions associated with the same grid tensor are effectively identical.
In other words, the templates $\x^{(1)}\ldots\x^{(M)}$ are covering if the value of score functions outside the exponentially large grid $\left\{X_{d_1{\ldots}d_N}:=(\x^{(d_1)},\ldots,\x^{(d_N)}):d_1{\ldots}d_N\in[M]\right\}$ is irrelevant for classification.
Some of the claims in our analysis will assume existence of covering templates (it will be stated explicitly when so).
We argue in app.~\ref{app:cover_temp} that for structured compositional data (\eg natural images), $M\in\Omega(100)$ suffices in order for this assumption to hold.

\begin{figure}
\includegraphics[width=\columnwidth]{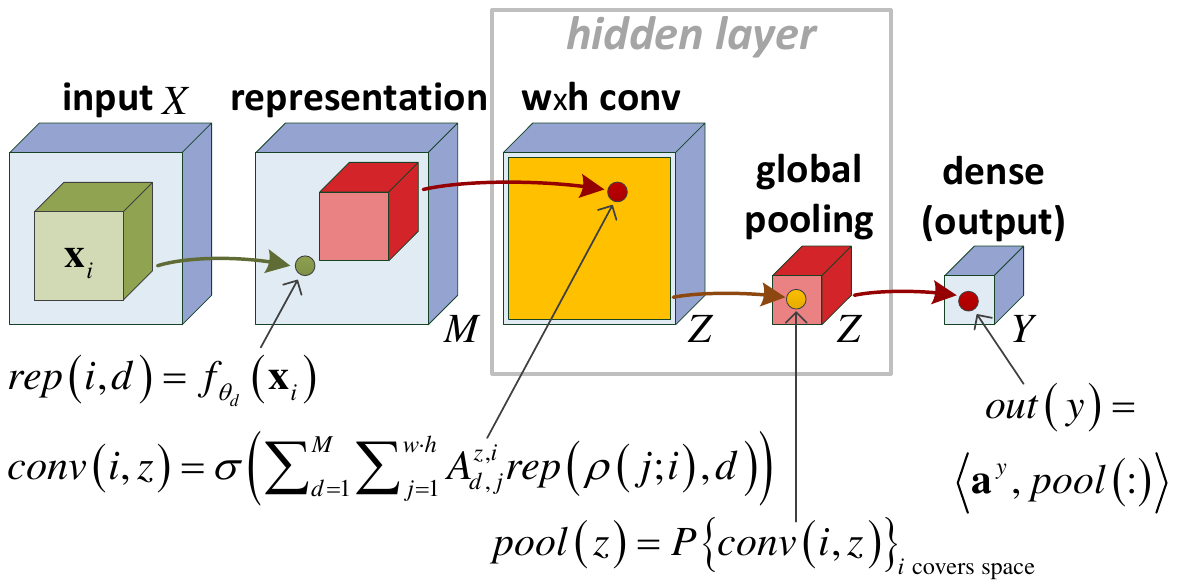}
\caption{
Shallow ConvNet with conv receptive field expanded from $1\times1$ to $w{\times}h$.
The weight vectors $\aaa^{i,z}\in\R^M$ have been replaced by matrices $A^{i,z}\in\R^{M{\times}w{\cdot}h}$, and we denote by $\rho(j;i)$ the spatial location of element $j$ in the $w{\times}h$ window revolving around $i$.
Best viewed in color.
}
\label{fig:shallow_wxh_convnet}
\end{figure}

\subsection{Matricization} \label{sec:analysis:matricization}

When analyzing grid tensors, we will often consider their arrangement as matrices.
The \emph{matricization} of a tensor $\A$, denoted $[\A]$, is its arrangement as a matrix with rows corresponding to odd modes and columns corresponding to even modes.
Specifically, if $\A\in\R^{M_1{\times\cdots\times}M_N}$, and assuming for simplicity that the order $N$ is even, the matricization $[\A]\in\R^{(M_1{\cdot}M_3{\cdot\ldots\cdot}M_{N-1})\times(M_2{\cdot}M_4{\cdot\ldots\cdot}M_N)}$ holds $\A_{d_1,\ldots,d_N}$ in row index $1+\sum_{i=1}^{\nicefrac{N}{2}}(d_{2i-1}-1)\prod_{j=i+1}^{\nicefrac{N}{2}}M_{2j-1}$ and column index $1+\sum_{i=1}^{\nicefrac{N}{2}}(d_{2i}-1)\prod_{j=i+1}^{\nicefrac{N}{2}}M_{2j}$.

The matrix analogy of the tensor product $\otimes$ (eq.~\ref{eq:tensor_prod}) is called the \emph{Kronecker product}, and is denoted by $\odot$.
For $A\in\R^{M_1{\times}M_2}$ and $B\in\R^{N_1{\times}N_2}$, $A{\odot}B$ is the matrix in $\R^{M_{1}N_{1}{\times}M_{2}N_{2}}$ holding $A_{ij}B_{kl}$ in row index $(i-1)N_1+k$ and column index $(j-1)N_2+l$.
The relation $[\A\otimes\B]=[\A]\odot[\B]$, where $\A$ and $\B$ are arbitrary tensors of even order, implies that the tensor and Kronecker products are indeed analogous, \ie they represent the same operation under tensor and matrix viewpoints, respectively.
We generalize the Kronecker product analogously to our generalization of the tensor product (eq.~\ref{eq:gen_tensor_prod}).
For an associative and commutative binary operator $g(\cdot,\cdot)$, the \emph{generalized Kronecker product} $\odotg$, is an operator that intakes matrices $A\in\R^{M_1{\times}M_2}$ and $B\in\R^{N_1{\times}N_2}$, and returns a matrix $A{\odotg}B\in\R^{M_{1}N_{1}{\times}M_{2}N_{2}}$ holding $g(A_{ij},B_{kl})$ in row index $(i-1)N_1+k$ and column index $(j-1)N_2+l$.
The relation between the tensor and Kronecker products holds for their generalized versions as well, \ie $[\A\otimesg\B]=[\A]\odotg[\B]$ for arbitrary tensors $\A,\B$ of even order.

Equipped with the matricization operator $[\cdot]$ and the generalized Kronecker product $\odotg$, we are now in a position to translate the generalized HT decomposition (eq.~\ref{eq:gen_ht_decomp}) to an expression for the matricization of a grid tensor generated by the deep ConvNet:
{\small
\bea
\phi^{1,j,\gamma} &=& \sum_{\alpha=1}^{r_0} a_\alpha^{1,j,\gamma} 
(F\aaa^{0,2j-1,\alpha}) \otimesg  (F\aaa^{0,2j,\alpha})
\label{eq:mat_gen_ht_decomp}\\
&\cdots& 
\nonumber\\
\left[\phi^{l,j,\gamma}\right] &=& \sum_{\alpha=1}^{r_{l-1}} a_\alpha^{l,j,\gamma} 
\underbrace{\left[\phi^{l-1,2j-1,\alpha}\right]}_{\text{$M^{2^{l-2}}$-by-$M^{2^{l-2}}$}} \odotg  
\underbrace{\left[\phi^{l-1,2j,\alpha}\right]}_{\text{$M^{2^{l-2}}$-by-$M^{2^{l-2}}$}} 
\nonumber\\
&\cdots& 
\nonumber\\
\left[\phi^{L-1,j,\gamma}\right] &=& \sum_{\alpha=1}^{r_{L-2}} a_\alpha^{L-1,j,\gamma} 
\underbrace{\left[\phi^{L-2,2j-1,\alpha}\right]}_{\text{$M^{\nicefrac{N}{8}}$-by-$M^{\nicefrac{N}{8}}$}} \odotg  
\underbrace{\left[\phi^{L-2,2j,\alpha}\right]}_{\text{$M^{\nicefrac{N}{8}}$-by-$M^{\nicefrac{N}{8}}$}}  
\nonumber\\ 
\left[\A\left(h_y^D\right)\right] &=& \sum_{\alpha=1}^{r_{L-1}} a_\alpha^{L,1,y} 
\underbrace{\left[\phi^{L-1,1,\alpha}\right]}_{\text{$M^{\nicefrac{N}{4}}$-by-$M^{\nicefrac{N}{4}}$}} \odotg  
\underbrace{\left[\phi^{L-1,2,\alpha}\right]}_{\text{$M^{\nicefrac{N}{4}}$-by-$M^{\nicefrac{N}{4}}$}}  
\nonumber
\eea
}
We refer to this factorization as the \emph{matricized generalized HT decomposition}.
Notice that the expression above for $\phi^{1,j,\gamma}$ is the same as in the original generalized HT decomposition, as order-2 tensors need not be matricized.

For the matricization of a grid tensor generated by the shallow ConvNet, we translate the generalized CP decomposition (eq.~\ref{eq:gen_cp_decomp}) into the \emph{matricized generalized CP decomposition}:
{\small
\bea
&&\quad\qquad\qquad\qquad\qquad\left[\A\left(h_y^S\right)\right] =
\label{eq:mat_gen_cp_decomp}\\
&&\sum_{z=1}^{Z}a_z^y\cdot \left((F\aaa^{z,1})\odotg(F\aaa^{z,3})\odotg\cdots\odotg(F\aaa^{z,N-1})\right)\odotg
\nonumber\\
&&\qquad\qquad\left((F\aaa^{z,2})\odotg(F\aaa^{z,4})\odotg\cdots\odotg(F\aaa^{z,N})\right)^\top
\nonumber
\eea
}

The matricized generalized CP and HT decompositions (eq.~\ref{eq:mat_gen_cp_decomp} and~\ref{eq:mat_gen_ht_decomp} respectively) will be used throughout our proofs to establish depth efficiency.
This is generally done by providing a lower bound on $rank[\A(h_y^D)]$~--~the rank of the deep ConvNet's matricized grid tensor, and an upper bound on $rank[\A(h_y^S)]$~--~the rank of the shallow ConvNet's matricized grid tensor.
The upper bound on $rank[\A(h_y^S)]$ will be linear in $Z$, and so requiring $\A(h_y^S)=\A(h_y^D)$, and in particular $rank[\A(h_y^S)]=rank[\A(h_y^D)]$, will give us a lower bound on $Z$.
That is to say, we obtain a lower bound on the number of hidden channels in the shallow ConvNet, that must be met in order for this network to replicate a grid tensor generated by the deep ConvNet.
Our analysis of depth efficiency is given in sec.~\ref{sec:analysis:depth_eff}.
As a prerequisite, we first head on to sec.~\ref{sec:analysis:universality} to analyze universality.

\subsection{Universality} \label{sec:analysis:universality} 

\emph{Universality} refers to the ability of a network to realize (or approximate) any function of choice when no restrictions are imposed on its size.
It is well-known that fully-connected neural networks are universal under all types of non-linear activations typically used in practice, even if the number of hidden layers is restricted to one~(\cite{Cybenko:1989fm,Hornik:1989fr,leshno1993multilayer}).
To the best of our knowledge universality has never been studied in the context of convolutional rectifier networks.
This is the purpose of the current subsection.
Specifically, we analyze the universality of our shallow and deep ConvNets (fig.~\ref{fig:shallow_convnet}, and fig.~\ref{fig:convnet} with $L=\log_{2}N$, respectively) under ReLU activation and max or average pooling.

We begin by stating a result similar to that given in~\cite{\expresstensors}, according to which convolutional arithmetic circuits are universal:

\begin{claim} \label{claim:prod_universal}
Assuming covering templates exist, with linear activation and product pooling the shallow ConvNet is universal (hence so is the deep).
\end{claim}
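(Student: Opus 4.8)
The plan is to reduce this universality statement to the elementary fact that the CP decomposition can represent an \emph{arbitrary} tensor, using the covering assumption to turn ``realize any score function'' into ``realize any grid tensor''. First I would fix covering templates $\x^{(1)},\ldots,\x^{(M)}\in\R^s$, whose existence is assumed, and invoke the non-degeneracy of the family $\F$ (which by Claim~\ref{claim:F_inv_f4x} holds in particular when representation functions are standard neurons) to choose $f_{\theta_1},\ldots,f_{\theta_M}\in\F$ for which the matrix $F$ of eq.~\ref{eq:F} is non-singular. With linear activation and product pooling the activation-pooling operator of eq.~\ref{eq:act_pool_op} is just $g(a,b)=a\cdot b$, so eq.~\ref{eq:gen_cp_decomp} collapses to the ordinary CP decomposition $\A(h_y^S)=\sum_{z=1}^{Z}a_z^y\,(F\aaa^{z,1})\otimes\cdots\otimes(F\aaa^{z,N})$.

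Next, given any target score function $h$, I would form its grid tensor $\A(h)\in\R^{M{\times}\cdots{\times}M}$ (order $N$) via eq.~\ref{eq:grid_tensor} and write it in CP form $\A(h)=\sum_{z=1}^{Z}c_z\,\vv^{z,1}\otimes\cdots\otimes\vv^{z,N}$ for some finite $Z$ (bounded by $M^{N-1}$, by the rank bound recalled in sec.~\ref{sec:analysis:prelim}) and vectors $\vv^{z,i}\in\R^M$; such a decomposition always exists. Since $F$ is invertible, setting $\aaa^{z,i}:=F^{-1}\vv^{z,i}$ gives $F\aaa^{z,i}=\vv^{z,i}$, and setting $a_z^y:=c_z$ produces a shallow ConvNet with $Z$ hidden channels whose grid tensor equals $\A(h)$. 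Because the templates are covering, this network's score function and $h$ are effectively identical for classification, which is exactly universality. For the parenthetical ``hence so is the deep'', I would observe that with product pooling eq.~\ref{eq:gen_ht_decomp} is the ordinary HT decomposition, which reproduces any CP sum: take $r_0=\cdots=r_{L-1}=Z$, set the first-layer conv weights so that $F\aaa^{0,i,\alpha}=\vv^{\alpha,i}$, let each intermediate weight vector $\aaa^{l,j,\gamma}$ equal the $\gamma$-th standard basis vector (so channel $\gamma$ carries exactly the $\gamma$-th rank-$1$ term all the way up the tree), and let the output weights $a^{L,1,y}_\alpha$ carry the coefficients $c_\alpha$.

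There is no substantial obstacle here; the two points that need to be stated carefully are conceptual rather than computational: that in the presence of covering templates universality is equivalent to the ability to realize an arbitrary grid tensor, so the combinatorial universality of CP/HT decompositions does all the work; and that the non-singularity of $F$ is precisely what lets arbitrary CP factors $\vv^{z,i}$ be absorbed into the conv weights via $F^{-1}$.
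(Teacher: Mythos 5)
Your proposal is correct and follows essentially the same route as the paper: reduce to the standard CP decomposition via the covering assumption, use the invertibility of $F$ (guaranteed by non-degeneracy) to absorb arbitrary CP factors into the conv weights, and embed the shallow decomposition into the deep one by making each intermediate weight vector an indicator/standard basis vector. The only differences are cosmetic — you make the $F^{-1}$ step and the channel-carrying argument explicit where the paper leaves them implicit, and your rank bound $M^{N-1}$ is the slightly tighter version of the paper's $Z\geq M^N$.
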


\begin{proof}
Let $\x^{(1)}\ldots\x^{(M)}\in\R^s$ be distinct covering templates, and $f_{\theta_1}{\ldots}f_{\theta_M}$ be representation functions for which $F$ is invertible (non-degeneracy implies that such functions exist).
With linear activation and product pooling the generalized CP decomposition (eq.~\ref{eq:gen_cp_decomp}) reduces to its standard version, which is known to be able to express any tensor when size is large enough (\eg~$Z{\geq}M^N$ suffices).
The shallow ConvNet can thus realize any grid tensor on covering templates, precisely meaning that it is universal.
As for the deep ConvNet, setting $r_0=\cdots=r_{L-1}=Z$ and $a^{l,j,\gamma}_\alpha=\indc{\alpha=\gamma}$, where $l\in[L-1]$ and $\indc{\cdot}$ is the indicator function, reduces its decomposition (eq.~\ref{eq:gen_ht_decomp}) to that of the shallow ConvNet (eq.~\ref{eq:gen_cp_decomp}).
This implies that all grid tensors realizable by the shallow ConvNet are also realizable by the deep ConvNet.
\end{proof}

Heading on to convolutional rectifier networks, the following claim tells us that max pooling leads to universality:

\begin{claim} \label{claim:max_universal}
Assuming covering templates exist, with ReLU activation and max pooling the shallow ConvNet is universal (hence so is the deep).
\end{claim}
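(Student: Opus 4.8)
The plan is to prove the statement by showing directly that the shallow ConvNet with ReLU activation and max pooling can realize \emph{every} grid tensor on a set of covering templates; universality then follows immediately from the defining property of covering templates, and the assertion for the deep network follows by the same channel-matching reduction used in the proof of Claim~\ref{claim:prod_universal}. So the whole argument reduces to a spanning statement about generalized rank-$1$ tensors.

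First I would fix distinct covering templates $\x^{(1)}\ldots\x^{(M)}$ and, invoking non-degeneracy, choose representation functions $f_{\theta_1}\ldots f_{\theta_M}$ for which the matrix $F$ of eq.~\ref{eq:F} is invertible; then $\aaa^{z,i}\mapsto F\aaa^{z,i}$ is a bijection of $\R^M$, so by the generalized CP decomposition (eq.~\ref{eq:gen_cp_decomp}) the grid tensors realizable by the shallow network are exactly the tensors $\sum_{z=1}^{Z} a_z \cdot \vv^{z,1}\otimesg\cdots\otimesg\vv^{z,N}$ with $\vv^{z,i}\in\R^M$, $a_z\in\R$ and $Z\in\N$ arbitrary. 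Under max pooling with ReLU the activation-pooling operator is $g(a,b)=\max\{0,a,b\}$, which is associative, so a single generalized rank-$1$ tensor has entries $(\vv^1\otimesg\cdots\otimesg\vv^N)_{d_1\ldots d_N}=\max\{0,\vv^1_{d_1},\ldots,\vv^N_{d_N}\}$. It therefore suffices to prove that such tensors, with signed coefficients, span $\R^{M{\times\cdots\times}M}$.

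The key step is this spanning claim, which I would prove constructively. Given subsets $J_1,\ldots,J_N\subseteq[M]$, let $\vv^i\in\R^M$ have entries $\vv^i_k=1$ for $k\in J_i$ and $\vv^i_k=-1$ otherwise; then $\max\{0,\vv^1_{d_1},\ldots,\vv^N_{d_N}\}$ equals $1$ when $d_i\in J_i$ for some $i$ and $0$ otherwise, so the resulting generalized rank-$1$ tensor is the indicator of the ``union-of-cylinders'' set $\{(d_1,\ldots,d_N):\exists\,i,\ d_i\in J_i\}$. Taking $J_i=[M]$ for all $i$ gives the all-ones tensor $\1$; taking $J_i=[M]\setminus\{d^*_i\}$ for a fixed tuple $(d^*_1,\ldots,d^*_N)$ gives the indicator of the complement of $\{(d^*_1,\ldots,d^*_N)\}$; subtracting the latter from $\1$ yields the indicator tensor of the single grid point $(d^*_1,\ldots,d^*_N)$. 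As $(d^*_1,\ldots,d^*_N)$ ranges over $[M]^N$ these point-indicators form the standard basis of $\R^{M{\times\cdots\times}M}$, so any target grid tensor is a signed sum of generalized rank-$1$ tensors, hence realizable by the shallow ConvNet — which is therefore universal. Finally, exactly as in the proof of Claim~\ref{claim:prod_universal}, setting $r_0=\cdots=r_{L-1}=Z$ and $a^{l,j,\gamma}_\alpha=\indc{\alpha=\gamma}$ in eq.~\ref{eq:gen_ht_decomp} collapses the generalized HT decomposition to the generalized CP decomposition (this uses only associativity and commutativity of $g$), so every grid tensor realizable by the shallow network is realizable by the deep one, and the deep ConvNet is universal as well.

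I expect the spanning claim of the third paragraph to be the crux. In the product-pooling case the generalized CP decomposition is the ordinary CP decomposition, so any tensor is reached directly; with max pooling, however, each generalized rank-$1$ tensor is forced to be a ``union-type'' indicator and can never be a single-point indicator once $N,M\geq2$, so one genuinely needs the inclusion--exclusion / signed-combination device above to recover a spanning family. The remaining ingredients — eliminating $F$ via non-degeneracy, passing from tensor realizability to universality via covering templates, and lifting from the shallow to the deep network — are routine and parallel Claim~\ref{claim:prod_universal}.
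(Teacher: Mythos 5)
Your proof is correct and follows essentially the same route as the paper: both arguments realize each single-point indicator tensor as the difference of the all-ones tensor and the indicator of the point's complement, each of which is a generalized rank-$1$ tensor under $g(a,b)=\max\{a,b,0\}$ (the paper uses $0/1$-valued vectors $\bar{\e}_{d_i}$ where you use $\pm1$-valued ones, an immaterial difference), giving universality with $Z\geq 2M^N$ and the same channel-matching reduction to the deep network. Your intermediate observation about general union-of-cylinders indicators is a mild generalization the paper does not state, but the construction actually used is identical.
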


\begin{proof}
The proof follows the same line as that of claim~\ref{claim:prod_universal}, except we cannot rely on the ability of the standard CP decomposition to realize any tensor of choice.
Instead, we need to show that the generalized CP decomposition (eq.~\ref{eq:gen_cp_decomp}) with $g(a,b)=\max\{a,b,0\}$ can realize any tensor, so long as $Z$ is large enough.
We will show that $Z{\geq}2{\cdot}M^N$ suffices.
For that, it is enough to consider an arbitrary indicator tensor, \ie a tensor holding $1$ in some entry and $0$ in all other entries, and show that it can be expressed with $Z=2$.

Let $\A$ be an indicator tensor of order $N$ and dimension $M$ in each mode, its active entry being $(d_1,\ldots,d_N)$.
Denote by $\1\in\R^M$ the vector holding $1$ in all entries, and for every $i\in[N]$, let $\bar{\e}_{d_i}\in\R^M$ be the vector holding $0$ in entry $d_i$ and $1$ elsewhere.
With the following weight settings, a generalized CP decomposition (eq.~\ref{eq:gen_cp_decomp}) with $g(a,b)=\max\{a,b,0\}$ and $Z=2$ produces $\A$, as required: 
\begin{itemize}
\item $a_1^y=1$, $a_2^y=-1$
\item $\aaa^{1,1}=\cdots=\aaa^{1,N}=\1$
\item $\forall{i\in[N]}:\aaa^{2,i}=\bar{\e}_{d_i}$
\end{itemize}
\end{proof}

At this point we encounter the first somewhat surprising result, according to which convolutional rectifier networks are not universal with average pooling:

\begin{claim} \label{claim:avg_non_universal}
With ReLU activation and average pooling, both the shallow and deep ConvNets are not universal.
\end{claim}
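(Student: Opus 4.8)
The plan is to exploit the fact that, with ReLU activation and additive pooling, the (generalized) tensor product degenerates into an operation that is \emph{additive at the level of individual entries}, which severely constrains the tensors the networks can produce. Concretely, since average pooling with ReLU is expressively equivalent to sum pooling with ReLU (Section~\ref{sec:nets2tens}), it suffices to refute universality for sum pooling, whose activation--pooling operator $g(a,b)=\max\{a,0\}+\max\{b,0\}$ is associative and commutative; hence eqs.~\ref{eq:gen_cp_decomp} and~\ref{eq:gen_ht_decomp} describe the relevant grid tensors. The elementary observation driving everything is that for this $g$ and any tensors $\A\in\R^{M_1{\times\cdots\times}M_P}$, $\B\in\R^{M_{P+1}{\times\cdots\times}M_{P+Q}}$,
\[
\left(\A\otimesg\B\right)_{d_1{\ldots}d_{P+Q}}=\max\{\A_{d_1{\ldots}d_P},0\}+\max\{\B_{d_{P+1}{\ldots}d_{P+Q}},0\},
\]
a sum of a function of the first $P$ modes and a function of the last $Q$ modes.

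Using this, I would first show that every grid tensor realizable by either network is \emph{$2$-separable}, i.e.\ of the form $\mathcal{T}_{d_1{\ldots}d_N}=G(d_1,\ldots,d_{\nicefrac{N}{2}})+H(d_{\nicefrac{N}{2}+1},\ldots,d_N)$. Iterating the observation along eq.~\ref{eq:gen_cp_decomp} collapses the shallow grid tensor to $\A(h_y^S)_{d_1{\ldots}d_N}=\sum_{i=1}^{N}u^{(i)}_{d_i}$ with $u^{(i)}_{d_i}:=\sum_{z}a_z^y\max\{(F\aaa^{z,i})_{d_i},0\}$, which is even \emph{fully} additive; and applying the observation once to the top line of eq.~\ref{eq:gen_ht_decomp}, without unfolding the inner factors $\phi^{l,j,\gamma}$, immediately gives the deep grid tensor the asserted form with $G=\sum_{\alpha}a^{L,1,y}_\alpha\max\{\phi^{L-1,1,\alpha},0\}$ and analogously for $H$. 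From $2$-separability I would extract an obstruction: freezing every mode except mode $1$ and mode $\nicefrac{N}{2}+1$ leaves an $M\times M$ matrix of the form $\bb\1^\top+\1\cc^\top$, which for $M\geq2$ satisfies the ``mixed second difference'' identity $B_{11}-B_{12}-B_{21}+B_{22}=0$.

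The proof then finishes by exhibiting a score function outside the family: pick distinct templates $\x^{(1)},\x^{(2)}\in\R^s$ and a continuous $h$ equal to $1$ on $(\x^{(1)},\ldots,\x^{(1)})$ and to $0$ on the three further instances obtained by switching the first and/or the $(\nicefrac{N}{2}+1)$-th patch to $\x^{(2)}$ (these four instances are distinct, so a bump function exists). The relevant $2\times2$ slice of $\A(h)$ then has entries $1,0,0,0$, violating the identity above, so $\A(h)$ is not $2$-separable and $h$ is realizable by no shallow or deep ConvNet with ReLU and average pooling --- irrespective of widths, representation functions, or templates. Since the identity $B_{11}-B_{12}-B_{21}+B_{22}=0$ survives pointwise limits, $h$ cannot even be approximated, and neither network is universal. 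The only real content is the structural claim that additive pooling makes the tensor product additive entrywise and that this persists up to the root of the hierarchical decomposition (immediate, since the root merges exactly two subtrees); the rest is bookkeeping, with a little care needed to cover approximation and to keep the vectors $F\aaa^{z,i},\phi^{l,j,\gamma}$ fully arbitrary so that every network width and every representation is captured.
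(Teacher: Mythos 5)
Your proof is correct, and its core idea coincides with the paper's: reduce average to sum pooling, and observe that $g(a,b)=\max\{a,0\}+\max\{b,0\}$ makes the generalized tensor product additive entrywise, so the grid tensor splits as a function of the first half of the modes plus a function of the second half (fully additive in the shallow case). Where you diverge is in how you certify that this structure is restrictive. The paper matricizes and bounds rank: the shallow tensor becomes $\vv\1^\top+\1\uu^\top$ (rank $\leq 2$) and the deep one becomes $V{\odot}O+O{\odot}U$ (rank $\leq 2M^{N/4}$), so generic tensors --- all but a measure-zero set --- are unrealizable. You instead extract the linear "mixed second difference" obstruction $B_{11}-B_{12}-B_{21}+B_{22}=0$ on a $2\times2$ slice and exhibit an explicit continuous bump function violating it. Your route is more elementary (no Kronecker-rank machinery) and more explicit: it produces a concrete witness function and, because the obstruction is a continuous linear functional vanishing on the realizable set, it rules out approximation with a quantitative gap, a point the paper's proof of this claim leaves implicit. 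The paper's route buys the stronger-sounding quantitative statement that realizable grid tensors occupy a measure-zero, low-rank subset --- though your $2$-separability characterization in fact confines them to a proper linear subspace, so the measure-zero conclusion is available to you as well had you wanted to state it. One minor point worth making explicit in a write-up: the decompositions in eq.~\ref{eq:gen_cp_decomp} and~\ref{eq:gen_ht_decomp} tie the number of templates to the number of representation channels $M$, whereas your witness uses only two template points; this is harmless because the additive split holds at the level of the score function itself (not just of a particular grid tensor), so restricting to any grid preserves it.
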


\begin{proof}
Let $\x^{(1)}\ldots\x^{(M)}\in\R^s$ be any templates of choice, and consider grid tensors produced by the generalized CP and HT decompositions (eq.~\ref{eq:gen_cp_decomp} and~\ref{eq:gen_ht_decomp} respectively) with $g(a,b)=\max\{a,0\}+\max\{b,0\}$ (this corresponds to \emph{sum} pooling and ReLU activation, but as stated in sec.~\ref{sec:nets2tens}, sum and average pooling are equivalent in terms of expressiveness).
We will show that such grid tensors, when arranged as matrices, necessarily have low rank.
This obviously implies that they cannot take on any value.
Moreover, since the set of low rank matrices has zero measure in the space of all matrices (see sec.~\ref{sec:analysis:prelim}), the set of values that can be taken by the grid tensors has zero measure in the space of tensors with order $N$ and dimension $M$ in each mode.

In accordance with the above, we complete our proof by showing that with $g(a,b)=\max\{a,0\}+\max\{b,0\}$, the matricized generalized CP and HT decompositions (eq.~\ref{eq:mat_gen_cp_decomp} and~\ref{eq:mat_gen_ht_decomp} respectively) give rise to low-rank matrices.
For the matricized generalized CP decomposition (eq.~\ref{eq:mat_gen_cp_decomp}), corresponding to the shallow ConvNet, we have with $g(a,b)=\max\{a,0\}+\max\{b,0\}$:
$$\left[\A\left(h_y^S\right)\right]=\vv\1^\top+\1\uu^\top$$
where $\1$ is the vector in $\R^{M^{N/2}}$ holding $1$ in all entries, and $\vv,\uu\in\R^{M^{N/2}}$ are defined as follows:
\beas
\vv&:=&\sum_{z=1}^{Z}a_z^y\cdot\max\left\{(F\aaa^{z,1})\odotg\cdots\odotg(F\aaa^{z,N-1}),0\right\}\\
\uu&:=&\sum_{z=1}^{Z}a_z^y\cdot\max\left\{(F\aaa^{z,2})\odotg\cdots\odotg(F\aaa^{z,N}),0\right\}
\eeas
Obviously the matrix $\left[\A\left(h_y^S\right)\right]\in\R^{M^{N/2}{\times}M^{N/2}}$ has rank $2$ or less.

Turning to the matricized generalized HT decomposition (eq.~\ref{eq:mat_gen_ht_decomp}), which corresponds to the deep ConvNet, we have with $g(a,b)=\max\{a,0\}+\max\{b,0\}$:
$$\left[\A\left(h_y^D\right)\right]=V{\odot}O+O{\odot}U$$
where $\odot$ is the standard Kronecker product (see definition in sec.~\ref{sec:analysis:matricization}), $O\in\R^{M^{N/4}{\times}M^{N/4}}$ is a matrix holding $1$ in all entries, and the matrices $V,U\in\R^{M^{N/4}{\times}M^{N/4}}$ are given by:
\beas
V&:=&\sum_{\alpha=1}^{r_{L-1}}a_\alpha^{L,1,y}\max\left\{\left[\phi^{L-1,1,\alpha}\right],0\right\} \\
U&:=&\sum_{\alpha=1}^{r_{L-1}}a_\alpha^{L,1,y}\max\left\{\left[\phi^{L-1,2,\alpha}\right],0\right\}
\eeas
The rank of $O$ is obviously $1$, and since the Kronecker product multiplies ranks, \ie $rank(A{\odot}B)=rank(A){\cdot}rank(B)$ for any matrices $A$ and $B$, we have that the rank of $\left[\A\left(h_y^D\right)\right]\in\R^{M^{N/2}{\times}M^{N/2}}$ is at most $2{\cdot}M^{N/4}$.
In particular, $\left[\A\left(h_y^D\right)\right]$ cannot have full rank.
\end{proof}

One may wonder if perhaps the non-universality of ReLU activation and average pooling is merely an artifact of the conv operator in our ConvNets having $1\times1$ receptive field.
Apparently, as the following claim shows, expanding the receptive field does not remedy the situation, and indeed non-universality is an inherent property of convolutional rectifier networks with average pooling:

\begin{claim} \label{claim:avg_shallow_wxh_non_universal}
Consider the network illustrated in fig.~\ref{fig:shallow_wxh_convnet}, obtained by expanding the conv receptive field in the shallow ConvNet from $1\times1$ to $w{\times}h$, where $w{\cdot}h<\nicefrac{N}{2}+1-\log_M{N}$ (conv windows cover less than half the feature maps that precede them).
Such a network, when equipped with ReLU activation and average pooling, is not universal.
\end{claim}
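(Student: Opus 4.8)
The plan is to follow the strategy of the proof of Claim~\ref{claim:avg_non_universal}: show that, no matter which templates are chosen, the matricization of every grid tensor realizable by this network has rank strictly below the full value $M^{N/2}$, so that the realizable matricized grid tensors form a zero-measure subset of $\R^{M^{N/2}\times M^{N/2}}$ (the rank-deficient matrices), which by itself precludes universality. As before, average pooling is replaced by the equivalent sum pooling, so the effective per-unit operation is $z\mapsto\max\{z,0\}$, and the conv is taken unshared (this is the more expressive case, so non-universality here implies it for the shared case as well).

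First I would write the grid tensor of the score function explicitly. With a $w\times h$ receptive field the conv weight vectors become matrices $A^{z,i}\in\R^{M\times wh}$, so, on a grid instance $(\x^{(d_1)},\ldots,\x^{(d_N)})$, the pre-activation of hidden channel $z$ at spatial location $i$ equals $q^{z,i}(d)=\sum_{j=1}^{wh}\big(FA^{z,i}_{\cdot,j}\big)_{d_{\rho(j;i)}}$, with $F$ as in eq.~\ref{eq:F}. Applying ReLU, global sum pooling, and the final dense layer gives
\[
\A\left(h_y^S\right)=\sum_{i=1}^{N}\psi^i,\qquad \psi^i:=\sum_{z}a_z^y\cdot\max\{q^{z,i},0\}.
\]
The key point, and the place where the smallness of the receptive field enters, is that $q^{z,i}$ — hence $\psi^i$ — depends on the index tuple $(d_1,\ldots,d_N)$ only through the at most $wh$ coordinates lying in the window $W_i:=\{\rho(j;i):j\in[wh]\}$.

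Next I would bound the rank of the matricization of each summand. Writing $W_i^{\textit{odd}}$ and $W_i^{\textit{even}}$ for the odd- and even-indexed coordinates inside $W_i$, the matrix $[\psi^i]$ has at most $M^{|W_i^{\textit{odd}}|}$ distinct rows and at most $M^{|W_i^{\textit{even}}|}$ distinct columns, hence $rank[\psi^i]\leq M^{\min\{|W_i^{\textit{odd}}|,\,|W_i^{\textit{even}}|\}}\leq M^{\lfloor wh/2\rfloor}$. Summing over the $N$ spatial locations and using sub-additivity of rank (sec.~\ref{sec:analysis:prelim}) gives $rank[\A(h_y^S)]\leq N\cdot M^{\lfloor wh/2\rfloor}$. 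The hypothesis $wh<\nicefrac{N}{2}+1-\log_M N$ is exactly calibrated to turn this into $rank[\A(h_y^S)]<M^{N/2}$: it suffices to check $\log_M N+\lfloor wh/2\rfloor<\nicefrac{N}{2}$, and when $wh\geq2$ the hypothesis already forces $\log_M N<\nicefrac{N}{2}-1$, which combined with $\lfloor wh/2\rfloor\leq wh/2<\nicefrac{N}{4}+\frac12-\frac12\log_M N$ yields the claimed inequality; the remaining case $wh=1$ is Claim~\ref{claim:avg_non_universal}.

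Since this bound holds for every choice of templates and every setting of the weights, all realizable matricized grid tensors lie in the zero-measure set of rank-deficient $M^{N/2}\times M^{N/2}$ matrices; as argued in the proof of Claim~\ref{claim:avg_non_universal}, a score function whose grid tensor is of full rank — which is the generic situation among order-$N$, dimension-$M$ tensors — can then be neither realized nor approximated, so the network is not universal. The only genuinely delicate step is the first one: verifying that widening the receptive field to $w\times h$ still leaves each output location dependent on fewer than $N/2$ input patches (the ``windows cover less than half the feature maps'' condition); once that is pinned down, the matricization-rank estimate and the accompanying arithmetic are routine.
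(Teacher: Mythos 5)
Your proof is correct, but it follows a different route from the paper's. The paper writes the grid tensor as $\sum_{i=1}^{N}p_i(\B^{i}\otimes\OO)$ with $\B^i$ of order $w{\cdot}h$ and $\OO$ an all-ones tensor, and then works entirely with \emph{tensor} rank, invoking the facts listed in its preliminaries: each summand has tensor rank at most $M^{w{\cdot}h-1}$ (the generic bound for an order-$w{\cdot}h$ tensor, preserved under tensor product with a rank-one tensor and under mode permutation), sub-additivity gives $rank(\A(h_y^{S(w{\times}h)}))\leq N{\cdot}M^{w{\cdot}h-1}<M^{\nicefrac{N}{2}}$, and this is compared against the fact that generic order-$N$ tensors have rank at least $M^{\nicefrac{N}{2}}$. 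You instead stay with \emph{matricization} rank throughout (as in the proof of claim~\ref{claim:avg_non_universal} for the $1\times1$ case), bounding $rank[\psi^i]$ by counting distinct rows and columns: since $\psi^i$ depends only on the coordinates in the window $W_i$, the matricization has at most $M^{|W_i^{\textit{odd}}|}$ distinct rows and $M^{|W_i^{\textit{even}}|}$ distinct columns, giving $rank[\A(h_y^{S(w{\times}h)})]\leq N{\cdot}M^{\lfloor w{\cdot}h/2\rfloor}$. Both arguments are sound and your arithmetic reconciling this with the stated hypothesis checks out (for $w{\cdot}h\geq2$ one can get there even faster via $\lfloor w{\cdot}h/2\rfloor\leq w{\cdot}h-1<\nicefrac{N}{2}-\log_M N$). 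What each buys: your version is more elementary (it needs only matrix rank and sub-additivity, not the tensor-rank facts) and is quantitatively sharper, since $M^{\lfloor w{\cdot}h/2\rfloor}\leq M^{w{\cdot}h-1}$, so it would actually establish non-universality under the weaker hypothesis $w{\cdot}h<N-2\log_M N$ roughly; the paper's version is a two-line application of the tensor-rank toolbox it has already set up and keeps the statement in terms of tensor rank, which is the currency used elsewhere in that section.
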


\begin{proof}
Compare the original shallow ConvNet (fig.~\ref{fig:shallow_convnet}) to the shallow ConvNet with expanded receptive field that we consider in this claim (fig.~\ref{fig:shallow_wxh_convnet}).
The original shallow ConvNet has $1\times1$ receptive field, with conv entry in location $i\in[N]$ and channel $z\in[Z]$ assigned through a cross-channel linear combination of the representation entries in the same location, the combination weights being $\aaa^{z,i}\in\R^M$.
In the shallow ConvNet with receptive field expanded to $w{\times}h$, linear combinations span multiple locations.
In particular, conv entry in location $i$ and channel $z$ is now assigned through a linear combination of the representation entries at all channels that lie inside a spatial window revolving around $i$.
We denote by $\{\rho(j;i)\}_{j\in[w{\cdot}h]}$ the locations comprised by this window.
More specifically, $\rho(j;i)$ is the $j$'th location in the window, and the linear weights that correspond to it are held in the $j$'th column of the weight matrix $A^{z,i}\in\R^{M{\times}w{\cdot}h}$.
We assume for simplicity that conv windows stepping out of bounds encounter zero padding
\footnote{
Modifying our proof to account for different padding schemes (such as duplication or no padding at all) is trivial~--~we choose to work with zero padding merely for notational convenience.
}, 
and adhere to the convention under which indexing the row of a matrix with $d_{\rho(j;i)}$ produces zero when location $j$ of window $i$ steps out of bounds.

We are interested in the case of ReLU activation ($\sigma(z)=\max\{0,z\}$) and average pooling ($P\{c_j\}=\mean\{c_j\}$).
Under this setting, for any selected templates $\x^{(1)}\ldots\x^{(M)}\in\R^s$, the grid tensor of $h_y^{S(w{\times}h)}$~--~network's $y$'th score function, is given by:
$$\A(h_y^{S(w{\times}h)})_{d_1,\ldots,d_N}=\sum_{i=1}^{N}\B^{i}_{d_{\rho(1;i)},\ldots,d_{\rho(w{\cdot}h;i)}}$$
where for every $i\in[N]$, $\B^i$ is a tensor of order $w{\cdot}h$ and dimension $M$ in each mode, defined by:
$$\B^i_{c_1,\ldots,c_{w{\cdot}h}}=\sum_{z=1}^{Z}\frac{a_z^y}{N}\max\left\{\sum_{j=1}^{w{\cdot}h}(F A^{z,i})_{c_j,j},0\right\}$$
Let $\OO$ be a tensor of order $N-w{\cdot}h$ and dimension $M$ in each mode, holding $1$ in all entries.
We may write:
\be
\A(h_y^{S(w{\times}h)})=\sum_{i=1}^{N}p_i(\B^{i}\otimes\OO)
\label{eq:shallow_wxh_convnet_grid_tensor}
\ee
where for every $i\in[N]$, $p_i(\cdot)$ is an appropriately chosen operator that permutes the modes of an order-$N$ tensor.

We now make use of some known facts related to tensor rank (see sec.~\ref{sec:analysis:prelim}), in order to show that eq.~\ref{eq:shallow_wxh_convnet_grid_tensor} is not universal, \ie that there are many tensors which cannot be realized by $\A(h_y^{S(w{\times}h)})$.
Being tensors of order $w{\cdot}h$ and dimension $M$ in each mode, the ranks of $\B^1\ldots\B^N$ are bounded above by $M^{w{\cdot}h-1}$.
Since $\OO$ is an all-$1$ tensor, and since permuting modes does not alter rank, we have: $rank(p_i(\B^{i}\otimes\OO)){\leq}M^{w{\cdot}h-1}~\forall{i\in[N]}$.
Finally, from sub-additivity of the rank we get: $rank(\A(h_y^{S(w{\times}h)})){\leq}N{\cdot}M^{w{\cdot}h-1}$.
Now, we know by assumption that $w{\cdot}h<\nicefrac{N}{2}+1-\log_M{N}$, and this implies: $rank(\A(h_y^{S(w{\times}h)}))<M^{\nicefrac{N}{2}}$.
Since there exist tensors of order $N$ and dimension $M$ in each mode having rank at least $M^{\nicefrac{N}{2}}$ (actually only a negligible set of tensors do not meet this), eq.~\ref{eq:shallow_wxh_convnet_grid_tensor} is indeed not universal.
That is to say, the shallow ConvNet with conv receptive field expanded to $w{\times}h$ (fig.~\ref{fig:shallow_wxh_convnet}) cannot realize all grid tensors on the templates $\x^{(1)}\ldots\x^{(M)}$.
\end{proof}

We conclude this subsection by noting that the non-universality result in claim~\ref{claim:avg_shallow_wxh_non_universal} does \emph{not} contradict the known universality of shallow (single hidden layer) fully-connected neural networks.
Indeed, a shallow fully-connected network corresponds to the ConvNet illustrated in fig.~\ref{fig:shallow_wxh_convnet}, with conv receptive field covering the entire spatial extent ($w{\cdot}h=N$), thereby effectively removing the pooling operator (assuming the latter realizes the identity on singletons).
In claim~\ref{claim:shallow_mlp_universal} below we show that such a network, when equipped with ReLU activation, is universal.
On the other hand, in claim~\ref{claim:avg_shallow_wxh_non_universal} we assumed that the ConvNet's receptive field covers less than half the spatial extent ($w{\cdot}h<\nicefrac{N}{2}+1-\log_M{N}$), and have shown that with ReLU activation and average pooling, this leads to non-universality.
Loosely speaking, our findings imply that for networks with ReLU activation, which are known to be universal when fully-connected, introducing locality disrupts universality with average pooling (and maintains it with max pooling).

\begin{claim} \label{claim:shallow_mlp_universal}
Assume that there exist covering templates $\x^{(1)}\ldots\x^{(M)}$, and corresponding representation functions $f_{\theta_1}{\ldots}f_{\theta_M}$ leading to a matrix $F$ (eq.~\ref{eq:F}) that has non-recurring rows and a constant non-zero column
\footnote{
The assumption that such representation functions exist differs from our usual non-degeneracy assumption.
The latter requires $F$ to be non-singular, whereas here we pose the weaker requirement of $F$ having non-recurring rows.
On the other hand, here we also demand that $F$ have a constant non-zero column, \ie~that there be a representation function $f_{\theta_d}$ such that $f_{\theta_d}(\x^{(1)})=\cdots=f_{\theta_d}(\x^{(M)})=c\neq0$.
In claim~\ref{claim:F_inv_f4x} we showed that standard neurons meet the non-degeneracy assumption.
A slight modification to its proof shows that they also meet the assumption made here.
Namely, if we modify the constructions for the cases of ReLU activation and sigmoidal activation by setting $f_{\theta_1}(\x)=\sigma(\0^\top\x+1)$ and $f_{\theta_1}(\x)=\sigma(\0^\top\x+\alpha)$ respectively, we get matrices $F$ that are not only non-singular, but also have a constant non-zero column.
}.
Consider the fully-connected network illustrated in fig.~\ref{fig:shallow_mlp}, obtained by expanding the conv receptive field in the shallow ConvNet to cover the entire spatial extent.
Such a network, when equipped with ReLU activation, is universal.
\end{claim}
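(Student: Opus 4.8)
The plan is to write down the grid tensor realized by the fully-connected network of fig.~\ref{fig:shallow_mlp}, reduce the problem to scalar piecewise-linear interpolation, and finish via (the exact-interpolation form of) one-hidden-layer ReLU universality. The two hypotheses on $F$ play distinct roles: non-recurring rows guarantee that the $M^N$ points of the grid are mapped by the representation layer to \emph{distinct} vectors (so that arbitrary values can be interpolated there), while the constant non-zero column is needed to synthesize hidden-layer \emph{biases} --- our conv operators carry no bias, and without biases a ReLU network can only produce positively-homogeneous functions, hence cannot be universal. Note that once the receptive field covers the whole spatial extent, the pooling operator acts on a singleton and degenerates to the identity, so the choice of pooling (max, average, \emph{etc.}) is irrelevant to the statement.

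First I would record the grid tensor. With full receptive field and trivial pooling, hidden channel $z$ computes $\sigma\!\big(\sum_{i=1}^{N}\sum_{k=1}^{M}W^{z}_{k,i}\,f_{\theta_k}(\x_i)\big)$ for a free matrix $W^{z}\in\R^{M\times N}$ (spatial weight sharing is immaterial, as the chosen $W^z$ need not vary across locations), and $h_y$ is the linear combination of these with output weights $a^{y}_{z}$. Writing $\xi(d_1,\dots,d_N):=(F_{d_1,\cdot},\dots,F_{d_N,\cdot})\in\R^{MN}$ and $\uu^{z}:=\mathrm{vec}(W^{z})$,
$$\A(h_y)_{d_1\dots d_N}=\sum_{z=1}^{Z}a^{y}_{z}\,\sigma\!\left(\inprod{\uu^{z}}{\xi(d_1,\dots,d_N)}\right),$$
so the task reduces to: given an arbitrary target tensor $\T$ of order $N$ and dimension $M$ in each mode, choose $Z$ and $\{a^y_z\},\{\uu^z\}$ with $\A(h_y)=\T$. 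By the covering assumption this yields universality, exactly as in claims~\ref{claim:prod_universal} and~\ref{claim:max_universal}.

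Next I would check that the $M^{N}$ vectors $\xi(d_1,\dots,d_N)$ are pairwise distinct: if two of them agree then $F_{d_i,\cdot}=F_{d'_i,\cdot}$ for every $i$, and non-recurring rows force $d_i=d'_i$. Then pick a generic $\vv\in\R^{MN}$ so that the scalars $t_{(d_1,\dots,d_N)}:=\inprod{\vv}{\xi(d_1,\dots,d_N)}$ are all distinct --- the set of bad $\vv$ is a finite union of hyperplanes, hence of measure zero, by the same argument used in the proof of claim~\ref{claim:F_inv_f4x}. Relabelling the grid points so that $t_1<\dots<t_n$ ($n=M^N$) with corresponding targets $\tau_j$, it remains to build a continuous piecewise-linear $\psi:\R\to\R$ with $\psi(t_j)=\tau_j$ of the form $\psi(t)=\sum_z a_z\,\sigma(w_z t+b_z)$: take $\psi$ to be the linear interpolant through the nodes, constant outside $[t_1,t_n]$; it equals $\tau_1+\sum_{j=1}^{n}r_j\sigma(t-t_j)$ for suitable $r_j$, and the constant $\tau_1$ costs two extra ReLU units with zero input weight (using $\sigma(a)-\sigma(-a)=a$), giving $Z\le M^N+2$. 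Finally I would reinstate the biases: since column $k_0$ of $F$ is constantly $c\neq0$, the coordinate of $\xi(d_1,\dots,d_N)$ indexed by the pair $(i{=}1,k{=}k_0)$ equals $c$ for every grid point, so setting $\uu^{z}:=w_z\vv+(b_z/c)\,\e_{(1,k_0)}$ (with $\e_{(1,k_0)}$ the standard basis vector of $\R^{MN}$ in that coordinate) yields $\inprod{\uu^z}{\xi(d_1,\dots,d_N)}=w_z t_{(d_1,\dots,d_N)}+b_z$, hence $\A(h_y)_{(d_1,\dots,d_N)}=\psi(t_{(d_1,\dots,d_N)})=\T_{(d_1,\dots,d_N)}$, as desired.

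The grid-tensor bookkeeping, the degenerate-pooling reduction, and the scalar interpolation are all routine; the one point that needs care is the bias synthesis, where one must observe that reusing the constant coordinate does not collide any grid points --- which holds precisely because distinctness of the rows of $F$ cannot be caused by the constant column alone. I do not expect a genuine obstacle here: the real content of the claim is that, under the two stated assumptions, the representation layer embeds the grid into $\R^{MN}$ injectively and in a way that still admits a synthesized affine offset, after which standard one-hidden-layer ReLU interpolation closes the argument.
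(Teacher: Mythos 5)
Your proposal is correct and follows essentially the same route as the paper: reduce to realizing an arbitrary grid tensor, use the non-recurring rows of $F$ to embed the $M^N$ grid points injectively, use the constant non-zero column to synthesize hidden-unit biases, and finish with a one-dimensional ReLU piecewise-linear interpolation along a generic separating direction (which is exactly what the paper's lemma~\ref{lemma:piecewise_affine_universal} provides, with explicit coefficients verified by induction). The only cosmetic difference is ordering --- you project onto the generic direction before splicing in the bias coordinate, whereas the paper strips the constant column first and then invokes the lemma --- and this changes nothing of substance.
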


\begin{figure}
\includegraphics[width=\columnwidth]{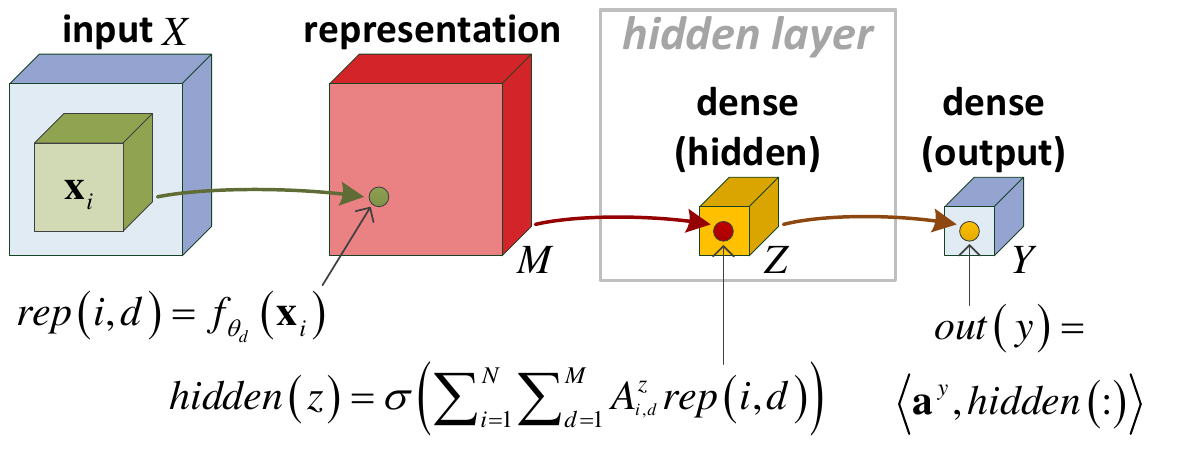}
\caption{
Shallow fully-connected network obtained by expanding the conv receptive field in the shallow ConvNet to cover the entire spatial extent.
The hidden layer consists of a $Z$-channel dense linear operator weighted by $\{A^z\in\R^{N{\times}M}\}_{z\in[Z]}$, and followed by point-wise activation $\sigma(\cdot)$.
The resulting $Z$-dimensional vector is mapped to $Y$ network outputs through a dense linear operator weighted by $\{\aaa^y\in\R^Z\}_{y\in[Y]}$.
Best viewed in color.
}
\label{fig:shallow_mlp}
\end{figure}

\begin{proof}
Let $h_y^{S(fc)}$ be the $y$'th score function of our shallow fully-connected network (fig.~\ref{fig:shallow_mlp}) when equipped with ReLU activation ($\sigma(z)=\max\{0,z\}$).
We would like to show that $\A(h_y^{S(fc)})$~--~the grid tensor of $h_y^{S(fc)}$ \wrt the covering templates $\x^{(1)}\ldots\x^{(M)}$, may take on any value when hidden and output weights ($\{A^z\}_{z\in[Z]}$ and $\aaa^y$ respectively) are chosen appropriately.

For any $d_1{\ldots}d_N\in[M]$, define the following matrix:
$$F^{(d_1{\ldots}d_N)}:=
\begin{bmatrix}
f_{\theta_1}(\x^{(d_1)}) & \cdots & f_{\theta_M}(\x^{(d_1)}) \\
\vdots & \ddots & \vdots \\
f_{\theta_1}(\x^{(d_N)}) & \cdots & f_{\theta_M}(\x^{(d_N)})
\end{bmatrix}
\in\R^{N{\times}M}$$
In words, $F^{(d_1{\ldots}d_N)}$ is the matrix obtained by taking rows $d_1{\ldots}d_N$ from $F$ (recurrence allowed), and stacking them one on top of the other.
It holds that:
$$\A(h_y^{S(fc)})_{d_1{\ldots}d_N}=\sum_{z=1}^{Z}a_{z}^{y}\max\left\{0,\inprod{F^{(d_1{\ldots}d_N)}}{A^z}\right\}$$
where $\inprod{\cdot}{\cdot}$ stands for the inner-product operator, \ie $\inprod{F^{(d_1{\ldots}d_N)}}{A^z}:=\sum_{i=1}^{N}\sum_{d=1}^{M}F_{i,d}^{(d_1{\ldots}d_N)}A_{i,d}^{z}$.

By assumption $F$ has a constant non-zero column.
This implies that there exist $j\in[M],c\neq0$ such that for any $d_1{\ldots}d_N\in[M]$, all entries in column $j$ of $F^{(d_1{\ldots}d_N)}$ are equal to~$c$.
For every $d_1{\ldots}d_N\in[M]$ and $z\in[Z]$, denote by $\tilde{F}^{(d_1{\ldots}d_N)}$ and $\tilde{A}^z$ the matrices obtained by removing the $j$'th column from $F^{(d_1{\ldots}d_N)}$ and $A^z$ respectively.
Defining $\bb\in\R^Z$ to be the vector whose $z$'th entry is given by $b_z=c\cdot\sum_{i=1}^{N}A_{i,j}^z$, we may write:
$$\A(h_y^{S(fc)})_{d_1{\ldots}d_N}=\sum_{z=1}^{Z}a_{z}^{y}\max\left\{0,\inprod{\tilde{F}^{(d_1{\ldots}d_N)}}{\tilde{A}^z}+b_z\right\}$$
noting that for every $z\in[Z]$, $\tilde{A}^z$ and $b_z$ may take on any values with proper choice of $A^z$.
Since by assumption $F$ has non-recurring rows, and since all rows hold the same value ($c$) in their $j$'th entry, we have that $\tilde{F}^{(d_1{\ldots}d_N)}\neq\tilde{F}^{(d'_1{\ldots}d'_N)}$ for $(d_1{\ldots}d_N)\neq(d'_1{\ldots}d'_N)$.
An application of lemma~\ref{lemma:piecewise_affine_universal} now shows that when $Z{\geq}M^N$, any value for the grid tensor $\A(h_y^{S(fc)})$ may be realized with proper assignment of $\{\tilde{A}^z\}_{z\in[Z]}$, $\bb$ and $\aaa^y$.
Since $\{\tilde{A}^z\}_{z\in[Z]}$ and $\bb$ may be set arbitrarily through $\{A^z\}_{z\in[Z]}$, we get that with proper choice of hidden and output weights ($\{A^z\}_{z\in[Z]}$ and $\aaa^y$ respectively), the grid tensor of our network \wrt the covering templates may take on any value, precisely meaning that universality holds.
\end{proof}

\begin{lemma} \label{lemma:piecewise_affine_universal}
Let $\vv_1\ldots\vv_k\in\R^D$ be distinct vectors ($\vv_i\neq\vv_j$ for $i{\neq}j$), and $c_1{\ldots}c_k\in\R$ be any scalars.
Then, there exist $\w_1\ldots\w_k\in\R^D$, $\bb\in\R^k$ and $\aaa\in\R^k$ such that $\forall{i\in[k]}$:
\be
\sum_{j=1}^{k}a_j\max\{0,\w_j^\top\vv_i+b_j\}=c_i
\label{eq:piecewise_affine_universal}
\ee
\end{lemma}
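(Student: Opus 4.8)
The plan is to reduce the $D$-dimensional interpolation problem to a one-dimensional one via a generic linear functional, and then to exhibit a one-hidden-layer ReLU network whose "hidden-unit versus data-point" matrix is triangular, so that the output weights are obtained by solving an invertible linear system.

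First I would find a vector $\uu\in\R^D$ for which the scalars $\uu^\top\vv_1,\ldots,\uu^\top\vv_k$ are pairwise distinct. This goes exactly as in the first part of the proof of claim~\ref{claim:F_inv_f4x}: a vector $\uu$ fails this only if it is orthogonal to one of the finitely many nonzero vectors $\{\vv_i-\vv_j : i\neq j\}$, and the union of the corresponding hyperplanes is a finite union of zero sets of nonzero polynomials, hence has measure zero and cannot cover $\R^D$. Fix such a $\uu$ and set $t_i:=\uu^\top\vv_i$. Since permuting the index set permutes $\vv_1,\ldots,\vv_k$ and $c_1,\ldots,c_k$ simultaneously and leaves the statement unchanged, I may assume without loss of generality that $t_1<t_2<\cdots<t_k$.

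Next I would choose breakpoints and weights explicitly. Let $s_1$ be any real number with $s_1<t_1$, and for each $j\in\{2,\ldots,k\}$ let $s_j$ be any real number with $t_{j-1}<s_j<t_j$; then $s_j<t_i$ for all $i<j$ while $s_j<t_j$. Take $\w_j:=\uu$ and $b_j:=-s_j$ for every $j\in[k]$, so that $\w_j^\top\vv_i+b_j=t_i-s_j$. Then $\max\{0,\w_j^\top\vv_i+b_j\}$ vanishes whenever $i<j$ and equals $t_i-s_i>0$ when $i=j$. Hence the matrix $G\in\R^{k\times k}$ with entries $G_{ij}=\max\{0,\w_j^\top\vv_i+b_j\}$ is lower triangular with strictly positive diagonal, so $\det G=\prod_{i=1}^k(t_i-s_i)\neq0$ and $G$ is invertible. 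The equalities required by the lemma (eq.~\ref{eq:piecewise_affine_universal}) amount to $G\aaa=\cc$ with $\cc:=(c_1,\ldots,c_k)^\top$, so I would simply set $\aaa:=G^{-1}\cc$, and together with the $\w_j$ and $b_j$ already fixed this proves the claim. There is no genuine obstacle here: the construction is completely explicit once the generic direction $\uu$ is in hand, and the only points demanding a little care are verifying that a generic projection separates the $\vv_i$ (handled by the measure-zero argument above) and noting that the relabeling is harmless.
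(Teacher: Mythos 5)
Your proof is correct and takes essentially the same route as the paper: project onto a generic direction $\uu$ separating the $\vv_i$, order the projections, and pick biases so that the hidden-unit activation matrix is lower triangular with positive diagonal. The paper merely solves the resulting triangular system by explicit forward substitution (its recursive formula for $a_j$) instead of writing $\aaa=G^{-1}\cc$, and places its breakpoints at $t_{j-1}$ rather than strictly between $t_{j-1}$ and $t_j$ — cosmetic differences only.
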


\begin{proof}
As shown in the proof of claim~\ref{claim:F_inv_f4x}, for distinct $\vv_1\ldots\vv_k\in\R^D$ there exists a vector $\uu\in\R^D$ such that $\uu^\top\vv_i\neq\uu^\top\vv_j$ for all $1{\leq}i<j{\leq}k$.
We assume without loss of generality that $\uu^\top\vv_1<\ldots<\uu^\top\vv_k$, and set $\w_1{\ldots}\w_k$, $\bb$ and $\aaa$ as follows:
\begin{itemize}
\item $\w_1=\cdots=\w_k=\uu$
\item $b_1=-\uu^\top\vv_1+1$
\item $b_j=-\uu^\top\vv_{j-1}$ for $j=2{\ldots}k$
\item $a_1=c_1$
\item $a_j=\frac{c_j-c_{j-1}}{\uu^\top\vv_j-\uu^\top\vv_{j-1}}-\sum_{t=1}^{j-1}a_t$ for $j=2{\ldots}k$
\end{itemize}
To complete the proof, we show below that this assignment meets the condition in eq.~\ref{eq:piecewise_affine_universal} for $i=1{\ldots}k$.

\medskip

The fact that:
$$\w_j^\top\vv_1+b_j=\left\{
	\begin{array}{ll}
		\uu^\top\vv_1-\uu^\top\vv_1+1=1  & ,j=1 \\
		\uu^\top\vv_1-\uu^\top\vv_{j-1}\leq0 & ,2{\leq}j{\leq}k
	\end{array}
\right.$$
implies that the condition in eq.~\ref{eq:piecewise_affine_universal} indeed holds for $i=1$:
$$\sum_{j=1}^{k}a_j\max\{0,\w_j^\top\vv_1+b_j\}=a_1{\cdot}1+\sum_{j=1}^{k}a_j{\cdot}0=a_1=c_1$$
For $i>1$ we have:
$$\w_j^\top\vv_i+b_j=\left\{
	\begin{array}{ll}
		\uu^\top\vv_i-\uu^\top\vv_1+1>0     & ,j=1 \\
		\uu^\top\vv_i-\uu^\top\vv_{j-1}>0    & ,2{\leq}j{\leq}i \\
		\uu^\top\vv_i-\uu^\top\vv_{j-1}\leq0 & ,i<j{\leq}k
	\end{array}
\right.$$
which implies:
\beas
&\sum_{j=1}^{k}a_j\max\{0,\w_j^\top\vv_i+b_j\}=& \\
&a_1(\uu^\top\vv_i-\uu^\top\vv_1+1)+\sum_{j=2}^{i}a_j(\uu^\top\vv_i-\uu^\top\vv_{j-1})&
\eeas
Comparing this to the same expression with $i$ replaced by $i-1$ we obtain:
\beas
&\sum_{j=1}^{k}a_j\max\{0,\w_j^\top\vv_i+b_j\}=& \\
&\sum_{j=1}^{k}a_j\max\{0,\w_j^\top\vv_{i-1}+b_j\}+& \\
&(\uu^\top\vv_i-\uu^\top\vv_{i-1})\sum_{j=1}^{i}a_j&
\eeas
Now, if we follow an inductive argument and assume that the condition in eq.~\ref{eq:piecewise_affine_universal} holds for $i-1$, \ie that $\sum_{j=1}^{k}a_j\max\{0,\w_j^\top\vv_{i-1}+b_j\}=c_{i-1}$, we get:
\beas
&\sum_{j=1}^{k}a_j\max\{0,\w_j^\top\vv_i+b_j\}=& \\
&c_{i-1}+(\uu^\top\vv_i-\uu^\top\vv_{i-1})\sum_{j=1}^{i}a_j&
\eeas
Plugging in the definition $a_i=\frac{c_i-c_{i-1}}{\uu^\top\vv_i-\uu^\top\vv_{i-1}}-\sum_{j=1}^{i-1}a_j$ gives:
\beas
&\sum_{j=1}^{k}a_j\max\{0,\w_j^\top\vv_i+b_j\}=& \\
&c_{i-1}+(\uu^\top\vv_i-\uu^\top\vv_{i-1})\frac{c_i-c_{i-1}}{\uu^\top\vv_i-\uu^\top\vv_{i-1}}=c_i&
\eeas
Thus the condition in eq.~\ref{eq:piecewise_affine_universal} holds for $i$ as well.
We have therefore shown by induction that our assignment of $\w_1{\ldots}\w_k$, $\bb$ and $\aaa$ meets the lemma's requirement.
\end{proof}

\subsection{Depth Efficiency} \label{sec:analysis:depth_eff}

The driving force behind deep learning is the expressive power that comes with depth.
It is generally believed that deep networks with non-linear layers efficiently express functions that cannot be efficiently expressed by shallow networks, \ie that would require the latter to have super-polynomial size.
We refer to such scenario as \emph{depth efficiency}.
Being concerned with the minimal size required by a shallow network in order to realize (or approximate) a given function, the question of depth efficiency implicitly assumes universality, \ie that there exists some (possibly exponential) size with which the shallow network is capable of expressing the target function.
\footnote{
While technically it is possible to consider depth efficiency with a non-universal shallow network, in the majority of the cases, particularly in our framework, the shallow network would simply not be able to express a function generated by a deep network, no matter how large we allow it to be.
Arguably, this provides little insight into the complexity of functions brought forth by depth.
}

To the best of our knowledge, at the time of this writing the only work to formally analyze depth efficiency in the context of ConvNets is~\cite{\expresstensors}.
This work focused on convolutional arithmetic circuits, showing that with such networks depth efficiency is \emph{complete}, \ie besides a negligible set, all functions realizable by a deep network enjoy depth efficiency.
We frame this result in our setup:

\begin{claim}[adaptation of theorem~1 in~\cite{\expresstensors}] \label{claim:prod_depth_eff_complete}
Let $f_{\theta_1}{\ldots}f_{\theta_M}$ be any set of linearly independent representation functions for a deep ConvNet (fig.~\ref{fig:convnet} with $L=\log_{2}N$) with linear activation and product pooling.
Suppose we randomize the linear weights ($\aaa^{l,j,\gamma}$) of the network by some continuous distribution.
Then, with probability~1, we obtain score functions that cannot be realized by a shallow ConvNet (fig.~\ref{fig:shallow_convnet}) with linear activation and product pooling if the number of hidden channels in the latter ($Z$) is less than $\min\{r_0,M\}^{\nicefrac{N}{2}}$.
\end{claim}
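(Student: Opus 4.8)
The plan is to exploit the fact that with linear activation and product pooling the activation--pooling operator of eq.~\ref{eq:act_pool_op} is $g(a,b)=a\cdot b$, so $\otimesg$ reduces to the ordinary tensor product and $\odotg$ to the ordinary Kronecker product $\odot$; consequently the deep network's grid tensor is given by the \emph{standard} matricized HT decomposition (eq.~\ref{eq:mat_gen_ht_decomp}) and the shallow network's by the \emph{standard} matricized CP decomposition (eq.~\ref{eq:mat_gen_cp_decomp}). Since the representation functions $f_{\theta_1}{\ldots}f_{\theta_M}$ are linearly independent and (being representation functions) continuous, claim~\ref{claim:F_inv_x4f} provides templates $\x^{(1)}{\ldots}\x^{(M)}$ for which the matrix $F$ of eq.~\ref{eq:F} is non-singular; I fix these templates and work with the grid tensors they induce. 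The argument then splits into an easy \emph{upper bound} $rank[\A(h_y^S)]\le Z$, which is immediate because eq.~\ref{eq:mat_gen_cp_decomp} writes $[\A(h_y^S)]$ as a sum of $Z$ matrices of rank at most $1$, and a generic \emph{lower bound} $rank[\A(h_y^D)]\ge\min\{r_0,M\}^{\nicefrac{N}{2}}$ holding for all weight settings outside a Lebesgue-null set.

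For the lower bound, write $r:=\min\{r_0,M\}$. With $F$ fixed, every entry of $[\A(h_y^D)]$ is a polynomial (in fact multilinear) in the linear weights $\aaa^{l,j,\gamma}$, so each $r^{\nicefrac{N}{2}}\times r^{\nicefrac{N}{2}}$ minor of $[\A(h_y^D)]$ is also such a polynomial, and $rank[\A(h_y^D)]<r^{\nicefrac{N}{2}}$ holds precisely when all these minors vanish. By the polynomial zero-set fact recalled in sec.~\ref{sec:analysis:prelim}, it therefore suffices to exhibit \emph{one} weight assignment at which $rank[\A(h_y^D)]\ge r^{\nicefrac{N}{2}}$: for that assignment some particular minor is nonzero, hence is not the zero polynomial, so its zero set has measure zero, and the ``bad'' weight set --- on which \emph{every} such minor, in particular this one, vanishes --- is contained in it.

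The witness assignment forces the first hidden layer to produce rank-$r$ matrices and makes every higher layer a single-channel selector, so that the Kronecker product propagates rank multiplicatively up the binary tree. Concretely, for the first hidden layer set $\aaa^{0,2j-1,\alpha}=\aaa^{0,2j,\alpha}=\e_\alpha\in\R^M$ and $a_\alpha^{1,j,\gamma}=\indc{\alpha\le r}$ (legitimate since only the $\alpha\le r\le M$ indices carry weight), so that $\phi^{1,j,\gamma}=F\big(\sum_{\alpha=1}^{r}\e_\alpha\e_\alpha^\top\big)F^\top$ has rank exactly $r$ by invertibility of $F$. For every layer $l\ge2$ and for the output, set $a_\alpha^{l,j,\gamma}=\indc{\alpha=1}$, giving $[\phi^{l,j,\gamma}]=[\phi^{l-1,2j-1,1}]\odot[\phi^{l-1,2j,1}]$ and $[\A(h_y^D)]=[\phi^{L-1,1,1}]\odot[\phi^{L-1,2,1}]$. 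Using $rank(A\odot B)=rank(A)\cdot rank(B)$, a one-line induction yields $rank[\phi^{l,j,\gamma}]=r^{2^{l-1}}$, hence $rank[\A(h_y^D)]=r^{2^{L-2}}\cdot r^{2^{L-2}}=r^{2^{L-1}}=r^{\nicefrac{N}{2}}$ (as $N=2^L$), as required.

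Putting the halves together: randomizing the deep network's weights by a continuous distribution lands, with probability $1$, outside the measure-zero bad set, so $rank[\A(h_y^D)]\ge r^{\nicefrac{N}{2}}$. If the resulting score function were realizable by a shallow ConvNet with $Z$ hidden channels, that network would compute the same function and hence produce the same grid tensor on the fixed templates, forcing $Z\ge rank[\A(h_y^S)]=rank[\A(h_y^D)]\ge\min\{r_0,M\}^{\nicefrac{N}{2}}$; therefore whenever $Z<\min\{r_0,M\}^{\nicefrac{N}{2}}$ no such shallow network exists, which is the claim. I expect the only genuinely non-routine step to be the witness construction and the rank bookkeeping in the third paragraph --- getting the first layer to rank $r$ while zeroing out all but one channel above it, and tracking the multiplicative growth of rank through the balanced binary tree. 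The remaining pieces (the CP upper bound, the polynomial/measure-zero reduction, and the final contradiction) are routine once the witness is in hand.
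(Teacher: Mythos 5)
Your proposal is correct and follows essentially the same route as the paper's: fix templates making $F$ non-singular via claim~\ref{claim:F_inv_x4f}, bound $rank[\A(h_y^S)]$ above by $Z$ through the matricized CP decomposition, and show the matricized HT decomposition generically yields rank at least $\min\{r_0,M\}^{\nicefrac{N}{2}}$. The only difference is that the paper delegates both rank bounds to the proof of theorem~1 in the cited prior work, whereas you reconstruct that argument in full --- the polynomial-minor/measure-zero reduction, the explicit witness with a rank-$r$ first layer and single-channel selectors above it, and the multiplicative propagation of rank under the Kronecker product --- and your witness construction and rank bookkeeping are sound.
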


\begin{proof}
Let $\x^{(1)}\ldots\x^{(M)}\in\R^s$ be templates such that $F$ is invertible (existence follows from claim~\ref{claim:F_inv_x4f}).
The deep network generates grid tensors on $\x^{(1)}\ldots\x^{(M)}$ through the standard HT decomposition (eq.~\ref{eq:gen_ht_decomp} with $g(a,b)=a{\cdot}b$).
The proof of theorem~1 in~\cite{\expresstensors} shows that when arranged as matrices, such tensors have rank at least $\min\{r_0,M\}^{\nicefrac{N}{2}}$ almost always, \ie for all weight ($\aaa^{l,j,\gamma}$) settings but a set of (Lebesgue) measure zero.
On the other hand, the shallow network generates grid tensors on $\x^{(1)}\ldots\x^{(M)}$ through the standard CP decomposition (eq.~\ref{eq:gen_cp_decomp} with $g(a,b)=a{\cdot}b$), possibly with a different matrix $F$ (representation functions need not be the same).
Such tensors, when arranged as matrices, are shown in the proof of theorem~1 in~\cite{\expresstensors} to have rank at most $Z$.
Therefore, for them to realize the grid tensors generated by the deep network, we almost always must have $Z\geq\min\{r_0,M\}^{\nicefrac{N}{2}}$.
\end{proof}

We now turn to convolutional rectifier networks, for which depth efficiency has yet to be analyzed.
In sec.~\ref{sec:analysis:universality} we saw that convolutional rectifier networks are universal with max pooling, and non-universal with average pooling.
Since depth efficiency is only applicable to universal architectures, we focus on the former setting.
The following claim establishes existence of depth efficiency for ConvNets with ReLU activation and max pooling:

\begin{claim} \label{claim:max_depth_eff_exist}
There exist weight settings for a deep ConvNet with ReLU activation and max pooling, giving rise to score functions that cannot be realized by a shallow ConvNet with ReLU activation and max pooling if the number of hidden channels in the latter ($Z$) is less than $\min\{r_0,M\}^{\nicefrac{N}{2}}\cdot\frac{2}{M{\cdot}N}$.
\end{claim}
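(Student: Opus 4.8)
The plan is to establish depth efficiency through a rank argument on matricized grid tensors, paralleling the proof of Claim~\ref{claim:prod_depth_eff_complete} but accounting for the non-multilinearity of the activation--pooling operator $g(a,b)=\max\{a,b,0\}$. Concretely, I would (i) produce an upper bound of the form $rank[\A(h_y^S)]\leq Z\cdot\nicefrac{MN}{2}$ valid for \emph{every} shallow ReLU--max configuration (and every choice of its representation functions), and (ii) exhibit one weight setting of the deep ReLU--max network, together with a choice of templates, for which $rank[\A(h_y^D)]\geq\min\{r_0,M\}^{\nicefrac{N}{2}}$. Combining the two, any shallow network with $Z<\min\{r_0,M\}^{\nicefrac{N}{2}}\cdot\tfrac{2}{MN}$ hidden channels satisfies $rank[\A(h_y^S)]<rank[\A(h_y^D)]$ on these templates, hence cannot reproduce the deep network's grid tensor; since equal score functions necessarily agree on every grid of templates, such a shallow network cannot realize the deep network's score function.

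For the shallow upper bound I start from the matricized generalized CP decomposition (eq.~\ref{eq:mat_gen_cp_decomp}) with $g(a,b)=\max\{a,b,0\}$, writing $[\A(h_y^S)]_{i\ell}=\sum_{z=1}^{Z}a_z^y\max\{p^z_i,q^z_\ell,0\}$, where $p^z=(F\aaa^{z,1})\odotg(F\aaa^{z,3})\odotg\cdots\odotg(F\aaa^{z,N-1})$ and $q^z=(F\aaa^{z,2})\odotg\cdots\odotg(F\aaa^{z,N})$ are generalized Kronecker products of $\nicefrac{N}{2}$ vectors in $\R^M$. The key observation is that, since $g$ is associative and $\max\{a,b,0\}=\max\{\max\{a,0\},\max\{b,0\}\}$, the entry $p^z_{(i_1,\ldots,i_{N/2})}$ equals $\max_t\max\{(F\aaa^{z,2t-1})_{i_t},0\}$; in particular every entry of $p^z$ (and likewise of $q^z$) is one of the at most $\nicefrac{MN}{2}$ real numbers $\{\max\{(F\aaa^{z,2t-1})_j,0\}:t\in[\nicefrac{N}{2}],j\in[M]\}$. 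Hence the matrix $(\max\{p^z_i,q^z_\ell\})_{i,\ell}$ (the $0$ inside the max being redundant now, both vectors being nonnegative) has at most $\nicefrac{MN}{2}$ distinct rows --- rows $i,i'$ with $p^z_i=p^z_{i'}$ coincide --- so its rank is at most $\nicefrac{MN}{2}$. Sub-additivity of matrix rank then gives $rank[\A(h_y^S)]\leq Z\cdot\nicefrac{MN}{2}$, independently of the shallow network's weights and representation functions.

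For the deep lower bound I would adapt the construction underlying Theorem~1 of~\cite{\expresstensors} (the result invoked in Claim~\ref{claim:prod_depth_eff_complete}). Pick distinct templates and representation functions so that $F$ is invertible (possible by non-degeneracy), so that each $F\aaa^{0,\cdot,\cdot}$ ranges over all of $\R^M$, and work bottom-up through the matricized generalized HT decomposition (eq.~\ref{eq:mat_gen_ht_decomp}): at the first level realize, with the $r_0$ channels, matrices $[\phi^{1,j,\gamma}]$ of rank $\min\{r_0,M\}$ having a ``diagonal'' (index-matching) support; then argue that a suitable signed combination of generalized Kronecker products of rank-$\rho$ matrices from level $l-1$ yields a rank-$\rho^2$ matrix at level $l$, so that the rank grows multiplicatively to $\min\{r_0,M\}^{\nicefrac{N}{2}}$ at the root. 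The delicate point --- and the main obstacle --- is that, unlike the ordinary Kronecker product, the $\max$-based $\odotg$ does \emph{not} multiply ranks (e.g.\ the $\odotg$-product of two full-rank $2\times2$ matrices can have rank $2$, not $4$), so the multiplicative step cannot be inherited verbatim from~\cite{\expresstensors} and must be done by hand. I would exploit the sign freedom in the coefficients $a_\alpha^{l,j,\gamma}$: for diagonally-supported matrices $A,B$, an identity of the form $A\odotg B-(A\odotg\0)-(\0\odotg B)$ cancels the spurious ``rays'' that $\max$ creates and leaves a diagonal matrix whose support is the product of the two supports, restoring multiplicative rank growth; carrying this through all $\log_2 N$ levels (and checking that the intermediate channel counts $r_1,\ldots,r_{L-1}$ suffice) yields the required deep weight setting.

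Finally, putting the pieces together: with the templates and deep weights fixed above, any shallow ReLU--max network with $Z<\min\{r_0,M\}^{\nicefrac{N}{2}}\cdot\tfrac{2}{MN}$ satisfies $rank[\A(h_y^S)]\leq Z\cdot\nicefrac{MN}{2}<\min\{r_0,M\}^{\nicefrac{N}{2}}\leq rank[\A(h_y^D)]$, so $\A(h_y^S)\neq\A(h_y^D)$ and hence $h_y^S\not\equiv h_y^D$. I expect the shallow bound and this final assembly to be routine; the real work, and the only place where genuinely new ideas beyond~\cite{\expresstensors} are required, is the rank lower bound for the deep ReLU--max network, specifically circumventing the failure of $\odotg$ to be rank-multiplicative.
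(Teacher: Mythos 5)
Your overall strategy is the paper's: matricize the grid tensors, show $rank[\A(h_y^S)]\leq Z\cdot\nicefrac{MN}{2}$ for the shallow network, exhibit deep weights with $rank[\A(h_y^D)]\geq\min\{r_0,M\}^{\nicefrac{N}{2}}$, and compare. Your shallow upper bound is correct and essentially identical to the paper's argument (each generalized Kronecker product of $\nicefrac{N}{2}$ vectors in $\R^M$ has at most $\nicefrac{MN}{2}$ distinct entries, hence each summand of eq.~\ref{eq:mat_gen_cp_decomp} has rank at most $\nicefrac{MN}{2}$; conclude by sub-additivity). The final assembly is also fine.

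The gap is exactly where you place it: the deep lower bound, which you only sketch, and the sketch as given would not deliver the stated constant. First, your cancellation identity $A\odotg B-(A\odotg\0)-(\0\odotg B)$ must be realized inside the generalized HT decomposition, where every summand has the form $\phi^{l-1,2j-1,\alpha}\otimesg\phi^{l-1,2j,\alpha}$ with a \emph{single} channel index $\alpha$ shared by both factors; producing the three required pairs $(A,B)$, $(A,\0)$, $(\0,B)$ therefore costs three channels per cancellation. At level $1$ this means three level-$0$ channels per diagonal entry (one for $\uu\otimesg\uu$, one for $\uu\otimesg\0$, one for $\0\otimesg\uu$), so with $r_0$ representation channels you reach diagonal rank only about $\lfloor r_0/3\rfloor$, and after squaring through $\log_2 N$ levels the bound degrades to $\lfloor r_0/3\rfloor^{\nicefrac{N}{2}}$ --- exponentially weaker than $\min\{r_0,M\}^{\nicefrac{N}{2}}$ whenever $r_0<3M$. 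Second, the same trick forces $r_l\geq3$ in every hidden layer, an assumption the claim does not make (its bound references only $r_0$). Third, the sign bookkeeping (the subtraction yields $-\min\{\cdot,\cdot\}$, so a sign must be flipped at every level to keep the operands nonnegative) is asserted rather than verified. The paper avoids all of this by \emph{not} trying to restore rank-multiplicativity of $\odotg$: it sets $\aaa^{0,j,\gamma}=F^{-1}\bar{\e}_\gamma$ where $\bar{\e}_\gamma$ is the \emph{complement} indicator ($0$ in entry $\gamma$, $1$ elsewhere), uses a single active channel with all-ones weights in every layer $l\geq1$, and observes that the max-based products then collapse in closed form to a matrix equal to $\min\{r_0,M\}$ everywhere except on $\min\{r_0,M\}^{\nicefrac{N}{2}}$ diagonal entries where it equals $\min\{r_0,M\}-1$; that matrix visibly has rank at least $\min\{r_0,M\}^{\nicefrac{N}{2}}$. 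So the missing ingredient is not a level-by-level rank-growth lemma but a single explicit weight assignment whose output matrix can be written down and whose rank is read off directly.
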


\begin{proof}
The proof traverses along the following path.
Letting $\x^{(1)}\ldots\x^{(M)}\in\R^s$ be any distinct templates, we show that when arranged as matrices, grid tensors on $\x^{(1)}\ldots\x^{(M)}$ generated by the shallow network have rank at most $Z{\cdot}\frac{M{\cdot}N}{2}$.
Then, defining $f_{\theta_1}{\ldots}f_{\theta_M}$ to be representation functions for the deep network giving rise to an invertible $F$ (non-degeneracy implies that such functions exist), we show explicit linear weight ($\aaa^{l,j,\gamma}$) settings under which the grid tensors on $\x^{(1)}\ldots\x^{(M)}$ generated by the deep network, when arranged as matrices, have rank at least $\min\{r_0,M\}^{\nicefrac{N}{2}}$.

\medskip

In light of the above, the proof boils down to showing that with $g(a,b)=\max\{a,b,0\}$:
\begin{itemize}
\item The matricized generalized CP decomposition (eq.~\ref{eq:mat_gen_cp_decomp}) produces matrices with rank at most $Z{\cdot}\frac{M{\cdot}N}{2}$.
\item For an invertible $F$, there exists a weight ($\aaa^{l,j,\gamma}$) setting under which the matricized generalized HT decomposition (eq.~\ref{eq:mat_gen_ht_decomp}) produces a matrix with rank at least $\min\{r_0,M\}^{\nicefrac{N}{2}}$.
\end{itemize}

We begin with the first point, showing that for every $\vv_{1},\ldots,\vv_{\nicefrac{N}{2}}\in\R^M$ and $\uu_{1},\ldots,\uu_{\nicefrac{N}{2}}\in\R^M$:
\be
rank\left(\vv_{1}\odotg\cdots\odotg\vv_{\frac{N}{2}}\right)\odotg\left(\uu_{1}\odotg\cdots\odotg\uu_{\frac{N}{2}}\right)^\top
{\leq}\frac{M{\cdot}N}{2}
\label{eq:max_shallow_mat_rank_bound}
\ee
This would imply that every summand in the matricized generalized CP decomposition (eq.~\ref{eq:mat_gen_cp_decomp}) has rank at most $\frac{M{\cdot}N}{2}$, and the desired result readily follows.
To prove eq.~\ref{eq:max_shallow_mat_rank_bound}, note that each of the vectors $\bar{\vv}:=\vv_{1}\odotg\cdots\odotg\vv_{\frac{N}{2}}$ and $\bar{\uu}:=\uu_{1}\odotg\cdots\odotg\uu_{\frac{N}{2}}$ are of dimension $M^{\nicefrac{N}{2}}$, but have only up to $\frac{M{\cdot}N}{2}$ unique values.
Let $\delta_{\vv},\delta_{\uu}:[M^{N/2}]\to[M^{N/2}]$ be permutations that arrange the entries of $\bar{\vv}$ and $\bar{\uu}$ in descending order.
Permuting the rows of the matrix $\bar{\vv}\odotg\bar{\uu}^\top$ via $\delta_{\vv}$, and the columns via $\delta_{\uu}$, obviously does not change its rank.
On the other hand, we get a $M^{N/2}{\times}M^{N/2}$ matrix with a $\frac{M{\cdot}N}{2}\times\frac{M{\cdot}N}{2}$ block structure, each block being constant (\ie all entries of a block hold the same value).
This implies that the rank of $\bar{\vv}\odotg\bar{\uu}^\top$ is at most $\frac{M{\cdot}N}{2}$, which is what we set out to prove.

Moving on to the matricized generalized HT decomposition (eq.~\ref{eq:mat_gen_ht_decomp}), for an invertible $F$ we define the following weight setting ($\0$ and $\1$ here denote the all-$0$ and all-$1$ vectors, respectively):
\begin{itemize}
\item $\aaa^{0,j,\gamma}=\left\{
	\begin{array}{ll}
		F^{-1}\bar{\e}_{\gamma}  & ,\gamma{\leq}M \\
		\0                                   & ,\gamma>M
	\end{array}
\right.$
, where $\bar{\e}_{\gamma}\in\R^M$ is defined to be the vector holding $0$ in entry $\gamma$ and $1$ in all other entries.
\item $\aaa^{l,j,\gamma}=\left\{
	\begin{array}{ll}
		\1  & ,\gamma=1~,~l\in[L-1] \\
		\0  & ,\gamma>1~,~l\in[L-1]
	\end{array}
\right.$
\item $\aaa^{L,1,y}=\1$
\end{itemize}
Under this setting, the produced matrix $\left[\A\left(h_y^D\right)\right]$ holds $\min\{r_0,M\}$ everywhere besides $\min\{r_0,M\}^{\nicefrac{N}{2}}$ entries on its diagonal, where it holds $\min\{r_0,M\}-1$. 
The rank of this matrix is at least $\min\{r_0,M\}^{\nicefrac{N}{2}}$.
\end{proof}

Nearly all results in the literature that relate to depth efficiency merely show its existence, and claim~\ref{claim:max_depth_eff_exist} is no different in that respect.
From a practical perspective, the implications of such results are slight, as a-priori, it may be that only a small fraction of the functions realizable by a deep network enjoy depth efficiency, and for all the rest shallow networks suffice.
In sec.~\ref{sec:analysis:depth_eff:depth_eff_incidence} we extend claim~\ref{claim:max_depth_eff_exist}, arguing that with ReLU activation and max pooling, depth efficiency becomes more and more prevalent as the number of hidden channels in the deep ConvNet grows.
However, no matter how large the deep ConvNet is, with ReLU activation and max pooling depth efficiency is {\it never complete}~--~there is always positive measure to the set of weight configurations that lead the deep ConvNet to generate score functions efficiently realizable by the shallow ConvNet:

\begin{claim} \label{claim:max_depth_eff_incomplete}
Suppose we randomize the weights of a deep ConvNet with ReLU activation and max pooling by some continuous distribution with non-vanishing continuous probability density function.
Then, assuming covering templates exist, with positive probability, we obtain score functions that can be realized by a shallow ConvNet with ReLU activation and max pooling having only a single hidden channel ($Z=1$).
\end{claim}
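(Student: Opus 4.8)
The plan is to exhibit a non-empty open set $E$ of weight configurations on which the deep ConvNet's grid tensor (with respect to the given covering templates) collapses to the zero tensor, and then to observe that the zero tensor is realized by a shallow ConvNet with a single hidden channel. Since $E$ is open and non-empty it has positive Lebesgue measure, hence positive probability under any continuous distribution with non-vanishing density (sec.~\ref{sec:analysis:prelim}), which yields the claim.

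I take $E$ to be the set of weight configurations in which every entry of every weight tensor $\aaa^{L-1,j,\gamma}$ of the last hidden layer is strictly negative, with all remaining weights left unconstrained. To see that on $E$ we have $\A(h_y^D)=\0$ for every output $y$, I would unwind the generalized HT decomposition (eq.~\ref{eq:gen_ht_decomp}) with $g(a,b)=\max\{a,b,0\}$. Since $g\geq0$, the generalized tensor product $\otimesg$ always outputs an entrywise non-negative tensor; in particular each $\phi^{L-2,2j-1,\alpha}\otimesg\phi^{L-2,2j,\alpha}$ is entrywise non-negative. Hence $\phi^{L-1,j,\gamma}=\sum_{\alpha}a_\alpha^{L-1,j,\gamma}\bigl(\phi^{L-2,2j-1,\alpha}\otimesg\phi^{L-2,2j,\alpha}\bigr)$ is a sum of entrywise non-negative tensors scaled by strictly negative coefficients, and is therefore entrywise $\leq0$. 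Plugging this into the last line of eq.~\ref{eq:gen_ht_decomp}: for each $\alpha$ both $\phi^{L-1,1,\alpha}$ and $\phi^{L-1,2,\alpha}$ are entrywise $\leq0$, so every entry of $\phi^{L-1,1,\alpha}\otimesg\phi^{L-1,2,\alpha}$, being of the form $\max\{a,b,0\}$ with $a,b\leq0$, equals $0$; consequently $\A(h_y^D)=\0$, independently of all other weights (and, incidentally, of the representation functions).

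To finish, two observations. First, $E$ is cut out by strict inequalities on finitely many weight coordinates, so it is a non-empty open set, of positive measure, hence of positive probability under the assumed distribution. Second, the zero tensor is realized by the shallow ConvNet (fig.~\ref{fig:shallow_convnet}) with $Z=1$: by eq.~\ref{eq:gen_cp_decomp}, setting the output weight $a_1^y=0$ gives $\A(h_y^S)=a_1^y\,(F\aaa^{1,1})\otimesg\cdots\otimesg(F\aaa^{1,N})=\0$. Thus on $E$ every deep-network score function has the same grid tensor ($\0$) as this shallow network, and since the templates are covering, the two are effectively identical for classification.

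The proof is essentially obstacle-free; the only point requiring care is \emph{robustness} --- the grid tensor must vanish on a set of positive measure, not merely on a measure-zero set. This rules out the naive device of zeroing out weights or channels, and it is precisely the sign-definiteness of the activation-pooling operator $g(a,b)=\max\{a,b,0\}$ that allows one to impose instead the \emph{open} condition of strict negativity on the last hidden layer's weights.
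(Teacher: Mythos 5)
Your proof is correct, and it takes a genuinely different route from the paper's. The paper exhibits an explicit non-trivial weight setting ($\aaa^{0,j,\gamma}=F^{-1}\1$ for odd $j$ and $\0$ for even $j$, with all-ones weights above) that drives the generalized HT decomposition to a tensor of the form $\left(c\cdot\1+\oo(\epsilon)\right)\otimes\1\otimes\cdots\otimes\1$, and then tracks how small perturbations of $F$ and of every $\aaa^{l,j,\gamma}$ propagate through all $L$ levels to show the produced tensor keeps this form, so that it remains realizable by the generalized CP decomposition with $Z=1$; positive measure comes from the open neighborhood of the constructed point. You instead isolate an explicitly open set~--~strict negativity of all entries of the level-$(L-1)$ weights~--~on which the sign structure of $g(a,b)=\max\{a,b,0\}$ forces the grid tensor to vanish exactly; no perturbation analysis is needed, and the target tensor is trivially realized by the shallow network via $a_1^y=0$. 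Your argument is shorter and more robust, and it transfers verbatim to claims~\ref{claim:max_depth_eff_incomplete_cross} and~\ref{claim:max_depth_eff_incomplete_shared}, since the zero tensor is realizable with $Z=1$ under any of the activation-pooling choices considered. What the paper's construction buys in exchange is a non-degenerate witness: its deep network computes a non-zero (essentially constant) score function, which makes the incompleteness of depth efficiency somewhat more vivid than a witness that is the identically-zero function. Two cosmetic remarks: for $L=2$ the inner factors are $(F\aaa^{0,2j-1,\alpha})\otimesg(F\aaa^{0,2j,\alpha})$ rather than $\phi^{0,\cdot,\cdot}$ terms, but these are likewise entrywise non-negative as outputs of $\otimesg$, so nothing changes; and your construction presupposes $L\geq2$ (i.e.\ $N\geq4$), which is the only regime in which the deep and shallow architectures actually differ.
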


\begin{proof}
Let $\x^{(1)}\ldots\x^{(M)}\in\R^s$ be covering templates, and $f_{\theta_1}{\ldots}f_{\theta_M}$ be representation functions for the deep network under which $F$ is invertible (non-degeneracy implies that such functions exist).
We will show that there exists a linear weight ($\aaa^{l,j,\gamma}$) setting for the deep network with which it generates a grid tensor that is realizable by a shallow network with a single hidden channel ($Z=1$).
Moreover, we show that when the representation parameters ($\theta_d$) and linear weights ($\aaa^{l,j,\gamma}$) are subject to small perturbations, the deep network's grid tensor can still be realized by a shallow network with a single hidden channel.
Since templates are covering grid tensors fully define score functions.
This, along with the fact that open sets in Lebesgue measure spaces always have positive measure (see sec.~\ref{sec:analysis:prelim}), imply that there is positive measure to the set of weight configurations leading the deep network to generate score functions realizable by a shallow network with $Z=1$.
Translating the latter statement from measure theoretical to probabilistic terms readily proves the result we seek after.

\medskip

In light of the above, the proof boils down to the following claim, framed in terms of our generalized tensor decompositions.
Fixing $g(a,b)=\max\{a,b,0\}$, per arbitrary invertible $F$ there exists a weight ($\aaa^{l,j,\gamma}$) setting for the generalized HT decomposition (eq.~\ref{eq:gen_ht_decomp}), such that the produced tensor may be realized by the generalized CP decomposition (eq.~\ref{eq:gen_cp_decomp}) with $Z=1$, and this holds even if the weights $\aaa^{l,j,\gamma}$ and matrix $F$ are subject to small perturbations
\footnote{
Recall that by assumption representation functions are continuous \wrt their parameters ($f_\theta(\x)$ is continuous \wrt $\theta$), and so small perturbations on representation parameters ($\theta_d$) translate into small perturbations on the matrix $F$ (eq.~\ref{eq:F}).
}.

We will now show that the following weight setting meets our requirement ($\0$ and $\1$ here denote the all-$0$ and all-$1$ vectors, respectively):
\begin{itemize}
\item $\aaa^{0,j,\gamma}=\left\{
	\begin{array}{ll}
		F^{-1}\1  & ,j~\text{odd} \\
		\0           & ,j~\text{even}
	\end{array}
\right.$
\item $\aaa^{l,j,\gamma}=\left\{
	\begin{array}{ll}
		\1  & ,j~\text{odd}~~,~l\in[L-1] \\
		\0  & ,j~\text{even}~,~l\in[L-1]
	\end{array}
\right.$
\item $\aaa^{L,1,y}=\1$
\end{itemize}
Let $\E^F$ be an additive noise matrix applied to $F$, and $\{\epsbf^{l,j,\gamma}\}_{l,j,\gamma}$ be additive noise vectors applied to $\{\aaa^{l,j,\gamma}\}_{l,j,\gamma}$.
We use the notation $\oo(\epsilon)$ to refer to vectors that tend to $\0$ as $\E^F\to0$ and $\epsbf^{l,j,\gamma}\to\0$, with the dimension of a vector to be understood by context.
Plugging in the noisy variables into the generalized HT decomposition (eq.~\ref{eq:gen_ht_decomp}), we get for every $j\in[N/2]$ and $\alpha\in[r_0]$:
\beas
&((F+\E^F)(\aaa^{0,2j-1,\alpha}+\epsbf^{0,2j-1,\alpha}))& \\
&\otimesg((F+\E^F)(\aaa^{0,2j,\alpha}+\epsbf^{0,2j,\alpha}))& \\
&=((F+\E^F)(F^{-1}\1+\epsbf^{0,2j-1,\alpha}))& \\
&\otimesg((F+\E^F)(\0+\epsbf^{0,2j,\alpha}))& \\
&=(\1+\oo(\epsilon))\otimesg\oo(\epsilon)&
\eeas
If the applied noise ($\E^F,\epsbf^{l,j,\gamma}$) is small enough this is equal to $(\1+\oo(\epsilon))\otimes\1$ (recall that $\otimes$ stands for the \emph{standard} tensor product), and we in turn get for every $j\in[N/4]$ and $\gamma\in[r_1]$:
\beas
&\phi^{1,2j-1,\gamma}\otimesg\phi^{1,2j,\gamma}& \\
&=\left(\sum_{\alpha=1}^{r_0} a_\alpha^{1,2j-1,\gamma} (\1+\oo(\epsilon))\otimes\1\right)& \\
&\otimesg\left(\sum_{\alpha=1}^{r_0} a_\alpha^{1,2j,\gamma} (\1+\oo(\epsilon))\otimes\1\right)& \\
&=\left(\sum_{\alpha=1}^{r_0}(1+\epsilon_\alpha^{1,2j-1,\gamma}) (\1+\oo(\epsilon))\otimes\1\right)& \\
&\otimesg\left(\sum_{\alpha=1}^{r_0} \epsilon_\alpha^{1,2j,\gamma} (\1+\oo(\epsilon))\otimes\1\right)& \\
&=\left((r_0\1+\oo(\epsilon))\otimes\1\right)\otimesg\left(\oo(\epsilon)\otimes\1\right)
\eeas
With the applied noise ($\E^F,\epsbf^{l,j,\gamma}$) small enough this becomes $(r_0\1+\oo(\epsilon)\otimes\1\otimes\1\otimes\1$.
Continuing in this fashion over the levels of the decomposition, we get that with small enough noise, for every $l\in[L-1]$, $j\in[N/2^{l+1}]$ and $\gamma\in[r_l]$:
$$\phi^{l,2j-1,\gamma}\otimesg\phi^{l,2j,\gamma}=
\left(\prod\nolimits_{l'=0}^{l-1}r_{l'}\cdot\1+\oo(\epsilon)\right)\otimes\left(\otimes_{i=1}^{2^{l+1}-1}\1\right)$$
where $\otimes_{i=1}^{2^{l+1}-1}\1$ stands for the tensor product of the vector $\1$ with itself $2^{l+1}-1$ times.
We readily conclude from this that with small enough noise, the tensor produced by the decomposition may be written as follows:
\be
\A\left(h_y^D\right)=\left(\prod\nolimits_{l=0}^{L-1}r_{l}\cdot\1+\oo(\epsilon)\right)\otimes\left(\otimes_{i=1}^{N-1}\1\right)
\label{eq:max_deep_trivial_tensor}
\ee

To finish our proof, it remains to show that a tensor as in eq.~\ref{eq:max_deep_trivial_tensor} may be realized by the generalized CP decomposition (eq.~\ref{eq:gen_cp_decomp}) with $Z=1$ (and $g(a,b)=\max\{a,b,0\}$).
Indeed, we may assume that the latter's $F$, which we denote by $\tilde{F}$ to distinguish from the matrix in the generalized HT decomposition (eq.~\ref{eq:gen_ht_decomp}), is invertible (non-degeneracy ensures that this may be achieved with proper choice of representation functions for the shallow ConvNet).
Setting the weights of the generalized CP decomposition (eq.~\ref{eq:gen_cp_decomp}) through:
\begin{itemize}
\item $a_1^y=1$
\item $\aaa^{1,i}=\left\{
	\begin{array}{ll}
		\tilde{F}^{-1}\left(\prod\nolimits_{l=0}^{L-1}r_{l}\cdot\1+\oo(\epsilon)\right)  & ,i=1 \\
		\0                                                                                                     & ,i>1
	\end{array}
\right.$
\end{itemize}
leads to $\A\left(h_y^S\right)=\A\left(h_y^D\right)$, as required.
\end{proof}

Comparing claims~\ref{claim:prod_depth_eff_complete} and~\ref{claim:max_depth_eff_incomplete}, we see that depth efficiency is complete under linear activation with product pooling, and incomplete under ReLU activation with max pooling.
We interpret this as indicating that \textbf{\emph{convolutional arithmetic circuits benefit from the expressive power of depth more than convolutional rectifier networks do}}.
This result is rather surprising, especially given the fact that convolutional rectifier networks are much more commonly used in practice.
We attribute the discrepancy primarily to historical reasons, and conjecture that developing effective methods for training convolutional arithmetic circuits, thereby fulfilling their expressive potential, may give rise to a deep learning architecture that is provably superior to convolutional rectifier networks but has so far been overlooked by practitioners.

Loosely speaking, we have shown that the gap in expressive power between the shallow and deep ConvNets is greater with linear activation and product pooling than it is with ReLU activation and max pooling.
One may wonder at this point if it is plausible to deduce from this which architectural setting is more expressive, as a-priori, altering the shallow \vs deep ConvNet comparisons such that one network has linear activation with product pooling and the other has ReLU activation with max pooling, may change the expressive gaps in favor of the latter.
Claims~\ref{claim:prod_depth_eff_complete_cross} and~\ref{claim:max_depth_eff_incomplete_cross} below show that this is not the case.
Specifically, they show that the depth efficiency of the deep ConvNet with linear activation and product pooling remains complete when the shallow ConvNet has ReLU activation and max pooling (claim~\ref{claim:prod_depth_eff_complete_cross}), and on the other hand, the depth efficiency of the deep ConvNet with ReLU activation and max pooling remains incomplete when the shallow ConvNet has linear activation and product pooling (claim~\ref{claim:max_depth_eff_incomplete_cross}).
This affirms our stand regarding the expressive advantage of convolutional arithmetic circuits over convolutional rectifier networks.

\begin{claim} \label{claim:prod_depth_eff_complete_cross}
Let $f_{\theta_1}{\ldots}f_{\theta_M}$ be any set of linearly independent representation functions for a deep ConvNet with linear activation and product pooling.
Suppose we randomize the weights of the network by some continuous distribution.
Then, with probability~1, we obtain score functions that cannot be realized by a shallow ConvNet with ReLU activation and max pooling if the number of hidden channels in the latter ($Z$) is less than $\min\{r_0,M\}^{\nicefrac{N}{2}}\cdot\frac{2}{M{\cdot}N}$.
\end{claim}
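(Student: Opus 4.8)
The plan is to combine two rank estimates that have already appeared in the excerpt: a rank \emph{lower} bound for grid tensors produced by the deep ConvNet with linear activation and product pooling, and a rank \emph{upper} bound for grid tensors produced by the shallow ConvNet with ReLU activation and max pooling. First I would fix templates $\x^{(1)}\ldots\x^{(M)}\in\R^s$ for which the matrix $F$ (eq.~\ref{eq:F}) induced by the deep network's representation functions is non-singular; such templates exist by claim~\ref{claim:F_inv_x4f}, since $f_{\theta_1}{\ldots}f_{\theta_M}$ are assumed linearly independent. On these templates the deep network generates its grid tensor through the standard HT decomposition (eq.~\ref{eq:gen_ht_decomp} with $g(a,b)=a{\cdot}b$), and the argument behind claim~\ref{claim:prod_depth_eff_complete} (the proof of theorem~1 in~\cite{\expresstensors}) shows that, for every setting of the linear weights $\aaa^{l,j,\gamma}$ outside a set of Lebesgue measure zero, the matricization $[\A(h_y^D)]$ has rank at least $\min\{r_0,M\}^{\nicefrac{N}{2}}$.

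For the shallow network I would reuse, verbatim, the structural bound established inside the proof of claim~\ref{claim:max_depth_eff_exist}: for \emph{any} distinct templates and \emph{any} choice of the shallow network's representation functions, the matricized generalized CP decomposition (eq.~\ref{eq:mat_gen_cp_decomp}) with $g(a,b)=\max\{a,b,0\}$ yields a matrix of rank at most $Z{\cdot}\frac{M{\cdot}N}{2}$ --- indeed each of its $Z$ summands is a generalized Kronecker product of two vectors each carrying at most $\frac{M{\cdot}N}{2}$ distinct values, hence (after permuting rows and columns into constant blocks) has rank at most $\frac{M{\cdot}N}{2}$, and sub-additivity of the rank completes the bound. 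Thus $rank[\A(h_y^S)]\leq Z{\cdot}\frac{M{\cdot}N}{2}$ for the shallow ConvNet with ReLU activation and max pooling, evaluated on the same templates fixed above.

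It then remains to combine the two estimates exactly as in the non-crossed case of claim~\ref{claim:max_depth_eff_exist}. Suppose the deep network's weights lie in the probability-$1$ event above and, for contradiction, that its score function $h_y^D$ is realizable by a shallow ConvNet with ReLU activation and max pooling having $Z$ hidden channels. Then the grid tensors on the chosen templates coincide, $\A(h_y^S)=\A(h_y^D)$, which forces $Z{\cdot}\frac{M{\cdot}N}{2}\geq rank[\A(h_y^D)]\geq\min\{r_0,M\}^{\nicefrac{N}{2}}$, i.e.\ $Z\geq\min\{r_0,M\}^{\nicefrac{N}{2}}\cdot\frac{2}{M{\cdot}N}$; contrapositively, whenever $Z<\min\{r_0,M\}^{\nicefrac{N}{2}}\cdot\frac{2}{M{\cdot}N}$ no such realization exists, which is the assertion of the claim. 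There is no genuinely new technical difficulty here; the only point deserving care --- and the closest thing to an obstacle --- is that both rank estimates must be evaluated with respect to the \emph{same} family of templates, which is why the templates are pinned down using the deep network's (linearly independent) representation functions via claim~\ref{claim:F_inv_x4f}, while crucially exploiting the fact that the shallow-network rank bound of claim~\ref{claim:max_depth_eff_exist} is completely insensitive to the templates and to the shallow network's representation functions.
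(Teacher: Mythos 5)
Your proposal is correct and follows essentially the same route as the paper's own proof: fix templates with invertible $F$ via claim~\ref{claim:F_inv_x4f}, invoke the almost-everywhere rank lower bound $\min\{r_0,M\}^{\nicefrac{N}{2}}$ for the matricized HT decomposition with $g(a,b)=a{\cdot}b$ from claim~\ref{claim:prod_depth_eff_complete}, and combine it with the template-independent upper bound $Z\cdot\frac{M{\cdot}N}{2}$ on the matricized generalized CP decomposition with $g(a,b)=\max\{a,b,0\}$ from the proof of claim~\ref{claim:max_depth_eff_exist}. Nothing is missing.
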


\begin{proof}
The proof here follows readily from those of claims~\ref{claim:prod_depth_eff_complete} and~\ref{claim:max_depth_eff_exist}.
Namely, in the proof of claim~\ref{claim:prod_depth_eff_complete} we state that for templates $\x^{(1)}\ldots\x^{(M)}\in\R^s$ chosen such that $F$ is invertible (these exist according to claim~\ref{claim:F_inv_x4f}), a grid tensor produced by the deep ConvNet with linear activation and product pooling, when arranged as a matrix, has rank at least $\min\{r_0,M\}^{\nicefrac{N}{2}}$ for all linear weight ($\aaa^{l,j,\gamma}$) settings but a set of measure zero.
That is to say, a matrix produced by the matricized generalized HT decomposition (eq.~\ref{eq:mat_gen_ht_decomp}) with $g(a,b)=a{\cdot}b$, has rank at least $\min\{r_0,M\}^{\nicefrac{N}{2}}$ for all weight ($\aaa^{l,j,\gamma}$) settings but a set of measure zero.
On the other hand, we have shown in the proof of claim~\ref{claim:max_depth_eff_exist} that a shallow ConvNet with ReLU activation and max pooling generates grid tensors that when arranged as matrices, have rank at most $Z{\cdot}\frac{M{\cdot}N}{2}$.
More specifically, we have shown that the matricized generalized CP decomposition (eq.~\ref{eq:mat_gen_cp_decomp}) with $g(a,b)=\max\{a,b,0\}$ produces matrices with rank at most $Z{\cdot}\frac{M{\cdot}N}{2}$.
This implies that under almost all linear weight ($\aaa^{l,j,\gamma}$) settings for a deep ConvNet with linear activation and product pooling, the generated grid tensor cannot be replicated by a shallow ConvNet with ReLU activation and max pooling if the latter has less than $Z=\min\{r_0,M\}^{\nicefrac{N}{2}}\cdot\frac{2}{M{\cdot}N}$ hidden channels.
\end{proof}

\begin{claim} \label{claim:max_depth_eff_incomplete_cross}
Suppose we randomize the weights of a deep ConvNet with ReLU activation and max pooling by some continuous distribution with non-vanishing continuous probability density function.
Then, assuming covering templates exist, with positive probability, we obtain score functions that can be realized by a shallow ConvNet with linear activation and product pooling having only a single hidden channel ($Z=1$).
\end{claim}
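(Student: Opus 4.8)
The plan is to follow the proof of claim~\ref{claim:max_depth_eff_incomplete} line for line, replacing only its final step. As there, I would fix covering templates $\x^{(1)}\ldots\x^{(M)}\in\R^s$ together with representation functions for the deep network under which the matrix $F$ (eq.~\ref{eq:F}) is invertible (non-degeneracy guarantees these exist), and reduce the statement to a purely algebraic claim about our generalized decompositions: with $g(a,b)=\max\{a,b,0\}$ and $F$ invertible, there is a weight setting for the generalized HT decomposition (eq.~\ref{eq:gen_ht_decomp}) whose produced tensor can be realized by the \emph{standard} CP decomposition (eq.~\ref{eq:gen_cp_decomp} with $g(a,b)=a{\cdot}b$) using a single hidden channel, and this persists under small perturbations of the weights $\aaa^{l,j,\gamma}$ and of $F$ (the latter arising, via continuity of $f_\theta$ in $\theta$, from small perturbations of the deep network's representation parameters). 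The passage from this algebraic statement back to a positive-probability statement about score functions is word-for-word the argument in claim~\ref{claim:max_depth_eff_incomplete}: covering templates make grid tensors determine score functions, and open sets in Lebesgue measure spaces have positive measure, hence positive probability under a continuous density.

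For the algebraic core I would reuse verbatim the weight setting and the perturbation bookkeeping from claim~\ref{claim:max_depth_eff_incomplete} — namely $\aaa^{0,j,\gamma}=F^{-1}\1$ for $j$ odd and $\0$ for $j$ even; $\aaa^{l,j,\gamma}=\1$ for $j$ odd and $\0$ for $j$ even, $l\in[L-1]$; $\aaa^{L,1,y}=\1$ — which shows that, for sufficiently small additive noise on $\aaa^{l,j,\gamma}$ and $F$, the deep network produces the grid tensor of eq.~\ref{eq:max_deep_trivial_tensor}, namely $\A(h_y^D)=\bigl(\prod_{l=0}^{L-1}r_l\cdot\1+\oo(\epsilon)\bigr)\otimes\bigl(\otimes_{i=1}^{N-1}\1\bigr)$. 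The only new observation needed is that this tensor is \emph{rank $1$} — it is the tensor product of $N$ nonzero vectors, the first factor being nonzero since $\prod_{l}r_l\geq1$ and the noise is small — and that the shallow ConvNet with linear activation and product pooling, having a single hidden channel, realizes precisely the tensors $a_1^y\cdot(\tilde{F}\aaa^{1,1})\otimes\cdots\otimes(\tilde{F}\aaa^{1,N})$; choosing (as non-degeneracy permits) representation functions for the shallow network with invertible $\tilde{F}$, this is an arbitrary rank-$1$ tensor. Hence setting $a_1^y=1$, $\aaa^{1,1}=\tilde{F}^{-1}\bigl(\prod_{l=0}^{L-1}r_l\cdot\1+\oo(\epsilon)\bigr)$, and $\aaa^{1,i}=\tilde{F}^{-1}\1$ for $i>1$, yields $\A(h_y^S)=\A(h_y^D)$, as required.

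I do not expect a genuine obstacle beyond what claim~\ref{claim:max_depth_eff_incomplete} already overcame: the deep ReLU/max network is being steered toward the simplest possible tensor (rank $1$), and a single-channel shallow arithmetic circuit expresses exactly the rank-$1$ tensors, so the realization is immediate — in fact \emph{easier} to exhibit than the generalized-CP realization used in claim~\ref{claim:max_depth_eff_incomplete}. The one point warranting a line of care is robustness: I must confirm that the rank-$1$ structure, and thus single-channel realizability by the arithmetic circuit, survives simultaneous small perturbations of the deep network's representation parameters and of its linear weights; this is inherited directly from the $\oo(\epsilon)$ estimates already established in claim~\ref{claim:max_depth_eff_incomplete}, so no new analysis is needed.
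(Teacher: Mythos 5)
Your proposal is correct and matches the paper's proof essentially verbatim: the paper likewise reuses the weight setting and $\oo(\epsilon)$ analysis from claim~\ref{claim:max_depth_eff_incomplete} to arrive at the tensor of eq.~\ref{eq:max_deep_trivial_tensor}, and then realizes it by the standard CP decomposition with $Z=1$ via exactly the assignment $a_1^y=1$, $\aaa^{1,1}=\tilde{F}^{-1}\bigl(\prod_{l=0}^{L-1}r_l\cdot\1+\oo(\epsilon)\bigr)$, $\aaa^{1,i}=\tilde{F}^{-1}\1$ for $i>1$. Your added remark that this works because the target tensor is rank~$1$ and a single-channel arithmetic-circuit shallow network expresses precisely the rank-$1$ tensors is a correct and slightly more explicit framing of the same step.
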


\begin{proof}
The proof here is almost identical to that of claim~\ref{claim:max_depth_eff_incomplete}.
The only difference is where we show that a tensor as in eq.~\ref{eq:max_deep_trivial_tensor} may be realized by the generalized CP decomposition (eq.~\ref{eq:gen_cp_decomp}) with $Z=1$.
In the proof of claim~\ref{claim:max_depth_eff_incomplete} the underlying operation of the decomposition was $g(a,b)=\max\{a,b,0\}$ (corresponding to ReLU activation and max pooling), whereas here it is $g(a,b)=a{\cdot}b$ (corresponding to linear activation and product pooling).
To account for this difference, we again assume that $\tilde{F}$~--~the matrix $F$ of the generalized CP decomposition, is invertible (non-degeneracy ensures that this may be achieved with proper choice of representation functions for the shallow ConvNet), and modify the decomposition's weight setting as follows:
\begin{itemize}
\item $a_1^y=1$
\item $\aaa^{1,i}=\left\{
	\begin{array}{ll}
		\tilde{F}^{-1}\left(\prod\nolimits_{l=0}^{L-1}r_{l}\cdot\1+\oo(\epsilon)\right)  & ,i=1 \\
		\tilde{F}^{-1}\1                                                                                   & ,i>1
	\end{array}
\right.$
\end{itemize}
This leads to $\A\left(h_y^S\right)=\A\left(h_y^D\right)$, as required.
\end{proof}

\subsubsection{Approximation} \label{sec:analysis:depth_eff:approx}

In their current form, the results in our analysis establishing depth efficiency (claims~\ref{claim:prod_depth_eff_complete},~\ref{claim:max_depth_eff_exist},~\ref{claim:prod_depth_eff_complete_cross} and the analogous ones in sec.~\ref{sec:analysis:shared_coeff}) relate to exact realization.
Specifically, they provide a lower bound on the size of a shallow ConvNet required in order for it to \emph{realize exactly} a grid tensor generated by a deep ConvNet.
From a practical perspective, a more interesting question would be the size required by a shallow ConvNet in order to \emph{approximate} the computation of a deep ConvNet.
A-priori, it may be that although the size required for exact realization is exponential, the one required for approximation is only polynomial.
As we briefly discuss below, this is not the case, and in fact all of the lower bounds we have provided apply not only to exact realization, but also to arbitrarily-well approximation.

When proving that a grid tensor generated by a shallow ConvNet beneath a certain size cannot be equal to a grid tensor generated by a deep ConvNet, we always rely on matricization rank.
Namely, we arrange the grid tensors as matrices, and derive constants $R,r\in\N$, $R>r$, such that the matrix corresponding to the deep ConvNet has rank at least $R$, while that corresponding to the shallow ConvNet has rank at most $r$.
While used in our proofs solely to show that the matrices are different, this actually entails information regarding the distance between them. 
Namely, if we denote the singular values of the matrix corresponding to the deep ConvNet by $\sigma_1\geq\sigma_2\geq\ldots\geq0$, the squared Euclidean (Frobenius) distance between the matrices is at least $\sigma_{r+1}^2+\cdots+\sigma_{R}^2$.
Since the matrices are merely rearrangements of the grid tensors, we have a lower bound on the distance between the shallow ConvNet's grid tensor and the target grid tensor generated by the deep ConvNet, so in particular arbitrarily-well approximation is not possible.

\subsubsection{On the Incidence of Depth Efficiency} \label{sec:analysis:depth_eff:depth_eff_incidence}

In claim~\ref{claim:prod_depth_eff_complete} we saw that depth efficiency is complete with linear activation and product pooling.
That is to say, with linear activation and product pooling, besides a negligible set, all weight settings for the deep ConvNet (fig.~\ref{fig:convnet} with size-$2$ pooling windows and $L=\log_{2}N$ hidden layers) lead to score functions that cannot be realized by the shallow ConvNet (fig.~\ref{fig:shallow_convnet}) unless the latter has super-polynomial size.
We have also seen (claims~\ref{claim:max_depth_eff_exist} and~\ref{claim:max_depth_eff_incomplete}) that replacing the activation and pooling operators by ReLU and max respectively, makes depth efficiency incomplete.
There are still weight settings leading the deep ConvNet to generate score functions that require the shallow ConvNet to have super-polynomial size, but these do not occupy the entire space.
In other words, there is now positive measure to the set of deep ConvNet weight configurations leading to score functions efficiently realizable by the shallow ConvNet.
A natural question would then be just how frequent depth efficiency is under ReLU activation and max pooling.
More formally, we may consider a uniform distribution over a compact domain in the deep ConvNet's weight space, and ask the following.
Assuming weights for the deep ConvNet are drawn from this distribution, what is the probability that generated score functions exhibit depth efficiency, \ie require super-polynomial size from the shallow ConvNet?
In the following we address this question, arguing that the probability tends to $1$ as the number of channels in the hidden layers of the deep ConvNet grows.
We do not prove this formally, but nonetheless provide a framework we believe may serve as a basis for establishing formal results concerning the incidence of depth efficiency.
The framework is not limited to ReLU activation and max pooling~--~it may be used under different choices of activation and pooling operators as well.

\medskip

The central tool used in the paper for proving depth efficiency is the rank of grid tensors when these are arranged as matrices.
We establish upper bounds on the rank of matricized grid tensors produced by the shallow ConvNet through the matricized generalized CP decomposition (eq.~\ref{eq:mat_gen_cp_decomp}).
These upper bounds are typically linear in the size of the input ($N$) and the number of hidden channels in the network ($Z$).
The challenge is then to derive a super-polynomial (in $N$) lower bound on the rank of matricized grid tensors produced by the deep ConvNet through the matricized generalized HT decomposition (eq.~\ref{eq:mat_gen_ht_decomp}).
In the case of linear activation and product pooling ($g(a,b)=a{\cdot}b$), the generalized Kronecker product $\odotg$ reduces to the standard Kronecker product $\odot$, and the rank-multiplicative property of the latter ($rank(A{\odot}B)=rank(A){\cdot}rank(B)$) can be used to show (see~\cite{\expresstensors}) that besides in negligible (zero measure) cases, rank grows rapidly through the levels of the matricized generalized HT decomposition (eq.~\ref{eq:mat_gen_ht_decomp}), to the point where the final produced matrix has exponential rank.
This situation does not persist when the activation and pooling operators are replaced by ReLU and max (respectively).
Indeed, in the proof of claim~\ref{claim:max_depth_eff_incomplete} we explicitly presented a non-negligible (positive measure) case where the matricized generalized HT decomposition (eq.~\ref{eq:mat_gen_ht_decomp}) produces a matrix of rank~$1$.
To study the incidence of depth efficiency under ReLU activation and max pooling, we assume the weights ($\aaa^{l,j,\gamma}$) of the matricized generalized HT decomposition (eq.~\ref{eq:mat_gen_ht_decomp}) are drawn independently and uniformly from a bounded interval (\eg~$[-1,1]$), and question the probability of the produced matrix $[\A\left(h_y^D\right)]$ having rank super-polynomial in~$N$.

To study $rank[\A\left(h_y^D\right)]$, we sequentially traverse through the levels $l=1{\ldots}L$ of the matricized generalized HT decomposition (eq.~\ref{eq:mat_gen_ht_decomp}), at each level going over all locations $j\in[N/2^l]$.
When at location $j$ of level $l$, for each $\alpha\in[r_{l-1}]$, we draw the weights $\aaa^{l-1,2j-1,\alpha}$ and $\aaa^{l-1,2j,\alpha}$ (independently of the previously drawn weights), and observe the random variable $R^{l,j,\alpha}$, defined as the rank of the matrix $[\phi^{l-1,2j-1,\alpha}]\odotg[\phi^{l-1,2j,\alpha}]$.
Given the weights drawn while traversing through the previous levels of the decomposition, the random variables $\{R^{l,j,\alpha}\in\N\}_{\alpha\in[r_{l-1}]}$ are independent and identically distributed.
The random variable $R^{l,j}:=\max_{\alpha\in[r_{l-1}]}\{R^{l,j,\alpha}\}$ thus tends to concentrate on higher and higher values as $r_{l-1}$ (number of channels in hidden layer $l-1$ of the deep ConvNet) grows.
When the next level ($l+1$) of the decomposition will be traversed, the weights $\{\aaa^{l,j,\gamma}\}_{\gamma\in[r_l]}$ will be drawn, and the matrices $\{[\phi^{l,j,\gamma}]\}_{\gamma\in[r_l]}$ will be generated.
According to claim~\ref{claim:mat_sum_rank} below, with probability $1$, all of these matrices will have rank equal to at least $R^{l,j}$.
We conclude that, assuming the generalized Kronecker product $\odotg$ has the potential of increasing the rank of its operands, ranks will generally ascend across the levels of the matricized generalized HT decomposition (eq.~\ref{eq:mat_gen_ht_decomp}), with steeper ascends being more and more probable as the number of channels in the hidden layers of the deep ConvNet ($r_0{\ldots}r_{L-1}$) grows.

\begin{figure}
\includegraphics[width=\columnwidth]{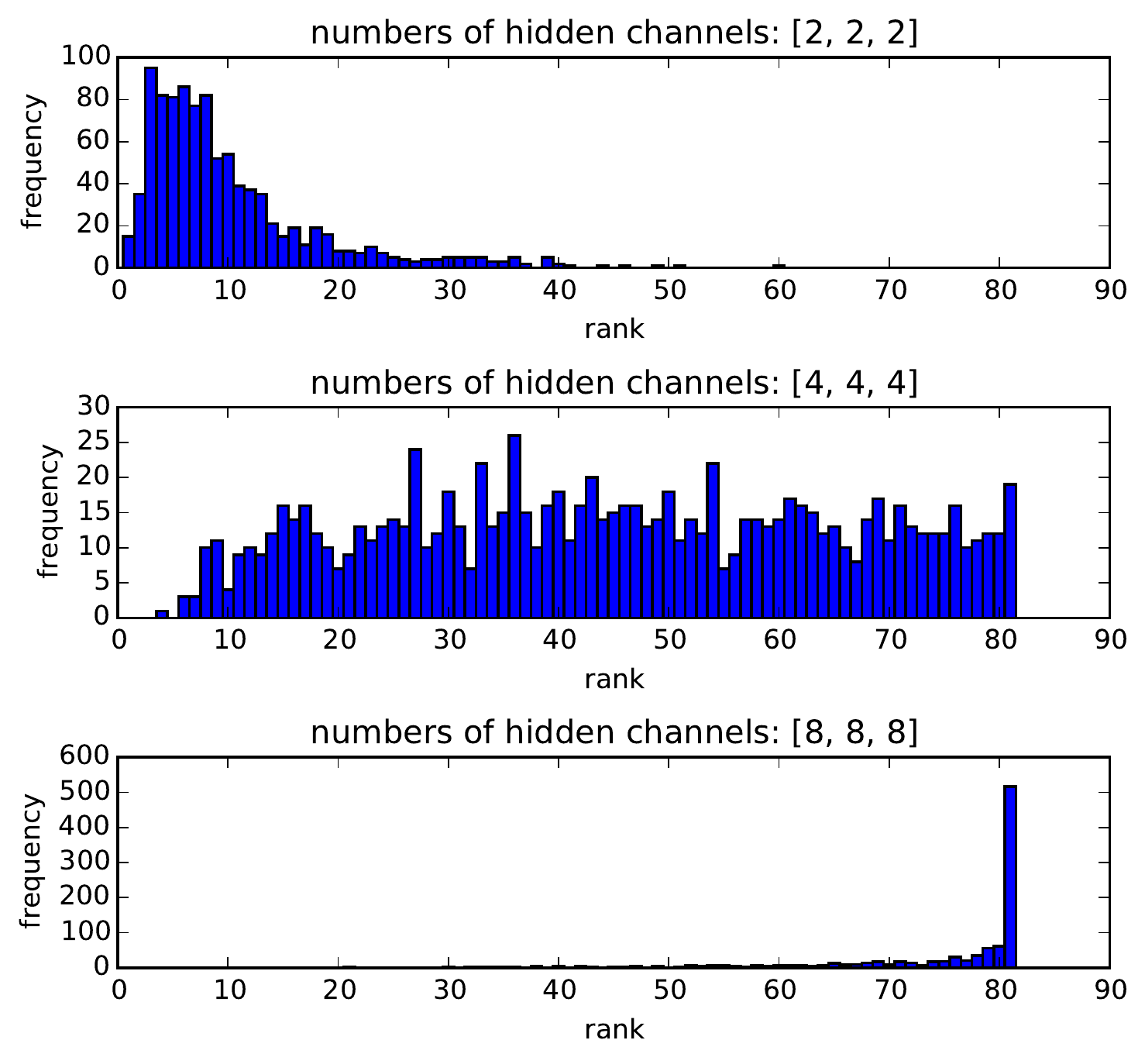}
\caption{
Simulation results demonstrating that under ReLU activation and max pooling, the incidence of depth efficiency increases as the number of channels in the hidden layers of the deep ConvNet ($r_0{\ldots}r_{L-1}$) grows.
The plots show histograms of the ranks produced by the matricized generalized HT decomposition (eq.~\ref{eq:mat_gen_ht_decomp}) with $g(a,b)=\max\{a,b,0\}$.
The number of levels in the decomposition was set to $L=3$ (implying input size of $N=2^L=8$).
The size of the representation matrix $F$ was set through $M=3$, and the matrix itself was fixed to the identity.
Weights ($\aaa^{l,j,\gamma}$) were drawn at random independently and uniformly from the interval $[-1,1]$.
Three channel-width configurations were tried: (i)~$r_0=r_1=r_2=2$ (ii)~$r_0=r_1=r_2=4$ (ii)~$r_0=r_1=r_2=8$.
For each configuration $1000$ random tests were run, creating the histograms presented in the figure (each test produced a single matrix $[\A(h_y^D)]$, accounting for a single entry in a histogram).
As can be seen, the distribution of the produced rank ($rank[\A(h_y^D)]$) tends towards the maximum ($M^{N/2}=81$) as the numbers of hidden channels grow.
}
\label{fig:relu_max_ht_rank_sim}
\end{figure}

The main piece that is missing in order to complete the sketch we have outlined above into a formal proof, is the behavior of rank under the generalized Kronecker product $\odotg$.
This obviously depends on the choice of underlying operator $g$.
In the case of linear activation and product pooling $g(a,b)=a{\cdot}b$, the generalized Kronecker product $\odotg$ reduces to the standard  Kronecker product $\odot$, and ranks always increase multiplicatively, \ie~$rank(A{\odot}B)=rank(A){\cdot}rank(B)$ for any matrices $A$ and $B$.
The fact that there is a simple law governing the behavior of ranks makes this case relatively simple to analyze, and we indeed have a full characterization (claim~\ref{claim:prod_depth_eff_complete}).
In the case of \emph{linear} activation and max pooling the underlying operator is given by $g(a,b)=\max\{a,b\}$, and it is not difficult to see that $\odotg$ does not decrease rank, \ie $rank(A{\odotg}B){\geq}\min\{rank(A),rank(B)\}$ for any matrices $A$ and $B$
\footnote{
To see this, simply note that under the choice $g(a,b)=\max\{a,b\}$ there is either a sub-matrix of $A{\odotg}B$ that is equal to $A$, or one that is equal to $B$.
}.
For ReLU activation and max pooling, corresponding to the choice $g(a,b)=\max\{a,b,0\}$, there is no simple rule depicting the behavior of ranks under $\odotg$, and in fact, for matrices $A$ and $B$ holding negative values, the rank of $rank(A{\odotg}B)$ necessarily drops to zero.
Nonetheless, it seems reasonable to assume that at least in some cases, a non-linear operation such as $\odotg$ does increase rank, and as we have seen, benefiting from these cases is more probable when the hidden layers of the deep ConvNet include many channels.
To this end, we provide in fig.~\ref{fig:relu_max_ht_rank_sim} simulation results for the case of ReLU activation and max pooling ($g(a,b)=\max\{a,b,0\}$), demonstrating that indeed ranks produced by the matricized generalized HT decomposition (eq.~\ref{eq:mat_gen_ht_decomp}) tend to be higher as $r_0{\ldots}r_{L-1}$ grow.
We leave a complete formal analysis of this phenomenon to future work.

\begin{claim} \label{claim:mat_sum_rank}
Let $A_1{\ldots}A_m$ be given matrices of the same size, having ranks $r_1{\ldots}r_m$ respectively.
For every weight vector $\alphabf\in\R^m$ define the matrix $A(\alphabf):=\sum_{i=1}^{m}\alpha_{i}A_i$, and suppose we randomize $\alphabf$ by some continuous distribution.
Then, with probability $1$, we obtain a matrix $A(\alphabf)$ having rank at least $\max_{i\in[m]}r_i$.
\end{claim}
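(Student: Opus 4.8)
The plan is to reduce the statement to the measure-zero property of polynomial zero sets recalled in sec.~\ref{sec:analysis:prelim}. Let $k\in\argmax_{i\in[m]}r_i$, so that $r_k=\max_{i\in[m]}r_i$. Since $rank(A_k)=r_k$, there is a choice of $r_k$ row indices and $r_k$ column indices for which the corresponding $r_k\times r_k$ submatrix of $A_k$ has non-zero determinant; I would fix these indices once and for all.

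Next I would consider, for a general weight vector $\alphabf\in\R^m$, the $r_k\times r_k$ submatrix of $A(\alphabf)=\sum_{i=1}^{m}\alpha_i A_i$ obtained by restricting to the same rows and columns, and let $p(\alphabf)$ denote its determinant. Every entry of this submatrix is a fixed linear combination, with coefficients $\alpha_1,\ldots,\alpha_m$, of the corresponding entries of $A_1,\ldots,A_m$; hence $p$ is a polynomial in the variables $\alpha_1,\ldots,\alpha_m$ (in fact homogeneous of degree $r_k$).

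The crux of the argument is that $p$ is not the zero polynomial. Indeed, evaluating at $\alphabf=\e_k$, the $k$-th standard basis vector of $\R^m$, gives $A(\e_k)=A_k$, so $p(\e_k)$ equals the non-zero determinant fixed above. By the fact recalled in sec.~\ref{sec:analysis:prelim}, the zero set of a non-zero polynomial has Lebesgue measure zero, so $\{\alphabf\in\R^m:p(\alphabf)=0\}$ has measure zero, and a point drawn from any continuous distribution lands outside it with probability $1$. Whenever $p(\alphabf)\neq0$ the chosen submatrix of $A(\alphabf)$ is non-singular, which forces $rank(A(\alphabf))\geq r_k=\max_{i\in[m]}r_i$, completing the proof. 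There is no real obstacle here; the only step requiring a moment's care is verifying that $p$ is genuinely non-zero as a polynomial, which is precisely what the evaluation at $\e_k$ supplies.
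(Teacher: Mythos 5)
Your proposal is correct and follows essentially the same route as the paper's proof: both fix an $r_k\times r_k$ non-singular submatrix of the maximal-rank $A_k$ (the paper assumes WLOG it is the top-left block), observe that the corresponding minor of $A(\alphabf)$ is a polynomial in $\alphabf$ that is non-zero because it does not vanish at $\e_k$, and invoke the measure-zero property of polynomial zero sets. No gaps.
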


\begin{proof}
Our proof relies on concepts and results from Lebesgue measure theory (see sec.~\ref{sec:analysis:prelim} for a brief discussion).
The result to prove is equivalent to stating that there is measure zero to the set of weight vectors $\alphabf$ for which $rank(A(\alphabf))<\max_{i\in[m]}r_i$.

Assume without loss of generality that $\max_{i\in[m]}r_i$ is equal to $r_1$, and that the top-left $r_1{\times}r_1$ block of $A_1$ is non-singular.
For every $\alphabf$ define $p(\alphabf):=\det(A(\alphabf)_{1:r_1,1:r_1})$, \ie $p(\alphabf)$ is the determinant of the $r_1{\times}r_1$ top-left block of the matrix $A(\alphabf)$.
$p(\alphabf)$ is obviously a polynomial in the entries of $\alphabf$, and by assumption $p(\e_1)\neq0$, where $\e_1\in\R^m$ is the vector holding $1$ in its first entry and $0$ elsewhere.
Since a non-zero polynomial vanishes only on a set of zero measure (see~\cite{caron2005zero} for example), the set of weight vectors $\alphabf$ for which $p(\alphabf)=0$ has measure zero.
This implies that the top-left $r_1{\times}r_1$ block of $A(\alphabf)$ is non-singular almost everywhere, and in particular $rank(A(\alphabf)){\geq}r_1=\max_{i\in[m]}r_i$ almost everywhere.
\end{proof}

\subsection{Shared Coefficients for Convolution} \label{sec:analysis:shared_coeff}

To this end, our analysis has focused on the unshared setting, where the coefficients of the $1\times1$ conv filters in our networks (fig.~\ref{fig:convnet}) may vary across spatial locations.
In practice, ConvNets typically enforce sharing, which in our framework implies that the coefficients of the $1\times1$ conv filter in channel~$\gamma$ of hidden layer~$l$, are the same for all locations~$j$.
In this subsection we analyze the shared setting, following a line similar to that of our analysis for the unshared setting given in sec.~\ref{sec:analysis:universality} and~\ref{sec:analysis:depth_eff}.
For brevity, we assume the reader is familiar with the latter, and do not repeat discussions given there.

\medskip

In the shared setting, the shallow ConvNet (fig.~\ref{fig:shallow_convnet}) would have a single weight vector $\aaa^z$ for every hidden channel $z$, as opposed to the unshared setting where it had a weight vector $\aaa^{z,i}$ for every location $i$ in every hidden channel $z$.
Grid tensors produced by the shallow ConvNet in the shared setting are given by what we call the \emph{shared generalized CP decomposition}:
\be
\A\left(h_y^S\right) = \sum_{z=1}^Z a_z^y \cdot \underbrace{(F\aaa^{z}) \otimesg \cdots \otimesg (F\aaa^{z})}_{N~\text{times}}
\label{eq:shared_gen_cp_decomp}
\ee
As for the deep ConvNet (fig.~\ref{fig:convnet} with size-$2$ pooling windows and $L=\log_{2}N$ hidden layers), in the shared setting, instead of having a weight vector $\aaa^{l,j,\gamma}$ for every hidden layer $l$, channel $\gamma$ and location $j$, there is a single weight vector $\aaa^{l,\gamma}$ for all locations of channel $\gamma$ in hidden layer $l$.
Produced grid tensors are then given by the \emph{shared generalized HT decomposition}:
\bea
\phi^{1,\gamma} &=& \sum_{\alpha=1}^{r_0} a_\alpha^{1,\gamma} 
(F\aaa^{0,\alpha}) \otimesg  (F\aaa^{0,\alpha})
\nonumber \\
&\cdots& 
\nonumber\\
\phi^{l,\gamma} &=& \sum_{\alpha=1}^{r_{l-1}} a_\alpha^{l,\gamma} 
\underbrace{\phi^{l-1,\alpha}}_{\text{order $2^{l-1}$}} \otimesg  
\underbrace{\phi^{l-1,\alpha}}_{\text{order $2^{l-1}$}} 
\nonumber\\
&\cdots& 
\nonumber\\
\phi^{L-1,\gamma} &=& \sum_{\alpha=1}^{r_{L-2}} a_\alpha^{L-1,\gamma} 
\underbrace{\phi^{L-2,\alpha}}_{\text{order $\frac{N}{4}$}} \otimesg  
\underbrace{\phi^{L-2,\alpha}}_{\text{order $\frac{N}{4}$}}  
\nonumber\\ 
\A\left(h_y^D\right) &=& \sum_{\alpha=1}^{r_{L-1}} a_\alpha^{L,y} 
\underbrace{\phi^{L-1,\alpha}}_{\text{order $\frac{N}{2}$}} \otimesg  
\underbrace{\phi^{L-1,\alpha}}_{\text{order $\frac{N}{2}$}}  
\label{eq:shared_gen_ht_decomp} 
\eea
We now turn to analyze universality and depth efficiency in the shared setting.

\subsubsection{Universality} \label{sec:analysis:shared_coeff:universality}

In the unshared setting we saw (sec.~\ref{sec:analysis:universality}) that linear activation with product pooling and ReLU activation with max pooling both lead to universality, whereas ReLU activation with average pooling does not.
We will now see that in the shared setting, no matter how the activation and pooling operators are chosen, universality is never met.

A shallow ConvNet with shared weights produces grid tensors through the shared generalized CP decomposition (eq.~\ref{eq:shared_gen_cp_decomp}).
A tensor $\A$ generated by this decomposition is necessarily \emph{symmetric}, \ie for any permutation $\delta:[N]\to[N]$ and indexes $d_1{\ldots}d_N$ it meets: $\A_{d_1{\ldots}d_N}=\A_{\delta(d_1){\ldots}\delta(d_N)}$.
Obviously not all tensors share this property, so indeed a shallow ConvNet with weight sharing is not universal.
A deep ConvNet with shared weights produces grid tensors through the shared generalized HT decomposition (eq.~\ref{eq:shared_gen_ht_decomp}).
For this decomposition, a generated tensor $\A$ is invariant to replacing the first and second halves of its modes, \ie for any indexes $d_1{\ldots}d_N$ it meets: $\A_{d_1,{\ldots},d_N}=\A_{d_{\nicefrac{N}{2}+1},\ldots,d_N,d_1,\ldots,d_{\nicefrac{N}{2}}}$.
Although this property is much less stringent than symmetry, it is still not met by most tensors, and so a deep ConvNet with weight sharing is not universal either.

\subsubsection{Depth Efficiency} \label{sec:analysis:shared_coeff:depth_eff}

Depth efficiency deals with the computational complexity of replicating a deep network's function using a shallow network.
In order for this question to be applicable, we require that the shallow network be a universal machine.
If this is not the case, then it is generally likely that the deep network's function simply lies outside the reach of the shallow network, and we do not obtain a quantitative insight into the true power of depth.
Since our shallow ConvNets are not universal with shared weights (sec.~\ref{sec:analysis:shared_coeff:universality}), we evaluate depth efficiency of deep ConvNets with shared weights against shallow ConvNets with \emph{unshared} weights.
Specifically, we do this for the activation-pooling choices leading shallow ConvNets with unshared weights to be universal: linear activation with product pooling, and ReLU activation with max pooling (see sec.~\ref{sec:analysis:universality}).

For linear activation with product pooling, the following claim, which is essentially a derivative of theorem~1 in~\cite{\expresstensors}, tells us that in the shared setting, as in the unshared setting, depth efficiency holds completely:

\begin{claim}[shared analogy of claim~\ref{claim:prod_depth_eff_complete}] \label{claim:prod_depth_eff_complete_shared}
Let $f_{\theta_1}{\ldots}f_{\theta_M}$ be any set of linearly independent representation functions for a deep ConvNet with linear activation, product pooling and weight sharing.
Suppose we randomize the weights of the network by some continuous distribution.
Then, with probability~1, we obtain score functions that cannot be realized by a shallow ConvNet with linear activation and product pooling (\emph{not} limited by weight sharing), if the number of hidden channels in the latter ($Z$) is less than $\min\{r_0,M\}^{\nicefrac{N}{2}}$.
\end{claim}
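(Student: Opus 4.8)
The plan is to follow the proof of claim~\ref{claim:prod_depth_eff_complete}, reducing the statement to a rank comparison between the matricized grid tensors produced by the two networks. On the shallow side nothing new is needed: a shallow ConvNet with linear activation and product pooling, whether or not its conv weights are shared, produces grid tensors through the standard matricized CP decomposition (eq.~\ref{eq:mat_gen_cp_decomp} with $g(a,b)=a{\cdot}b$), and, as recalled in the proof of claim~\ref{claim:prod_depth_eff_complete} (following theorem~1 of~\cite{\expresstensors}), such a matrix has rank at most $Z$. Since the competing shallow network here is explicitly \emph{not} restricted by weight sharing, this bound applies verbatim. It therefore suffices to show that for fixed linearly independent representation functions and a generic draw of the deep network's linear weights, the matricized grid tensor $\left[\A(h_y^D)\right]$ obtained from the \emph{shared} HT decomposition (eq.~\ref{eq:shared_gen_ht_decomp} with $g(a,b)=a{\cdot}b$) has rank at least $R:=\min\{r_0,M\}^{\nicefrac{N}{2}}$; then matching grid tensors forces $rank\left[\A(h_y^S)\right]=rank\left[\A(h_y^D)\right]$, hence $Z\geq R$, and since a score function determines its grid tensor, the deep network's score functions are unrealizable by any shallow network with $Z<R$.

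The one genuinely new point is that we cannot simply invoke claim~\ref{claim:prod_depth_eff_complete}: the shared weight vectors form a measure-zero subset of the full unshared weight space, so ``almost every unshared setting has high rank'' carries no information about shared settings. We must argue \emph{inside} the shared parametrization, via (i)~exhibiting one shared weight setting with $rank\left[\A(h_y^D)\right]\geq R$, and (ii)~upgrading this to ``almost all'' by polynomial genericity. For~(i) I would reuse the weight setting from the proof of claim~\ref{claim:max_depth_eff_exist}, observing that it is already location-independent and hence a legal shared setting: in shared notation, $\aaa^{0,\alpha}=F^{-1}\bar{\e}_\alpha$ for $\alpha\leq M$ and $\aaa^{0,\alpha}=\0$ otherwise, $\aaa^{l,\gamma}=\1$ for $\gamma=1$ and $\0$ otherwise ($l\in[L-1]$), and $\aaa^{L,y}=\1$. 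Re-running the shared HT decomposition under product pooling ($\otimesg=\otimes$), and writing $m:=\min\{r_0,M\}$, level~$1$ gives $\phi^{1,1}=\sum_{\alpha=1}^{m}\bar{\e}_\alpha\otimes\bar{\e}_\alpha$ and $\phi^{1,\gamma}=\0$ for $\gamma>1$; propagating upward (only the $\gamma=1$ branch survives at each level) yields $\A(h_y^D)=\sum_{\alphabf\in[m]^{\nicefrac{N}{2}}}\otimes_{t=1}^{\nicefrac{N}{2}}\left(\bar{\e}_{\alpha_t}\otimes\bar{\e}_{\alpha_t}\right)$ with $\alphabf=(\alpha_1,\ldots,\alpha_{\nicefrac{N}{2}})$. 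Matricizing via $[\A\otimes\B]=[\A]\odot[\B]$ together with the Kronecker mixed-product identity, $\left[\A(h_y^D)\right]=\sum_{\alphabf}\uu_\alphabf\uu_\alphabf^\top$ with $\uu_\alphabf:=\bar{\e}_{\alpha_1}\odot\cdots\odot\bar{\e}_{\alpha_{\nicefrac{N}{2}}}$. Since the matrix whose rows are $\bar{\e}_1,\ldots,\bar{\e}_M$ equals the all-ones matrix minus the identity, hence is non-singular, the vectors $\bar{\e}_1,\ldots,\bar{\e}_m$ are linearly independent, so their $m^{\nicefrac{N}{2}}$ Kronecker products $\uu_\alphabf$ are linearly independent and $rank\left[\A(h_y^D)\right]=m^{\nicefrac{N}{2}}=R$.

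For step~(ii) I would first invoke claim~\ref{claim:F_inv_x4f} to fix templates $\x^{(1)}\ldots\x^{(M)}$ making $F$ non-singular. With $F$ fixed, every entry of $\left[\A(h_y^D)\right]$ computed from eq.~\ref{eq:shared_gen_ht_decomp} with $g(a,b)=a{\cdot}b$ is a polynomial in the deep network's linear weights, and hence so is every $R\times R$ minor of $\left[\A(h_y^D)\right]$. By step~(i) at least one such minor is non-zero at some weight setting, so that minor is a non-zero polynomial and therefore vanishes only on a Lebesgue-null set (sec.~\ref{sec:analysis:prelim}); off this set $rank\left[\A(h_y^D)\right]\geq R$. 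Thus with probability~$1$ over a continuous draw of the linear weights we get $rank\left[\A(h_y^D)\right]\geq\min\{r_0,M\}^{\nicefrac{N}{2}}$, which combined with the shallow bound $rank\left[\A(h_y^S)\right]\leq Z$ completes the argument. I expect the measure-zero nature of the shared weight space to be the only conceptual obstacle — the polynomial-minor trick resolves it cheaply once one notices that the rank-maximizing construction can be taken location-independent, so the only real computation is the re-derivation above of its matricization rank under product rather than max pooling.
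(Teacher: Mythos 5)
Your proof is correct, and it reaches the conclusion by a genuinely more self-contained route than the paper. The paper's own proof is a two-line reduction: it observes that the only missing ingredient relative to claim~\ref{claim:prod_depth_eff_complete} is an almost-everywhere rank lower bound of $\min\{r_0,M\}^{\nicefrac{N}{2}}$ for the \emph{shared} generalized HT decomposition with $g(a,b)=a{\cdot}b$, and then outsources that bound entirely to the proof of theorem~1 in~\cite{\expresstensors}. You instead prove that bound from scratch, and in doing so you make explicit the one point the paper's brevity obscures: the shared weight vectors sit on a measure-zero slice of the unshared weight space, so claim~\ref{claim:prod_depth_eff_complete} cannot be invoked directly and a separate genericity argument inside the shared parametrization is genuinely required. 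Your two-step argument is sound: the location-independent weight setting borrowed from claim~\ref{claim:max_depth_eff_exist} does yield $\A(h_y^D)=\sum_{\alphabf\in[m]^{\nicefrac{N}{2}}}\otimes_{t=1}^{\nicefrac{N}{2}}(\bar{\e}_{\alpha_t}\otimes\bar{\e}_{\alpha_t})$ under product pooling, whose matricization $\sum_{\alphabf}\uu_\alphabf\uu_\alphabf^\top$ has rank exactly $m^{\nicefrac{N}{2}}$ because the $\bar{\e}_\alpha$ (rows of the all-ones matrix minus the identity) are linearly independent; and the polynomial-minor argument is precisely the technique the paper itself uses in claim~\ref{claim:mat_sum_rank}, so the upgrade to probability~$1$ is unimpeachable. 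What the paper's approach buys is brevity and consistency with its treatment of claim~\ref{claim:prod_depth_eff_complete}; what yours buys is a proof that can be checked without opening~\cite{\expresstensors}, at the cost of re-deriving a construction the reference already covers. One minor remark: your explicit construction shows the rank bound is attained, whereas strictly one only needs some shared setting witnessing a non-vanishing $R\times R$ minor; but having the exact-rank witness costs nothing extra here.
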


\begin{proof}
The proof here is almost identical to that of claim~\ref{claim:prod_depth_eff_complete}.
The only difference is that in the latter, we used the fact that the generalized HT decomposition (eq.~\ref{eq:gen_ht_decomp}), when equipped with $g(a,b)=a{\cdot}b$, almost always produces tensors whose matrix arrangements have rank at least $\min\{r_0,M\}^{\nicefrac{N}{2}}$, whereas here, we require an analogous result for the \emph{shared} generalized HT decomposition (eq.~\ref{eq:shared_gen_ht_decomp}).
Such result is provided by the proof of theorem~1 in~\cite{\expresstensors}.
\end{proof}

Heading on to ReLU activation and max pooling, we will show that here too, the situation in the shared setting is the same as in the unshared setting.
Specifically, depth efficiency holds, but not completely.
We prove this via two claims, analogous to claims~\ref{claim:max_depth_eff_exist} and~\ref{claim:max_depth_eff_incomplete} in sec.~\ref{sec:analysis:depth_eff}:

\begin{claim}[shared analogy of claim~\ref{claim:max_depth_eff_exist}] \label{claim:max_depth_eff_exist_shared}
There exist weight settings for a deep ConvNet with ReLU activation, max pooling and weight sharing, giving rise to score functions that cannot be realized by a shallow ConvNet with ReLU activation and max pooling (\emph{not} limited by weight sharing), if the number of hidden channels in the latter ($Z$) is less than $\min\{r_0,M\}^{\nicefrac{N}{2}}\cdot\frac{2}{M{\cdot}N}$.
\end{claim}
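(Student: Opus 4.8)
The plan is to reuse, essentially verbatim, the two ingredients in the proof of claim~\ref{claim:max_depth_eff_exist}, and to check that weight sharing in the deep network does not spoil either of them. The first ingredient is the upper bound on the matricization rank of grid tensors produced by a shallow ConvNet with ReLU activation and max pooling. That bound, $rank[\A(h_y^S)]\le Z\cdot\frac{M\cdot N}{2}$, was derived from the matricized generalized CP decomposition (eq.~\ref{eq:mat_gen_cp_decomp}) with $g(a,b)=\max\{a,b,0\}$; since the shallow network in the present claim is explicitly \emph{not} limited by weight sharing, this bound carries over with no change.

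The second ingredient is an explicit choice of weights for the deep ConvNet that yields a matricized grid tensor of rank at least $\min\{r_0,M\}^{\nicefrac{N}{2}}$. First I would pick representation functions $f_{\theta_1}{\ldots}f_{\theta_M}$ for which $F$ (eq.~\ref{eq:F}) is invertible~--~non-degeneracy guarantees these exist~--~and then transplant the weight setting from claim~\ref{claim:max_depth_eff_exist}: $\aaa^{0,\alpha}=F^{-1}\bar{\e}_{\alpha}$ for $\alpha\le M$ and $\aaa^{0,\alpha}=\0$ for $\alpha>M$; $\aaa^{l,\gamma}=\1$ for $\gamma=1$ and $\aaa^{l,\gamma}=\0$ for $\gamma>1$ ($l\in[L-1]$); and $\aaa^{L,y}=\1$. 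The point to observe is that this setting is location-independent, and that the construction in claim~\ref{claim:max_depth_eff_exist} already assigned the \emph{same} weight vector to the paired locations $2j-1$ and $2j$ at every level. Consequently the shared generalized HT decomposition (eq.~\ref{eq:shared_gen_ht_decomp}), in which both tensor-product slots at each level intrinsically carry the same weight, performs exactly the same recursion that the unshared generalized HT decomposition performed under that setting. Hence the deep ConvNet with weight sharing produces precisely the grid tensor analyzed in claim~\ref{claim:max_depth_eff_exist}: its matricization holds $\min\{r_0,M\}$ in every entry except for $\min\{r_0,M\}^{\nicefrac{N}{2}}$ diagonal entries where it holds $\min\{r_0,M\}-1$, and a short argument (exhibiting a nonsingular $\min\{r_0,M\}^{\nicefrac{N}{2}}$-by-$\min\{r_0,M\}^{\nicefrac{N}{2}}$ principal submatrix of the form $cJ-I$) shows its rank is at least $\min\{r_0,M\}^{\nicefrac{N}{2}}$.

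To finish, requiring $\A(h_y^S)=\A(h_y^D)$, hence equality of matricization ranks, forces $Z\cdot\frac{M\cdot N}{2}\ge\min\{r_0,M\}^{\nicefrac{N}{2}}$, i.e. $Z\ge\min\{r_0,M\}^{\nicefrac{N}{2}}\cdot\frac{2}{M\cdot N}$, which is the claimed bound. I expect the only real content~--~and therefore the main, though modest, obstacle~--~to be the verification that eq.~\ref{eq:shared_gen_ht_decomp} collapses onto the same recursion as in claim~\ref{claim:max_depth_eff_exist} under this location-independent, slot-symmetric weight setting; once that is in place, both the rank lower bound for the deep ConvNet and the rank upper bound for the shallow ConvNet are inherited directly from the unshared analysis.
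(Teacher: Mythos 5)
Your proposal is correct and follows essentially the same route as the paper: the paper's proof likewise observes that the explicit weight setting from claim~\ref{claim:max_depth_eff_exist} is location-invariant, so it applies verbatim to the weight-shared deep ConvNet, while the shallow network's rank upper bound is unaffected since that network remains unshared. The extra details you supply (the collapse of eq.~\ref{eq:shared_gen_ht_decomp} onto the unshared recursion, and the $cJ-I$ argument for the rank lower bound) are consistent with, and slightly more explicit than, what the paper writes.
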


\begin{proof}
In the proof of claim~\ref{claim:max_depth_eff_exist} we have shown, for arbitrary distinct templates $\x^{(1)}\ldots\x^{(M)}\in\R^s$, an explicit weight setting for the deep ConvNet with ReLU activation and max pooling, leading the latter to produce a grid tensor that cannot be realized by a shallow ConvNet with ReLU activation and max pooling, if that has less than $\min\{r_0,M\}^{\nicefrac{N}{2}}\cdot\frac{2}{M{\cdot}N}$ hidden channels.
Since the given weight setting was location invariant, \ie the assignment of $\aaa^{l,j,\gamma}$ did not depend on $j$, it applies as is to a deep ConvNet with weight sharing, and the desired result readily follows.
\end{proof}

\begin{claim}[shared analogy of claim~\ref{claim:max_depth_eff_incomplete}] \label{claim:max_depth_eff_incomplete_shared}
Suppose we randomize the weights of a deep ConvNet with ReLU activation, max pooling and weight sharing by some continuous distribution with non-vanishing continuous probability density function.
Then, assuming covering templates exist, with positive probability, we obtain score functions that can be realized by a shallow ConvNet with ReLU activation and max pooling having only a single hidden channel ($Z=1$).
\end{claim}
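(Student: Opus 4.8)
The plan is to follow the same two-step strategy used for claim~\ref{claim:max_depth_eff_incomplete}: first exhibit an explicit weight configuration for the deep ConvNet under which the produced grid tensor is realizable by a shallow ConvNet with $Z=1$, and then show that this realizability survives small perturbations of the weights (and of the representation matrix $F$), so that the favorable configurations form an open and nonempty (hence positive-measure, hence positive-probability) subset of weight space. The one place where the unshared argument does not transfer is the construction itself. In claim~\ref{claim:max_depth_eff_incomplete} the weight vector was set differently at odd and even spatial locations, which is illegal under weight sharing; moreover, in the shared generalized HT decomposition (eq.~\ref{eq:shared_gen_ht_decomp}) the two arguments of every bottom-level generalized tensor product are forced to be the \emph{same} vector $F\aaa^{0,\alpha}$, and since a small perturbation of a constant vector is no longer constant, the ``almost-constant grid tensor up to $\oo(\epsilon)$'' structure that powered the unshared proof is destroyed. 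The remedy is a construction that is robust for a different reason: drive the output of the bottom convolution into the strictly negative orthant, so that ReLU annihilates it and the entire deep network collapses to the zero tensor.

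Concretely, I would fix covering templates $\x^{(1)}\ldots\x^{(M)}$ and representation functions $f_{\theta_1}\ldots f_{\theta_M}$ for the deep network yielding an invertible $F$ (non-degeneracy guarantees such functions exist, exactly as in claim~\ref{claim:max_depth_eff_incomplete}), then set $\aaa^{0,\gamma}=-F^{-1}\1$ for every $\gamma\in[r_0]$ and leave all remaining weights $\{\aaa^{l,\gamma}\}_{l\in[L-1]}$ and $\aaa^{L,y}$ arbitrary. With $g(a,b)=\max\{a,b,0\}$ we then have $F\aaa^{0,\gamma}=-\1$, so each bottom-level term $(F\aaa^{0,\alpha})\otimesg(F\aaa^{0,\alpha})$ has every entry equal to $\max\{-1,-1,0\}=0$; hence $\phi^{1,\gamma}=\0$, and a trivial induction up the levels of eq.~\ref{eq:shared_gen_ht_decomp} (using $g(0,0)=0$) gives $\phi^{l,\gamma}=\0$ for all $l$ and $\A(h_y^D)=\0$, independently of the values of the remaining weights. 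The zero tensor is realized by the generalized CP decomposition (eq.~\ref{eq:gen_cp_decomp}) with $Z=1$ (for instance by taking $a_1^y=0$), so $\A(h_y^S)=\A(h_y^D)$ for a shallow ConvNet with a single hidden channel.

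For the robustness step, if $F$ is replaced by $F+\E^F$ and $\aaa^{0,\gamma}$ by $\aaa^{0,\gamma}+\epsbf^{0,\gamma}$, then $(F+\E^F)(\aaa^{0,\gamma}+\epsbf^{0,\gamma})=-\1+\oo(\epsilon)$, which for small enough perturbations still has all entries strictly negative; consequently every bottom-level term of eq.~\ref{eq:shared_gen_ht_decomp} remains identically zero, $\phi^{1,\gamma}$ remains $\0$, and the same induction forces $\A(h_y^D)=\0$ no matter how the other (perturbed) weights are set. The set of deep-network weight configurations for which $F\aaa^{0,\gamma}$ lies in the open negative orthant for every $\gamma$ is therefore the preimage of an open set under a continuous map (continuity of $f_\theta$ in $\theta$ makes $F$ continuous in the representation parameters, see eq.~\ref{eq:F}), hence open; it is nonempty by the construction above, hence of positive Lebesgue measure, hence (the density being continuous and non-vanishing) of positive probability. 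On this set $\A(h_y^D)=\0$ is realized by a shallow ConvNet with $Z=1$, and since the templates are covering, equality of grid tensors means the corresponding score functions are effectively identical, which is exactly the assertion of the claim. I expect the only real difficulty to be the conceptual step flagged above: recognizing that the unshared construction cannot be symmetrized, and that one should instead exploit ReLU's ability to robustly zero out an entire subtree of the computation; once that is seen, the shared case is in fact easier than the unshared one, since no $\oo(\epsilon)$ bookkeeping through the levels of the decomposition is required.
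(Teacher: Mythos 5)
Your proof is correct, but it takes a genuinely different route from the paper's. The paper keeps the deep network's output nontrivial: it sets $\aaa^{0,\gamma}=F^{-1}\vv$ with $\vv=[1,2,\ldots,M]^\top$ strictly increasing, introduces the notion of a \emph{basic} tensor ($\uu\otimesg\cdots\otimesg\uu$ with non-decreasing $\uu$), and propagates basicness up the levels of eq.~\ref{eq:shared_gen_ht_decomp} using two closure properties (the generalized tensor product of a basic tensor with itself is basic; non-negative linear combinations of basic tensors are basic). Robustness there rests on the strict monotonicity of $\vv$ and the strict positivity of the combination weights being open conditions, and the resulting basic tensor is realized by eq.~\ref{eq:gen_cp_decomp} with $Z=1$. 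You instead exploit the ReLU component of $g(a,b)=\max\{a,b,0\}$ to annihilate the bottom layer: with $F\aaa^{0,\gamma}$ in the open negative orthant every bottom-level product is identically zero, the whole decomposition collapses to $\A(h_y^D)=\0$, and the zero tensor is trivially shallow-realizable. Your argument is shorter, localizes all the perturbation analysis to the first layer (no $\oo(\epsilon)$ or structural bookkeeping through the levels), and leaves every weight above level $0$ completely unconstrained, so the favorable set is manifestly open and nonempty. What the paper's construction buys in exchange is a non-degenerate witness: the deep network computes a genuinely nonzero function that is nevertheless realizable by a single-channel shallow network, which is a more persuasive illustration of incomplete depth efficiency than exhibiting positive probability of the identically-zero score function. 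Both arguments satisfy the claim as literally stated, and your diagnosis of why the unshared construction of claim~\ref{claim:max_depth_eff_incomplete} cannot simply be symmetrized (constant vectors lose their structure under perturbation) is exactly the obstacle the paper's $\vv=[1,2,\ldots,M]^\top$ choice is designed to circumvent.
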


\begin{proof}
The proof is similar in spirit to that of claim~\ref{claim:max_depth_eff_incomplete}, which dealt with incompleteness of depth efficiency under ReLU activation and max pooling in the unshared setting.
Our focus here is on the shared setting, or more specifically, on the case where the deep ConvNet is limited by weight sharing while the shallow ConvNet is not.
Accordingly, we would like to show the following.
Fixing $g(a,b)=\max\{a,b,0\}$, per arbitrary invertible $F$ there exists a weight ($\aaa^{l,\gamma}$) setting for the shared generalized HT decomposition (eq.~\ref{eq:shared_gen_ht_decomp}), such that the produced tensor may be realized by the generalized CP decomposition (eq.~\ref{eq:gen_cp_decomp}) with $Z=1$, and this holds even if the weights $\aaa^{l,\gamma}$ and matrix $F$ are subject to small perturbations.

Before heading on to prove that a weight setting as above exists, we introduce a new definition that will greatly simplify our proof.
We refer to a tensor $\A$ of order $P$ and dimension $M$ in each mode as \emph{basic}, if there exists a vector $\uu\in\R^M$ with non-decreasing entries ($u_1{\leq}\ldots{\leq}u_M$), such that $\A=\uu\otimesg\cdots\otimesg\uu$ (\ie $\A$ is equal to the generalized tensor product of $\uu$ with itself $P$ times, with underlying operation $g(a,b)=\max\{a,b,0\}$).
A basic tensor can obviously be realized by the generalized CP decomposition (eq.~\ref{eq:gen_cp_decomp}) with $Z=1$ (given that non-degeneracy is used to ensure the latter's representation matrix is non-singular), and so it suffices to find a weight ($\aaa^{l,\gamma}$) setting for the shared generalized HT decomposition (eq.~\ref{eq:shared_gen_ht_decomp}) that gives rise to a basic tensor, and in addition, ensures that small perturbations on the weights $\aaa^{l,\gamma}$ and matrix $F$ still yield basic tensors.
Two trivial facts that relate to basic tensors and will be used in our proof are: (i) the generalized tensor product of a basic tensor with itself is basic, and (ii) a linear combination of basic tensors with non-negative weights is basic.

Turning to the main part of the proof, we now show that the following weight setting meets our requirement:
\begin{itemize}
\item $\aaa^{0,\gamma}=F^{-1}\vv$
\item $\aaa^{l,\gamma}=\1$,~~$l\in[L-1]$
\item $\aaa^{L,y}=\1$
\end{itemize}
$\vv$ here stands for the vector $[1,2,\ldots,M]^\top\in\R^M$, and $\1$ is an all-$1$ vector with dimension to be understood by context.
Let $\E^F$ be an additive noise matrix applied to $F$, and $\{\epsbf^{l,\gamma}\}_{l,\gamma}$ be additive noise vectors applied to $\{\aaa^{l,\gamma}\}_{l,\gamma}$.
We would like to prove that under the weight setting above, when applied noise ($\E^F,\epsbf^{l,\gamma}$) is small enough, the grid tensor produced by the shared generalized HT decomposition (eq.~\ref{eq:shared_gen_ht_decomp}) is basic.

For convenience, we adopt the notation $\oo(\epsilon)$ as referring to vectors that tend to $\0$ as $\E^F\to0$ and $\epsbf^{l,\gamma}\to\0$, with the dimension of a vector to be understood by context.
Plugging in the noisy variables into the shared generalized HT decomposition (eq.~\ref{eq:shared_gen_ht_decomp}), we get for every $\alpha\in[r_0]$:
\beas
&((F+\E^F)(\aaa^{0,\alpha}+\epsbf^{0,\alpha}))\otimesg((F+\E^F)(\aaa^{0,\alpha}+\epsbf^{0,\alpha}))& \\
&=((F+\E^F)(F^{-1}\vv+\epsbf^{0,\alpha}))\otimesg((F+\E^F)(F^{-1}\vv+\epsbf^{0,\alpha}))& \\
&=\tilde{\vv}^\alpha\otimesg\tilde{\vv}^\alpha
\eeas
where $\tilde{\vv}^\alpha=\vv+\oo(\epsilon)$.
If the applied noise ($\E^F,\epsbf^{l,\gamma}$) is small enough the entries of $\tilde{\vv}^\alpha$ are non-decreasing and $\tilde{\vv}^\alpha\otimesg\tilde{\vv}^\alpha$ is a basic tensor (matrix).
Moving to the next level of the decomposition, we have for every $\gamma\in[r_1]$:
$$\phi^{1,\gamma}=\sum_{\alpha=1}^{r_0}(a_\alpha^{1,\gamma}+\epsilon_\alpha^{1,\gamma})\cdot
\tilde{\vv}^\alpha\otimesg\tilde{\vv}^\alpha$$
When applied noise ($\E^F,\epsbf^{l,\gamma}$) is small enough the weights of this linear combination are non-negative, and together with the tensors (matrices) $\tilde{\vv}^\alpha\otimesg\tilde{\vv}^\alpha$ being basic, this leads $\phi^{1,\gamma}$ to be basic as well.
Continuing in this fashion over the levels of the decomposition, we get that with small enough noise, for every $l\in[L-1]$ and $\gamma\in[r_l]$, $\phi^{l,\gamma}$ is a basic tensor.
A final step in this direction shows that under small noise, the produced grid tensor $\A\left(h_y^D\right)$ is basic as well.
This is what we set out to prove.
\end{proof}

\medskip

To recapitulate this subsection, we have shown that introducing weight sharing into the $1\times1$ conv operators of our networks, thereby limiting the general locally-connected linear mappings to be standard convolutions, disrupts universality, but leaves depth efficiency intact~--~it remains to hold completely under linear activation with product pooling, and incompletely under ReLU activation with max pooling.

\section{Discussion} \label{sec:discussion}

The contribution of this paper is twofold.
First, we introduce a construction in the form of \emph{generalized tensor decompositions}, that enables transforming convolutional arithmetic circuits into \emph{convolutional rectifier networks} (ConvNets with ReLU activation and max or average pooling).
This opens the door to various mathematical tools from the world of arithmetic circuits, now available for analyzing convolutional rectifier networks.
As a second contribution, we make use of such tools to prove new results on the expressive properties that drive this important class of networks.

Our analysis shows that convolutional rectifier networks are universal with max pooling, but not with average pooling.
This implies that if non-linearity originates solely from ReLU activation, increasing network size alone is not sufficient for expressing arbitrary functions.
More interestingly, we analyze the behavior of convolutional rectifier networks in terms of \emph{depth efficiency}, \ie of cases where a function generated by a deep network of polynomial size requires shallow networks to have super-polynomial size.
It is known that convolutional arithmetic circuits exhibit \emph{complete depth efficiency}, meaning that besides a negligible (zero measure) set, all functions generated by deep networks of this type are depth efficient.
We show that this is not the case with convolutional rectifier networks, for which depth efficiency exists, but is weaker in the sense that it is not complete (there is positive measure to the set of functions generated by a deep network that may be efficiently realized by shallow networks).

Depth efficiency is believed to be the key factor behind the success of deep learning.
Our analysis indicates that from this perspective, the widely used convolutional rectifier networks are inferior to convolutional arithmetic circuits.
This leads us to believe that convolutional arithmetic circuits bear the potential to improve the performance of deep learning beyond what is witnessed today.
Of course, a practical machine learning model is measured not only by its expressive power, but also by our ability to train it.
Over the years, massive amounts of research have been devoted to training convolutional rectifier networks.
Convolutional arithmetic circuits on the other hand received far less attention, although they have been successfully trained in recent works on the SimNet architecture~(\cite{\simnets,\deepsimnets}), demonstrating how the enhanced expressive power can lead to state of the art performance in computationally limited settings.

We believe that developing effective methods for training convolutional arithmetic circuits, thereby fulfilling their expressive potential, may give rise to a deep learning architecture that is provably superior to convolutional rectifier networks but has so far been overlooked by practitioners.

\ifdefined\CAMREADY
	\subsection*{Acknowledgments}
	This work is partly funded by Intel grant ICRI-CI no. 9-2012-6133 and by ISF Center grant 1790/12.  
	Nadav Cohen is supported by a Google Fellowship in Machine Learning.
\fi

\subsection*{References}
\small{
\bibliographystyle{plainnat}
\bibliography{refs.bib}
}

\clearpage
\appendix

\section{Existence of Covering Templates} \label{app:cover_temp}

In this paper we analyze the expressiveness of networks, \ie the functions they can realize, through the notion of \emph{grid tensors}.
Recall from sec.~\ref{sec:nets2tens} that given \emph{templates} $\x^{(1)}\ldots\x^{(M)}\in\R^s$, the grid tensor of a score function $h_y:(\R^s)^N\to\R$ realized by some network, is defined to be a tensor of order $N$ and dimension $M$ in each mode, denoted $\A(h_y)$, and given by eq.~\ref{eq:grid_tensor}.
In particular, it is a tensor holding the values of $h_y$ on all instances $X=(\x_1,\ldots,\x_N)\in(\R^s)^N$ whose \emph{patches} $\x_i$ are taken from the set of templates $\{\x^{(1)}\ldots\x^{(M)}\}$ (recurrence allowed).
Some of the claims in our analysis (sec.~\ref{sec:analysis}) assumed that there exist templates for which grid tensors fully define score functions.
That is to say, there exist templates such that score function values outside the exponentially large grid $\{X_{d_1{\ldots}d_N}:=(\x^{(d_1)},\ldots,\x^{(d_N)}):d_1{\ldots}d_N\in[M]\}$ are irrelevant for classification.
Templates meeting this property were referred to as \emph{covering} (see sec.~\ref{sec:analysis:temp_rep_funcs}).
In this appendix we address the existence of covering templates.

If we allow $M$ to grow arbitrarily large then obviously covering templates can be found.
However, since in our construction $M$ is tied to the number of channels in the first (representation) layer of a network (see fig.~\ref{fig:convnet}), such a trivial observation does not suffice, and in fact we would like to show that covering templates exist for values of $M$ that correspond to practical network architectures, \ie~$M\in\Omega(100)$.
For such an argument to hold, assumptions must be made on the distribution of input data.
Given that ConvNets are used primarily for processing natural images, we assume here that data is governed by their statistics.
Specifically, we assume that an instance $X=(\x_1,\ldots,\x_N)\in(\R^s)^N$ corresponds to a natural image, represented through $N$ image patches around its pixels: $\x_1{\ldots}\x_N\in\R^s$.

If the dimension of image patches is small then it seems reasonable to believe that relatively few templates can indeed cover the possible appearances of a patch.
For example, in the extreme case where each patch is simply a gray-scale pixel ($s=1$), having $M=256$ templates may provide the standard $8$-bit resolution, leading grid tensors to fully define score functions by accounting for all possible images.
However, since in our construction input patches correspond to the receptive field in the first layer of a ConvNet (see fig.~\ref{fig:convnet}), we would like to establish an argument for image patch sizes that more closely correlate to typical receptive fields, \eg~$5{\times}5$.
For this we rely on various studies (\eg~\cite{Zoran:2012wu}) characterizing the statistics of natural images, which have shown that for large ensembles of images, randomly cropped patches of size up to $16{\times}16$ may be relatively well captured by Gaussian Mixture Models with as few as $64$ components.
This complies with the common belief that there is a moderate number of appearances taken by the vast majority of local image patches (edges, Gabor filters \etc).
That is to say, it complies with our assumption that covering templates exist with a moderate value of $M$.
We refer the reader to~\cite{\expresstensors} for a more formal argument on this line.

\end{document}